\documentclass[11pt]{article}
\usepackage[top=1in, bottom=1in, left=1in, right=1in]{geometry}

\usepackage{zhiyuz}

\title{\textbf{Provable Hierarchical Imitation Learning via EM}}
\author{
  Zhiyu Zhang \\
  Boston University\\
  \texttt{zhiyuz@bu.edu}\\
  \and
  Ioannis Ch. Paschalidis\\
  Boston University\\
  \texttt{yannisp@bu.edu}\\
}
\date{\vspace{-5ex}}

\begin{document}
\maketitle

\begin{abstract}
Due to recent empirical successes, the options framework for hierarchical reinforcement learning is gaining increasing popularity. Rather than learning from rewards, we consider learning an options-type hierarchical policy from expert demonstrations. Such a problem is referred to as \emph{hierarchical imitation learning}. Converting this problem to parameter inference in a latent variable model, we develop convergence guarantees for the EM approach proposed by \citep{daniel2016probabilistic}. The population level algorithm is analyzed as an intermediate step, which is nontrivial due to the samples being correlated. If the expert policy can be parameterized by a variant of the options framework, then, under regularity conditions, we prove that the proposed algorithm converges with high probability to a norm ball around the true parameter. To our knowledge, this is the first performance guarantee for an hierarchical imitation learning algorithm that only observes primitive state-action pairs. 
\end{abstract}

\section{Introduction}

Recent empirical studies \citep{kulkarni2016hierarchical,nachum2018data,tessler2017deep,vezhnevets2017feudal} have shown that the scalability of Reinforcement Learning (RL) algorithms can be improved by incorporating hierarchical structures. As an example, consider the \emph{options} framework \citep{sutton1999between} representing a two-level hierarchical policy: with a set of multi-step low level procedures (options), the high level policy selects an option, which, in turn, decides the primitive action applied at each time step until the option terminates. Learning such a hierarchical policy from environmental feedback effectively breaks the overall task into sub-tasks, each easier to solve. 

Researchers have investigated the hierarchical RL problem under various settings. Existing theoretical analyses \citep{brunskill2014pac,fruit2017exploration,fruit2017regret,mann2014scaling} typically assume that the options are given. As a result, only the high-level policy needs to be learned. Recent advances in deep hierarchical RL (e.g., \citep{bacon2017option}) focus on concurrently learning the full options framework, but still the initialization of the options is critical. A promising practical approach is to learn an initial hierarchical policy from expert demonstrations. Then, deep hierarchical RL algorithms can be applied for policy improvement. The former step is named as \emph{Hierarchical Imitation Learning} (HIL).

Due to its practicality, HIL has been extensively studied within the deep learning and robotics communities. However, existing works typically suffer from the following limitations. First, the considered HIL formulations often lack rigor and clarity. Second, existing works are mostly empirical, only testing on a few specific benchmarks. Without theoretical justification, it remains unclear whether the proposed methods can be generalized beyond their experimental settings. 

In this paper, we investigate HIL from a theoretical perspective. Our problem formulation is concise while retaining the essential difficulty of HIL: we need to learn a complete hierarchical policy from an \emph{unsegmented} sequence of state-action pairs. Under this setting, HIL becomes an inference problem in a latent variable model. Such a transformation was first proposed by \citep{daniel2016probabilistic}, where the Expectation-Maximization (EM) algorithm \citep{dempster1977maximum} was applied for policy learning. Empirical results for this algorithm and its gradient variants \citep{fox2017multi,krishnan2017ddco} demonstrate good performance, but the theoretical analysis remains open. By bridging this gap, we aim to solidify the foundation of HIL and provide some high level guidance for its practice. 

\subsection{Related work}

Due to its intrinsic difficulty, existing works on HIL typically consider its easier variants for practicality. If the expert options are observed, standard imitation learning algorithms can be applied to learn the high and low level policies separately \citep{le2018hierarchical}. If those are not available, a popular idea \citep{butterfield2010learning,manschitz2014learning,niekum2012learning,niekum2015learning} is to first divide the expert demonstration into segments using domain knowledge or heuristics, learn the individual option corresponding to each segment, and finally learn the high level policy. With additional supervision, these steps can be unified \citep{shiarlis2018taco}. In this regard, the EM approach \citep{daniel2016probabilistic,fox2017multi,krishnan2017ddco} is this particular idea pushed to an extreme: without any other forms of supervision, we simultaneously segment the demonstration and learn from it, by exploiting the latent variable structure. 

From the theoretical perspective, inference in parametric latent variable models is a long-standing problem in statistics. For many years the EM algorithm has been considered the standard approach, but performance guarantees \citep{mclachlan2007algorithm,wu1983convergence} were generally weak, only characterizing the convergence of parameter estimates to stationary points of the finite sample likelihood function. Under additional local assumptions, convergence to the Maximum Likelihood Estimate (MLE) can be further established. However, due to the randomness in sampling, the finite sample likelihood function is usually highly non-concave, leading to stringent requirements on initialization. Another weakness is that converging to the finite sample MLE does not directly characterize the distance to the maximizer of the population likelihood function which is the true parameter. 

Recent ideas on EM algorithms \citep{balakrishnan2017statistical,wang2015high,yang2017statistical,yi2015regularized} focus on the convergence to the true parameter directly, relying on an instrumental object named as the \emph{population EM algorithm}. It has the same two-stage iterative procedure as the standard EM algorithm, but its \emph{$Q$-function}, the maximization objective in the M-step, is defined as the infinite sample limit of the finite sample $Q$-function. Under regularity conditions, the population EM algorithm converges to the true parameter. The standard EM algorithm is then analyzed as its perturbed version, converging with high probability to a norm ball around the true parameter. The main advantage of this approach is that the true parameter usually has a large basin of attraction in the population EM algorithm. Therefore, the requirement on initialization is less stringent. See \citep[Figure~1]{yang2017statistical} for an illustration. 

The $Q$-function adopted in the population EM algorithm is named as the \emph{population $Q$-function}. To properly define such a quantity, the stochastic convergence of the finite sample $Q$-function needs to be constructed. When the samples are i.i.d., such as in Gaussian Mixture Models (GMMs) \citep{balakrishnan2017statistical,daskalakis2017ten,xu2016global}, the required convergence follows directly from the law of large numbers. However, this argument is less straightforward in time-series models such as Hidden Markov Models (HMMs) and the model considered in HIL. For HMMs, \citep{yang2017statistical} showed that the expectation of the $Q$-function converges, but both the stochastic convergence analysis and the analytical expression of the population $Q$-function are not provided. The missing techniques could be borrowed from a body of work \citep{cappe2006inference,de2017consistent,le2013online,van2008hidden} analyzing the asymptotic behavior of HMMs. Most notably, \citep{le2013online} provided a rigorous treatment of the population EM algorithm via sufficient statistics, assuming the HMM is parameterized by an exponential family. 

Finally, apart from the EM algorithm, a separate line of research \citep{anandkumar2014tensor,hsu2012spectral} applies spectral methods for tractable inference in latent variable models. However, such methods are mainly complementary to the EM algorithm since better performance can usually be obtained by initializing the EM algorithm with the solution of the spectral methods~\citep{kontorovich2013learning}. 

\subsection{Our contributions}

In this paper, we establish the first known performance guarantee for a HIL algorithm that only observes primitive state-action pairs. Specifically, we first fix and reformulate the original EM approach by \citep{daniel2016probabilistic} in a rigorous manner. The lack of mixing is identified as a technical difficulty in learning the standard options framework, and a novel \emph{options with failure} framework is proposed to circumvent this issue. 

Inspired by \citep{balakrishnan2017statistical} and \citep{yang2017statistical}, the population version of our algorithm is analyzed as an intermediate step. We prove that if the expert policy can be parameterized by the options with failure framework, then, under regularity conditions, the population version algorithm converges to the true parameter, and the finite sample version converges with high probability to a norm ball around the true parameter. Our analysis directly constructs the stochastic convergence of the finite sample $Q$-function, and an analytical expression of the resulting population $Q$-function is provided. Finally, we qualitatively validate our theoretical results using a numerical example. 

\section{Problem settings}

\paragraph{Notation.} Throughout this paper, we use uppercase letters (e.g., $S_t$) for random variables and lowercase letters (e.g., $s_t$) for values of random variables. Let $[t_1:t_2]$ be the set of integers $t$ such that $t_1\leq t\leq t_2$. When used in the subscript, the brackets are removed (e.g., $S_{t_1:t_2}=\set{S_t}_{t_1\leq t\leq t_2}$). 

\subsection{Definition of the hierarchical policy}

\begin{figure}[ht]
    \centering
    \includegraphics[width=350pt]{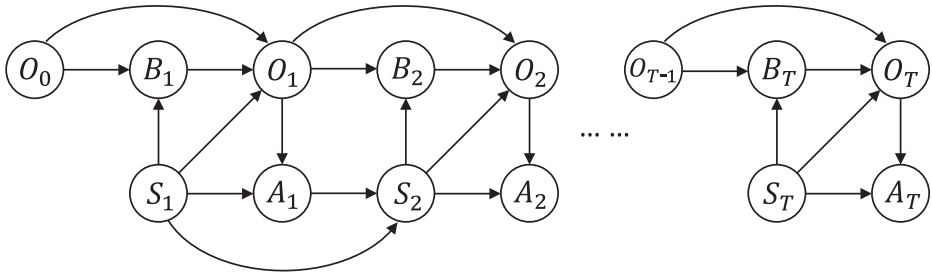}
    \caption{A graphical model for hierarchical reinforcement learning. }\label{figure:model}
\end{figure}

In this section, we first introduce the options framework for hierarchical reinforcement learning \citep{barto2003recent,sutton1999between}, captured by the probabilistic graphical model shown in Figure~\ref{figure:model}. The index $t$ represents the time; $(S_t,A_t,O_t,B_t)$ respectively represent the state, the action, the option and the termination indicator. For all $t$, $S_t$, $A_t$ and $O_t$ are defined on the finite state space $\cals$, the finite action space $\cala$ and the finite option space $\calo$; $B_t$ is a binary random variable. Define the parameter $\theta\defeq(\thetahi,\thetalo,\thetab)$ where $\thetahi\in\Thetahi$, $\thetalo\in\Thetalo$, and $\thetab\in\Thetab$. The parameter space $\calTheta\defeq\Thetahi\times\Thetalo\times\Thetab$ is a convex and compact subset of a Euclidean space. 

For any $(o_0,s_1)\in\calo\times \cals$, if we fix $(O_0,S_1)=(o_0,s_1)$ and consider a given $\theta$, the joint distribution on the rest of the graphical model is determined by the following components: an unknown environment transition probability $P$, a high level policy $\pihi$ parameterized by $\thetahi$, a low level policy $\pilo$ parameterized by $\thetalo$ and a termination policy $\pib$ parameterized by $\thetab$. Sampling a tuple $(s_{2:T},a_{1:T},o_{1:T},b_{1:T})$ from such a joint distribution, or equivalently, implementing the hierarchical decision process, follows the following procedure. Starting from the first time step, the decision making agent first determines whether or not to terminate the current option $o_0$. The decision is encoded in a termination indicator $b_1$ sampled from $\pi_{b}(\cdot|s_1,o_0;\theta_b)$. $b_1=1$ indicates that the option $o_0$ terminates and the next option $o_1$ is sampled from $\pi_{hi}(\cdot|s_1;\theta_{hi})$; $b_1=0$ indicates that the option $o_0$ continues and $o_1=o_0$. Next, the primitive action $a_1$ is sampled from $\pi_{lo}(\cdot|s_1,o_1;\theta_{lo})$, applying the low level policy associated with the option $o_1$. Using the environment, the next state $s_2$ is sampled from $P(\cdot|s_1,a_1)$. The rest of the samples $(s_{3:T},a_{2:T},o_{2:T},b_{2:T})$ are generated analogously.

The options framework corresponds to the above hierarchical policy structure and the policy triple $\{\pihi,\pilo,\pib\}$. However, due to a technicality identified at the end of this subsection, we consider a novel options with failure framework for the remainder of this paper, which adds an extra \emph{failure} mechanism to the graphical model in the case of $b_t=0$. Specifically, there exists a constant $0<\zeta<1$ such that when the termination indicator $b_t=0$, with probability $1-\zeta$ the next option $o_t$ is assigned to $o_{t-1}$, whereas with probability $\zeta$ the next option $o_t$ is sampled uniformly from the set of options $\mathcal O$. Notice that if $\zeta=0$, we recover the standard options framework. 

To simplify the notation, we define $\bar\pi_{hi}$ as the combination of $\pihi$ and the failure mechanism. For any $\theta_{hi}$, with any other input arguments,  
\begin{equation*}
  \bar\pi_{hi}(o_t|s_t,o_{t-1},b_t;\theta_{hi})\defeq
  \begin{cases}
    \pi_{hi}(o_t|s_t;\theta_{hi}), & \text{if $b_t=1$}, \\
    1-\zeta+\frac{\zeta}{|\mathcal O|}, & \text{if $b_t=0$, $o_t= o_{t-1}$},\\
    \frac{\zeta}{|\mathcal O|},& \text{if $b_t=0$, $o_t\neq o_{t-1}$.}
  \end{cases}
\end{equation*}

Formally, the options with failure framework is defined as the class of policy triples $\{\bar\pi_{hi},\pi_{lo}, \pi_b\}$ parameterized by $\zeta$ and $\theta$. With $(O_0,S_1)=(o_0,s_1)$ and a given $\theta$, let $\P_{\theta,o_0,s_1}$ be the joint distribution of $\{S_{2:T},A_{1:T},O_{1:T},B_{1:T}\}$. With any input arguments,
\begin{multline*}
\mathbb P_{\theta,o_0,s_1}(S_{2:T}=s_{2:T},A_{1:T}=a_{1:T},O_{1:T}=o_{1:T},B_{1:T}=b_{1:T})=\\
\left[\prod_{t=1}^T\pi_{b}(b_t|s_t,o_{t-1};\theta_b)\bar\pi_{hi}(o_t|s_t,o_{t-1},b_t;\theta_{hi})\pi_{lo}(a_t|s_t,o_t;\theta_{lo})\right]\left[\prod_{t=1}^{T-1}P(s_{t+1}|s_t,a_t)\right].
\end{multline*}

\paragraph{On the policy framework.} The options with failure framework is adopted to simplify the construction of the mixing condition (Lemma~\ref{lemma:mixing}). It is possible that our analysis could be extended to learn the standard options framework. In that case, instead of constructing the usual one step mixing condition, one could target the multi-step mixing condition similar to~\citep[Chap.~4.3]{cappe2006inference}. 

\subsection{The imitation learning problem}

Suppose an expert uses an options with failure policy with true parameters $\zeta$ and $\theta^*=(\theta^*_{hi},\theta^*_{lo},\theta^*_b)$; its initial condition $(o_0,s_1)$ is sampled from a distribution $\nu^*$. A finite length observation sequence $\{s_{1:T},a_{1:T}\}=\{s_t,a_t\}_{t=1}^T$ with $T\geq 2$ is observed from the expert. $\zeta$ and the parametric structure of the expert policy are known, but $\nu^*$ is unknown. Our objective is to estimate $\theta^*$ from $\{s_{1:T},a_{1:T}\}$.

\paragraph{On the practicality of our setting.} Two comments need to made here. First, it is common in practice to observe not one, but a set of independent observation sequences. In that case, the problem essentially becomes easier. Second, the cardinality of the option space and the parameterization of the expert policy are usually unknown. A popular solution is to assume an expressive parameterization (e.g., a neural network) in the algorithm and select $\textrm{card}(\calo)$ through cross-validation. Theoretical analysis of EM under this setting is challenging, even when samples are i.i.d. \citep{dwivedi2018singularity,dwivedi2018theoretical}. Therefore, we only consider the domain of \emph{correct-specification}. 

Throughout this paper, the following assumptions are imposed for simplicity.

\begin{assumption}[Non-degeneracy]\label{as:nondeg} With any other input arguments, the domain of $\pi_{hi}$, $\pi_{lo}$ and $\pi_b$ as functions of $\theta$ can be extended to an open set $\tilde\calTheta$ that contains $\calTheta$. Moreover,  for all $\theta\in\tilde\calTheta$, $\pi_{hi}$, $\pi_{lo}$ and $\pi_b$ parameterized by $\theta$ are strictly positive.
\end{assumption}

\begin{assumption}[Differentiability]\label{as:continuity}
With any other input arguments, $\pi_{hi}$, $\pi_{lo}$ and $\pi_b$ as functions of $\theta$ are continuously differentiable on $\tilde\calTheta$. 
\end{assumption}

Next, consider the stochastic process $\{O_{t-1},S_t\}_{t=1}^\infty$ induced by $\nu^*$ and the expert policy. Based on the graphical model, it is a Markov chain with finite state space $\calo\times\cals$. Let $\Pi_{\theta^*}$ be its set of stationary distributions, which is nonempty and convex. 

\begin{assumption}[Stationary initial distribution]\label{as:initial}
$\nu^*$ is an extreme point of $\Pi_{\theta^*}$. That is, $\nu^*\in\Pi_{\theta^*}$, and it cannot be written as the convex combination of two elements of $\Pi_{\theta^*}$. 
\end{assumption}

\paragraph{On the assumptions.} The first two assumptions are generally mild and therefore hold for many policy parameterizations. The third assumption is a bit more restrictive, but it is essential for our theoretical analysis. In Appendix~\ref{section:proof1}, we provide further justification of this assumption in a particular class of environments: $\forall s_t,s_{t+1}\in \cals$, there exists $a_t\in \cala$ such that $P(s_{t+1}|s_t,a_t)>0$. In such environments, $\Pi_{\theta^*}$ contains a unique element which is also the limiting distribution. If we start sampling the observation sequence late enough, Assumption~\ref{as:initial} is approximately satisfied. 

\section{A Baum-Welch type algorithm}\label{section:alg}

Adopting the EM approach, we present Algorithm~\ref{algorithm} for the estimation of $\theta^*$. It reformulates the algorithm by \citep{daniel2016probabilistic} in a rigorous manner, and an error in the latter is fixed: when defining the posterior distribution of latent variables, at any time $t<T$, the original algorithm neglects the dependency of future states $S_{t+1:T}$ on the current option $O_t$. A detailed discussion is provided in Appendix~\ref{subsection:problem}. 

\begin{algorithm*}[ht]
\caption{A Baum-Welch type algorithm for provable hierarchical imitation learning}
\label{algorithm}
\begin{algorithmic}[1]
\REQUIRE Observation sequence $\{s_{1:T},a_{1:T}\}$; a probability mass function $\mu(o_0|s_1)$ on $o_0\in\mathcal O$; $N\in\mathbb N_+$; $\theta^{(0)}\in\calTheta$. 
\FOR{$n=1,\ldots,N$}
\STATE Compute the forward message $\{\alpha^{\theta^{(n-1)}}_{\mu,t}\}_{t=1}^T$ and the backward message $\{\beta^{\theta^{(n-1)}}_{t|T}\}_{t=1}^T$ according to (\ref{eq:backwardT}), (\ref{eq:forward}), (\ref{eq:forward1}) and (\ref{eq:backward}). \label{line2}
\STATE Compute the smoothing distributions $\{\gamma^{\theta^{(n-1)}}_{\mu,t|T}\}_{t=1}^T$ and $\{\tilde \gamma^{\theta^{(n-1)}}_{\mu,t|T}\}_{t=2}^T$ according to (\ref{eq:smoothing}) and (\ref{eq:smoothing2}).\label{line3}
\STATE Update the parameter estimate $\theta^{(n)}\in\argmax_{\theta\in\calTheta} Q_{\mu,T}(\theta|\theta^{(n-1)})$ according to (\ref{eq:Q}). \label{line4}
\ENDFOR
\end{algorithmic}
\end{algorithm*}

Since our graphical model resembles an HMM, Algorithm~\ref{algorithm} is intuitively similar to the classical Baum-Welch algorithm \citep{baum1970maximization} for HMM parameter inference. Analogously, it iterates between forward-backward smoothing and parameter update. In each iteration, the algorithm first estimates certain marginal distributions of the latent variables $(O_{1:T},B_{1:T})$ conditioned on the observation sequence $\{s_{1:T},a_{1:T}\}$, assuming the current estimate of $\theta$ is correct. Such conditional distributions are named as \emph{smoothing distributions}, and they are used to compute the $Q$-function, which is a surrogate of the likelihood function. The next estimate of $\theta$ is assigned as one of the maximizing arguments of the $Q$-function.

From the structure of our graphical model, a prior distribution of $(O_0,S_1)$ is required to compute the smoothing distributions. Since the true prior distribution $\nu^*$ is unknown, $\hat\nu$, defined next, is used as its approximation: $\forall o_0\in\calo$, $\hat\nu(o_0,s_1)\defeq\mu(o_0|s_1)$; $\forall s'_1\neq s_1$, $\hat\nu(o_0,s'_1)\defeq 0$. Theorem~\ref{thm:existence} shows that the additional estimation error introduced by this approximation vanishes as $T\rightarrow\infty$, regardless of the choice of $\mu$. Let $\mathcal{M}$ be the set of $\mu$ allowed by Algorithm~\ref{algorithm}. 

\subsection{Latent variable estimation}

In the following, we define the forward message, the backward message and the smoothing distribution for all $\theta$, $\mu$ and all $t\in[1:T]$. All of these quantities are probability mass functions over $\calo\times\cals$, and normalizing constants $z_{\alpha,\mu,t}^{\theta}$, $z_{\beta,t}^{\theta}$ and $z_{\gamma,\mu}^{\theta}$ are adopted to enforce this. With any input arguments $o_t$ and $b_t$, the forward message is defined as
\begin{equation*}
\alpha^\theta_{\mu,t}(o_t,b_t)\defeq z_{\alpha,\mu,t}^{\theta}\mathbb E_{O_0\sim \mu(\cdot|s_1)}[\mathbb P_{\theta,O_0,s_1}(S_{2:t}=s_{2:t},A_{1:t}=a_{1:t},O_t=o_t,B_t=b_t)].
\end{equation*}
On the LHS, the dependency on $\{s_{1:T},a_{1:T}\}$ is omitted for a cleaner notation. By convention, $\alpha^\theta_{\mu,1}$ is equivalent to
\begin{equation*}
\alpha^\theta_{\mu,1}(o_1,b_1)=z_{\alpha,\mu,1}^{\theta}\mathbb E_{O_0\sim \mu(\cdot|s_1)}[\mathbb P_{\theta,O_0,s_1}(A_1=a_1,O_1=o_1,B_1=b_1)].
\end{equation*}
The backward message is defined as
\begin{equation*}
\beta^\theta_{t|T}(o_t,b_t)\defeq z_{\beta,t}^{\theta}\mathbb P_{\theta,o_0,s_1}(S_{t+1:T}=s_{t+1:T},A_{t+1:T}=a_{t+1:T}|S_t=s_t,A_t=a_t,O_t=o_t,B_t=b_t),
\end{equation*}
where the value of $o_0$ on the RHS is arbitrary. By convention, the boundary condition is
\begin{equation}\label{eq:backwardT}
\beta^\theta_{T|T}(o_T,b_T)=(2\left|\calo\right|)^{-1}.
\end{equation}
The smoothing distribution is defined as
\begin{equation*}
\gamma^{\theta}_{\mu,t|T}(o_t,b_t)\defeq z_{\gamma,\mu}^{\theta}\mathbb E_{O_0\sim \mu(\cdot|s_1)}[\mathbb P_{\theta,O_0,s_1}(S_{2:T}=s_{2:T},A_{1:T}=a_{1:T},O_t=o_t,B_t=b_t)].
\end{equation*}
It can be easily verified that the normalizing constant does not depend on $t$. 

Finally, for all $\theta$, $\mu$ and $t\in[2:T]$, with any input arguments $o_{t-1}$ and $b_t$, we define the two-step smoothing distribution as
\begin{equation*}
\tilde \gamma^{\theta}_{\mu,t|T}(o_{t-1},b_t)\defeq z_{\gamma,\mu}^{\theta}\mathbb E_{O_0\sim \mu(\cdot|s_1)}[\mathbb P_{\theta,O_0,s_1}(S_{2:T}=s_{2:T},A_{1:T}=a_{1:T},O_{t-1}=o_{t-1},B_t=b_t)],
\end{equation*}
where $z_{\gamma,\mu}^{\theta}$ is the same normalizing constant as the one for the smoothing distribution $\gamma^{\theta}_{\mu,t|T}$. 

The quantities above can be computed using the forward-backward recursion. For simplicity, we omit normalizing constants by using the proportional symbol $\propto$. The proof is deferred to Appendix~\ref{subsection:proofthm2}. 

\begin{theorem}[Forward-backward smoothing]\label{thm:fb}
For all $\theta\in\calTheta$ and $\mu\in\mathcal{M}$, with any input arguments on the LHS, 
\begin{enumerate}[leftmargin=*]
\item (Forward recursion) $\forall t\in[2:T]$, 
\begin{equation}\label{eq:forward}
\alpha^\theta_{\mu,t}(o_t,b_t)\propto\sum_{o_{t-1},b_{t-1}}\pi_b(b_t|s_t,o_{t-1};\theta_b)\bar\pi_{hi}(o_t|s_t,o_{t-1},b_t;\theta_{hi})\pi_{lo}(a_t|s_t,o_t;\theta_{lo})\alpha^\theta_{\mu,t-1}(o_{t-1},b_{t-1}).
\end{equation}
When $t=1$,
\begin{equation}\label{eq:forward1}
\alpha^\theta_{\mu,1}(o_1,b_1)\propto\mathbb E_{O_0\sim \mu(\cdot|s_1)}[\pi_b(b_1|s_1,O_0;\theta_b)\bar\pi_{hi}(o_1|s_1,O_0,b_1;\theta_{hi})\pi_{lo}(a_1|s_1,o_1;\theta_{lo})].
\end{equation}
\item (Backward recursion) $\forall t\in[1:T-1]$, 
\begin{multline}\label{eq:backward}
\beta^\theta_{t|T}(o_t,b_t)\propto\sum_{o_{t+1},b_{t+1}}\pi_b(b_{t+1}|s_{t+1},o_{t};\theta_b)\bar\pi_{hi}(o_{t+1}|s_{t+1},o_{t},b_{t+1};\theta_{hi})\\ \times\pi_{lo}(a_{t+1}|s_{t+1},o_{t+1};\theta_{lo})\beta^\theta_{t+1|T}(o_{t+1},b_{t+1}).
\end{multline}
\item (Smoothing) $\forall t\in[1:T]$, 
\begin{equation}\label{eq:smoothing}
\gamma^{\theta}_{\mu,t|T}(o_t,b_t)\propto\alpha^\theta_{\mu,t}(o_t,b_t)\beta^\theta_{t|T}(o_t,b_t).
\end{equation}
\item (Two-step smoothing) $\forall t\in[2:T]$, 
\begin{multline}\label{eq:smoothing2}
\tilde \gamma^{\theta}_{\mu,t|T}(o_{t-1},b_t)\propto\pi_b(b_t|s_t,o_{t-1};\theta_{b})\left[\sum_{o_t}\bar\pi_{hi}(o_{t}|s_{t},o_{t-1},b_{t};\theta_{hi})\pi_{lo}(a_{t}|s_{t},o_{t};\theta_{lo})\beta^\theta_{t|T}(o_t,b_t)\right]\\ \times\left[\sum_{b_{t-1}}\alpha^\theta_{\mu,t-1}(o_{t-1},b_{t-1})\right].
\end{multline}
\end{enumerate}
\end{theorem}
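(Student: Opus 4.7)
The plan is to treat $(O_t,B_t)$ as a combined latent variable at each time step and then mimic the standard derivation of the forward-backward recursions for HMMs, exploiting the conditional independence structure encoded in the factorization of $\mathbb P_{\theta,o_0,s_1}$ displayed right before the discussion of the options-with-failure framework. Throughout, I would use the $\propto$ convention aggressively: any factor that depends only on the observed sequence $\{s_{1:T},a_{1:T}\}$ (most notably the environment transition $P(s_{t+1}|s_t,a_t)$) can be absorbed into the normalizing constants and therefore ignored.

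For the forward recursion, I would start from the definition of $\alpha^\theta_{\mu,t}(o_t,b_t)$, insert a marginalization $\sum_{o_{t-1},b_{t-1}}$ inside the expectation, and split the joint probability at time $t$:
\begin{equation*}
\mathbb P_{\theta,O_0,s_1}(S_{2:t}=s_{2:t},A_{1:t}=a_{1:t},O_{t-1:t}=o_{t-1:t},B_{t-1:t}=b_{t-1:t})
\end{equation*}
factors as the ``past'' joint through time $t-1$, times $P(s_t|s_{t-1},a_{t-1})$, times $\pi_b(b_t|s_t,o_{t-1};\theta_b)\,\bar\pi_{hi}(o_t|s_t,o_{t-1},b_t;\theta_{hi})\,\pi_{lo}(a_t|s_t,o_t;\theta_{lo})$. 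Taking expectations over $O_0\sim\mu(\cdot|s_1)$ on both sides, the past piece becomes proportional to $\alpha^\theta_{\mu,t-1}(o_{t-1},b_{t-1})$, while $P(s_t|s_{t-1},a_{t-1})$ is constant in the latents and is absorbed. For $t=1$, the same factorization is applied to $\mathbb P_{\theta,O_0,s_1}(A_1=a_1,O_1=o_1,B_1=b_1)$ with $S_1=s_1$ fixed, yielding (\ref{eq:forward1}) directly. The backward recursion is symmetric: using the Markov property at time $t$, I would sum over $(o_{t+1},b_{t+1})$ to extract one additional step of the chain, noting that the factor $P(s_{t+1}|s_t,a_t)$ depends only on observables and so drops into the proportionality. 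The remaining factors $\pi_b\,\bar\pi_{hi}\,\pi_{lo}$ attach to the earlier time, and the residual conditional probability is proportional to $\beta^\theta_{t+1|T}(o_{t+1},b_{t+1})$.

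For the single-step smoothing, the plan is to split $\mathbb P_{\theta,O_0,s_1}(S_{2:T}=s_{2:T},A_{1:T}=a_{1:T},O_t=o_t,B_t=b_t)$ into ``past through $t$'' and ``future given time $t$'' using the Markov property: given $(S_t,A_t,O_t,B_t)$, the future observations are conditionally independent of both the past observations and $O_0$. After taking the expectation over $O_0$, the past piece is proportional to $\alpha^\theta_{\mu,t}(o_t,b_t)$ while the future piece is proportional to $\beta^\theta_{t|T}(o_t,b_t)$, yielding (\ref{eq:smoothing}). For the two-step smoothing, I would apply the same idea but insert the split at time $t-1\!\to\!t$, then marginalize $\sum_{b_{t-1}}$ and $\sum_{o_t}$. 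The sum over $b_{t-1}$ acts only on the past piece and collapses to $\sum_{b_{t-1}}\alpha^\theta_{\mu,t-1}(o_{t-1},b_{t-1})$. The factor $\pi_b(b_t|s_t,o_{t-1};\theta_b)$ does not depend on $o_t$ and is pulled out, while the sum over $o_t$ combines $\bar\pi_{hi}(o_t|s_t,o_{t-1},b_t;\theta_{hi})\,\pi_{lo}(a_t|s_t,o_t;\theta_{lo})$ with the future piece $\beta^\theta_{t|T}(o_t,b_t)$; the transition $P(s_t|s_{t-1},a_{t-1})$ is again absorbed into the constant. This gives (\ref{eq:smoothing2}).

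The main obstacle is bookkeeping rather than anything deep: one must keep track, at each splitting step, of precisely which variables each factor conditions on, and verify both the conditional independence invoked when dropping conditioning variables (in particular $O_0$ in the backward piece) and the fact that all discarded multiplicative factors depend only on observed quantities. A minor subtlety is that the two-step smoothing uses the \emph{same} normalizing constant $z^\theta_{\gamma,\mu}$ as the single-step smoothing; this would be verified by checking that both definitions arise from the same underlying joint $\mathbb P_{\theta,O_0,s_1}(S_{2:T}=s_{2:T},A_{1:T}=a_{1:T},\cdot)$ merely marginalized over different subsets of the latents. The options-with-failure modification requires no separate case analysis because it is fully absorbed into the symbol $\bar\pi_{hi}$, which appears multiplicatively wherever $\pi_{hi}$ would in a standard options framework.
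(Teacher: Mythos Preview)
Your proposal is correct and matches the paper's own proof essentially step for step: fix $o_0$, exploit the factorization of $\mathbb P_{\theta,o_0,s_1}$ to peel off one time step (absorbing $P(s_{t+1}|s_t,a_t)$ into the proportionality constant), use the Markov property to split past and future for the smoothing identities, then take the expectation over $O_0\sim\mu(\cdot|s_1)$ noting that the backward/future factor does not depend on $o_0$. Your remarks about the shared normalizer $z^\theta_{\gamma,\mu}$ and about $\bar\pi_{hi}$ absorbing the failure mechanism are also exactly how the paper handles those points.
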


\subsection{Parameter update}

For all $\theta,\theta'\in\calTheta$ and $\mu\in\mathcal{M}$, the (finite sample) $Q$-function is defined as
\begin{multline}\label{eq:Q}
Q_{\mu,T}(\theta'|\theta)\defeq\frac{1}{T}\Bigg\{\sum_{t=2}^T\sum_{o_{t-1},b_t}\tilde \gamma^{\theta}_{\mu,t|T}(o_{t-1},b_t)\left[\log \pi_{b}(b_t|s_t,o_{t-1};\theta'_b)\right]+\sum_{t=1}^T\sum_{o_t,b_t}\gamma^{\theta}_{\mu,t|T}(o_t,b_t)\\
\times[\log \pi_{lo}(a_t|s_t,o_t;\theta'_{lo})]
+\sum_{t=1}^T\sum_{o_t}\gamma^{\theta}_{\mu,t|T}(o_t,b_t=1)[\log \pi_{hi}(o_t|s_t;\theta'_{hi})]\Bigg\}.
\end{multline}

The parameter update is performed as $\theta^{(n)}\in\argmax_{\theta\in\calTheta} Q_{\mu,T}(\theta|\theta^{(n-1)})$, which may not be unique. Since $\calTheta$ is compact and $Q_{\mu,T}(\theta'|\theta)$ is continuous with respect to $\theta'$, the maximization is well-posed. Note that our definition of $Q_{\mu,T}(\theta'|\theta)$ is an approximation of the standard definition of $Q$-function in the EM literature. See Appendix~\ref{subsection:q} for a detailed discussion. 

\subsection{Generalization to continuous spaces}

Although we require finite state and action space for our theoretical analysis, Algorithm~\ref{algorithm} can be readily generalized to continuous $\cals$ and $\cala$: we only need to replace $\pi_{lo}$ by a density function. However, generalization to continuous option space requires a substantially different algorithm. The forward-backward smoothing procedure in Theorem~\ref{thm:fb} involves integrals rather than sums, and Sequential Monte Carlo (SMC) techniques need to be applied. Fortunately, it is widely accepted that a finite option space is reasonable in the options framework, since the options need to be distinct and separate \citep{daniel2016hierarchical}.

\section{Performance guarantee}\label{section:guarantee}

Our analysis of Algorithm~\ref{algorithm} has the following structure. We first prove the stochastic convergence of the $Q$-function $Q_{\mu,T}(\theta'|\theta)$ to a population $Q$-function $\bar Q(\theta'|\theta)$, leading to a well-posed definition of the population version algorithm. This step is our major theoretical contribution. With additional assumptions, the \emph{first-order stability} condition is constructed, and techniques in \citep{balakrishnan2017statistical} can be applied to show the convergence of the population version algorithm. The remaining step is to analyze Algorithm~\ref{algorithm} as a perturbed form of its population version, which requires a high probability bound on the distance between their parameter updates. We can establish the strong consistency of the parameter update of Algorithm~\ref{algorithm} as an estimator of the parameter update of the population version algorithm. Therefore, the existence of such a high probability bound can be proved for large enough $T$. However, the analytical expression of this bound requires knowledge of the specific parameterization of $\{\bar\pi_{hi},\pi_{lo}, \pi_b\}$, which is not available in this general context of discussion. 

Concretely, we first analyze the asymptotic behavior of the $Q$-function $Q_{\mu,T}(\theta'|\theta)$ as $T\rightarrow\infty$. From Assumption~\ref{as:initial}, the observation sequence $\{s_{1:T},a_{1:T}\}$ is generated from a stationary Markov chain $\{X_t\}_{t=1}^\infty\defeq\{S_t,A_t,O_t,B_t\}_{t=1}^\infty$. Let $\mathcal{X}=\mathcal S\times \mathcal A\times\mathcal O\times \{0,1\}$ be its state space. Using Kolmogorov's extension theorem, we can extend this one-sided Markov chain to the index set $\Z$ and define a unique probability measure $\P_{\theta^*,\nu^*}$ over the sample space $\mathcal{X}^{\mathbb Z}$. Any observation sequence $\{s_{1:T},a_{1:T}\}$ can be regarded as a segment of an infinite length sample path $\omega\in\mathcal{X}^{\mathbb Z}$. Therefore, if the observation sequence is not specified, $Q_{\mu,T}(\theta'|\theta)$ is a random variable with underlying probability measure $\P_{\theta^*,\nu^*}$.

One caveat is that the definition of $Q_{\mu,T}(\theta'|\theta)$ from Section~\ref{section:alg} fails for some $\omega\in\mathcal{X}^{\mathbb Z}$. To fix this issue, define the set of \emph{proper} sample paths as
\begin{equation}\label{eq:omega}
\Omega=\left\{\omega\in\mathcal{X}^\Z;P(s_{t+1}|s_t,a_t)>0,\forall t\in\Z\right\}.
\end{equation}

Note that $\P_{\theta^*,\nu^*}(\Omega)=1$; therefore, working on $\Omega$ is probabilistically equivalent to working on $\mathcal{X}^\Z$. For all $\omega\in\Omega$, $Q_{\mu,T}(\theta'|\theta)$ follows the definition from Section~\ref{section:alg}; for other sample paths, $Q_{\mu,T}(\theta'|\theta)$ is defined arbitrarily. In this way, $Q_{\mu,T}(\theta'|\theta)$ becomes a well-defined random variable. Its stochastic convergence is characterized in the following theorem. 

\begin{theorem}[The stochastic convergence of the $Q$-function]\label{thm:existence} With Assumption~\ref{as:nondeg}, \ref{as:continuity} and \ref{as:initial}, there exists a real-valued function $\bar Q(\theta'|\theta)$ defined on the domain $\theta'\in\tilde\calTheta$ and $\theta\in\calTheta$ such that
\begin{enumerate}[leftmargin=*]
\item For all $\theta\in\calTheta$, $\bar Q(\theta'|\theta)$ is continuously differentiable with respect to $\theta'\in\tilde\calTheta$. Moreover, the set $\argmax_{\theta'\in\calTheta}\bar Q(\theta'|\theta)$ is nonempty.
\item As $T\rightarrow\infty$, 
\begin{equation*}
\sup_{\theta,\theta'\in\calTheta}\sup_{\mu\in\mathcal{M}}\left|Q_{\mu,T}(\theta'|\theta;\omega)-\bar Q(\theta'|\theta)\right|\rightarrow 0,~P_{\theta^*,\nu^*}\text{-a.s.}
\end{equation*}
\end{enumerate}
\end{theorem}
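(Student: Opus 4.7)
The plan is to (i) express $Q_{\mu,T}(\theta'|\theta)$ as a normalized sum of local functionals of the observation sequence, (ii) identify the candidate limit $\bar Q$ using exponential forgetting of the forward--backward smoothing on the bi-infinite stationary extension of the chain, and (iii) upgrade pointwise ergodic convergence to uniform almost-sure convergence over $(\theta,\theta',\mu)$. Inspecting (\ref{eq:Q}), each time-$t$ summand is an inner product of a smoothing (or two-step smoothing) distribution against a log-policy factor evaluated at $\theta'$ and local data. Writing
\begin{equation*}
Q_{\mu,T}(\theta'|\theta;\omega) = \frac{1}{T}\sum_{t=1}^T F_t(\theta',\theta;\mu,\omega)
\end{equation*}
reduces the theorem to a uniform ergodic statement about $\{F_t\}$.

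The crucial ingredient is geometric forgetting of smoothing under the options-with-failure model. Assumption~\ref{as:nondeg} combined with $\zeta>0$ yields a uniform Doeblin-type minorization of both the forward and backward kernels, which is exactly the role of Lemma~\ref{lemma:mixing}. For any fixed $t$, the finite-horizon smoothing distributions $\gamma^\theta_{\mu,t|T}$ and $\tilde\gamma^\theta_{\mu,t|T}$ then converge in total variation, at a geometric rate uniform in $\theta\in\calTheta$ and $\mu\in\mathcal{M}$, to random probability vectors determined only by the bi-infinite sample path $\omega\in\Omega$ already constructed in the excerpt. Writing $\bar F_t$ for the corresponding time-$t$ analogue of $F_t$ built from these stationary smoothing limits, I set
\begin{equation*}
\bar Q(\theta'|\theta) \defeq \mathbb E_{\P_{\theta^*,\nu^*}}\bigl[\bar F_0(\theta',\theta;\omega)\bigr],
\end{equation*}
and note that Assumption~\ref{as:initial} (extremality of $\nu^*$ in $\Pi_{\theta^*}$) makes the extended chain ergodic under $\P_{\theta^*,\nu^*}$, so $\{\bar F_t\}_{t\in\Z}$ is a stationary ergodic sequence.

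To conclude claim (2), I would split $Q_{\mu,T}-\bar Q$ into a ``boundary/forgetting'' error $T^{-1}\sum_t(F_t-\bar F_t)$ and an ``ergodic'' error $T^{-1}\sum_t\bar F_t - \mathbb E_{\P_{\theta^*,\nu^*}}[\bar F_0]$. The forgetting bound gives $|F_t-\bar F_t|\leq C\rho^{\min(t-1,T-t)}$ uniformly in $(\theta,\mu)$ for some $\rho\in(0,1)$, so the first error is $O((\log T)/T)$ after summing. The second vanishes $\P_{\theta^*,\nu^*}$-a.s.\ for each fixed $(\theta,\theta')$ by Birkhoff's ergodic theorem. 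Equicontinuity on the compact $\calTheta\times\calTheta$, inherited from Assumption~\ref{as:continuity} together with Assumption~\ref{as:nondeg} (which keeps the log-policy factors bounded and Lipschitz and implies Lipschitz dependence of the smoothing limits on $\theta$), then upgrades pointwise to uniform convergence over $(\theta,\theta',\mu)$ via a standard $\epsilon$-net argument. For claim (1), continuous differentiability of $\bar Q$ in $\theta'\in\tilde\calTheta$ is inherited from Assumption~\ref{as:continuity} after a dominated-convergence justification (only the log-policy factors depend on $\theta'$, and the smoothing weights do not), while nonemptiness of $\argmax_{\theta'\in\calTheta}\bar Q(\cdot|\theta)$ is Weierstrass on the compact $\calTheta$. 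The hard part is establishing the forgetting estimate \emph{uniformly} in $\theta\in\calTheta$ and $\mu\in\mathcal{M}$; this is precisely what the options-with-failure modification is designed to buy, since for $\zeta=0$ the induced forward kernel can fail to admit a global Doeblin minorization. Once that uniform mixing estimate is in hand, the remaining steps are routine.
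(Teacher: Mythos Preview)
Your proposal is correct and follows essentially the same route as the paper: approximate the finite-horizon smoothing distributions by their bi-infinite limits via uniform forgetting (the paper's Lemma~\ref{lemma:smoothingdiff1} and Lemma~\ref{lemma:qdiff}), apply Birkhoff's ergodic theorem to the resulting stationary functional for pointwise convergence, and then upgrade to uniform convergence using equicontinuity in $(\theta,\theta')$. Two cosmetic remarks: the boundary error $T^{-1}\sum_t|F_t-\bar F_t|$ is actually $O(1/T)$, not $O((\log T)/T)$, since $\sum_{t=1}^T\rho^{\min(t-1,T-t)}$ is bounded by a constant; and where you invoke an $\epsilon$-net, the paper packages the same idea as \emph{strong stochastic equicontinuity} and cites an Arzel\`a--Ascoli-type theorem from \citep{davidson1994stochastic}, but the content is identical.
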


We name $\bar Q(\theta'|\theta)$ as the population $Q$-function. The analytical expressions of $\bar Q(\theta'|\theta)$ and $\nabla\bar Q(\theta'|\theta)$ are provided in Appendix~\ref{subsection:asymptoticqfun}, where the complete version of the above theorem (Theorem~\ref{thm:stronger}) is proved. In the following, we provide a high level sketch of the main idea. 

\begin{proof}[Proof Sketch]
The main difficulty of the proof is that, $Q_{\mu,T}(\theta'|\theta)$ defined in (\ref{eq:Q}) is (roughly) the average of $T$ terms, with each term dependent on the entire observation sequence; as $T\rightarrow\infty$, all the terms keep changing such that the law of large numbers cannot be applied directly. As a solution, we approximate $\gamma^{\theta}_{\mu,t|T}$ and $\tilde\gamma^{\theta}_{\mu,t|T}$ with smoothing distributions in an infinitely extended graphical model independent of $T$, resulting in an approximated $Q$-function (still depends on $T$). The techniques adopted in this step are analogous to \emph{Markovian decomposition} and \emph{uniform forgetting} in the HMM literature \citep{cappe2006inference,van2008hidden}. The limiting behavior of the approximated $Q$-function is the same as that of $Q_{\mu,T}(\theta'|\theta)$, since their difference vanishes as $T\rightarrow\infty$. For the approximated $Q$-function, we can apply the ergodic theorem since the smoothing distributions no longer depend on $T$. 
\end{proof}

The population version of Algorithm~\ref{algorithm} has parameter updates $\theta^{(n)}\in\argmax_{\theta\in\calTheta} \bar Q(\theta|\theta^{(n-1)})$. To characterize the local convergence of Algorithm~\ref{algorithm} and its population version, we impose the following assumptions for the remainder of Section~\ref{section:guarantee}. 

\begin{assumption}[Strong concavity]\label{as:concavity} There exists $\lambda>0$ such that for all $\theta_1,\theta_2\in\calTheta$, 
\begin{equation*}
\bar Q(\theta_1|\theta^*)-\bar Q(\theta_2|\theta^*)-\langle\nabla \bar Q(\theta_2|\theta^*),\theta_1-\theta_2\rangle\leq -\frac{\lambda}{2}\norm{\theta_1-\theta_2}_2^2. 
\end{equation*}
\end{assumption}

For any $r>0$, let $\calTheta_r\defeq \{\theta;\theta\in\calTheta, \norms{\theta-\theta^*}_2\leq r\}$. 

\begin{assumption}[Additional local assumptions]\label{as:local} There exists $r>0$ such that
\begin{enumerate}[leftmargin=*]
\item (Identifiability) For all $\theta\in\calTheta_r$, the set $\argmax_{\theta'\in\calTheta} \bar Q(\theta'|\theta)$ has a unique element $\bar M(\theta)$. Moreover, for all $\eps>0$, with the convention that $\sup_{\theta'\in\varnothing}\bar Q(\theta'|\theta)=-\infty$, we have
\begin{equation*}
\inf_{\theta\in\calTheta_r}\bigg[\bar Q(\bar M(\theta)|\theta)-\sup_{\theta'\in\calTheta;\norms{\theta'-\bar M(\theta)}_2\geq\eps}\bar Q(\theta'|\theta)\bigg]>0.
\end{equation*}
\item (Uniqueness of finite sample parameter updates) For all $\theta\in\calTheta_r$, $T\geq 2$ and $\mu\in\mathcal{M}$, $P_{\theta^*,\nu^*}$-almost surely, the set $\argmax_{\theta'\in\calTheta} Q_{\mu,T}(\theta'|\theta;\omega)$ has a unique element $M_{\mu,T}(\theta;\omega)$.
\end{enumerate}
\end{assumption}

\paragraph{On the additional assumptions.} In Assumption~\ref{as:concavity}, we require the strong concavity of $\bar Q(\cdot|\theta^*)$ over the entire parameter space since the maximization step in our algorithm is global. Such a requirement could be avoided: if the maximization step is replaced by a gradient update (Gradient EM), then $\bar Q(\cdot|\theta^*)$ only needs to be strongly concave in a small region around $\theta^*$. The price to pay is to assume knowledge on structural constants of $\bar Q(\cdot|\theta^*)$ (Lipschitz constant and strong concavity constant). See \citep{balakrishnan2017statistical} for an analysis of the gradient EM algorithm.

Nonetheless, we expect the following to hold in certain cases of tabular parameterization: for all $\theta\in\calTheta$, the function $\bar Q(\cdot|\theta)$ is strongly concave over $\calTheta$ (see the end of Appendix~\ref{subsection:asymptoticqfun}). From this condition, Assumption~\ref{as:concavity} and \ref{as:local}.1 directly follow. Assumption~\ref{as:local}.2 holds as well; in fact, it is a quite mild assumption due to the sample-based nature of $Q_{\mu,T}(\theta'|\theta;\omega)$. 

The next step is to characterize the convergence of the population version algorithm. 

\begin{theorem}[Convergence of the population version algorithm]\label{thm:population} With all the assumptions, 

\begin{enumerate}[leftmargin=*]
\item (First-order stability) There exists $\gamma>0$ such that for all $\theta\in\calTheta_r$, 
\begin{equation*}
\norm{\nabla\bar Q(\bar M(\theta)|\theta)-\nabla\bar Q(\bar M(\theta)|\theta^*)}_2\leq \gamma\norm{\theta-\theta^*}_2.
\end{equation*}
\item (Contraction) Let $\kappa=\gamma/\lambda$. For all $\theta\in\calTheta_r$, 
\begin{equation*}
\norm{\bar M(\theta)-\theta^*}_2\leq \kappa\norm{\theta-\theta^*}_2. 
\end{equation*}
If $\kappa<1$, the population version algorithm converges linearly to the true parameter $\theta^*$. 
\end{enumerate}
\end{theorem}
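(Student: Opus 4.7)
The plan is to follow the two-step template of \citep{balakrishnan2017statistical}: first upgrade Theorem~\ref{thm:existence} to a smoothness statement for the map $\theta\mapsto\nabla_1\bar Q(\cdot\,|\,\theta)$, giving first-order stability, and then deduce contraction via the first-order optimality conditions together with Assumption~\ref{as:concavity}.

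\textbf{First-order stability.} Using the analytical expression of $\bar Q(\theta'|\theta)$ derived in Appendix~\ref{subsection:asymptoticqfun}, I would argue that $\nabla_1 \bar Q(\theta'|\theta)$ is continuously differentiable in $\theta$ on the compact set $\calTheta$. Its components are expectations, under the stationary law $\mathbb P_{\theta^*,\nu^*}$, of smoothing functionals weighted by $\nabla_{\theta'}\log\pi_b$, $\nabla_{\theta'}\log\bar\pi_{hi}$, $\nabla_{\theta'}\log\pi_{hi}$ and $\nabla_{\theta'}\log\pi_{lo}$. By Assumptions~\ref{as:nondeg} and~\ref{as:continuity} each of these log-gradients is continuous and uniformly bounded on $\calTheta$, while the smoothing kernels depend smoothly on $\theta$ with $\theta$-derivatives that can be controlled by the uniform-forgetting/Markovian-decomposition argument already used for Theorem~\ref{thm:existence}; dominated convergence justifies differentiating under the expectation. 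Setting
\[
\gamma \defeq \sup_{\theta'\in\calTheta,\;\theta\in\calTheta_r}\bigl\|\partial_\theta\nabla_1\bar Q(\theta'|\theta)\bigr\|_{\mathrm{op}},
\]
which is finite by continuity on a compact set, the mean-value inequality applied to $\tilde\theta\mapsto \nabla_1\bar Q(\bar M(\theta)|\tilde\theta)$ on the segment from $\theta^*$ to $\theta$ yields
\[
\bigl\|\nabla_1\bar Q(\bar M(\theta)|\theta)-\nabla_1\bar Q(\bar M(\theta)|\theta^*)\bigr\|_2\leq \gamma\,\|\theta-\theta^*\|_2
\]
for every $\theta\in\calTheta_r$, which is part~1.

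\textbf{Contraction.} I would first verify self-consistency: $\theta^*\in\argmax_{\theta'\in\calTheta}\bar Q(\theta'|\theta^*)$, hence $\bar M(\theta^*)=\theta^*$ by the uniqueness clause of Assumption~\ref{as:local}.1. This is the standard EM identity in the correctly specified, population setting: $\theta'\mapsto\bar Q(\theta'|\theta^*)$ differs from the asymptotic log-likelihood under $\theta'$ by a KL divergence between smoothed posteriors that is minimized at $\theta'=\theta^*$. The first-order optimality conditions on the convex set $\calTheta$ then give
\[
\langle \nabla_1\bar Q(\theta^*|\theta^*),\;\bar M(\theta)-\theta^*\rangle\leq 0, \qquad \langle \nabla_1\bar Q(\bar M(\theta)|\theta),\;\bar M(\theta)-\theta^*\rangle\geq 0,
\]
and the gradient-monotonicity consequence of Assumption~\ref{as:concavity} reads
\[
\langle \nabla_1\bar Q(\bar M(\theta)|\theta^*)-\nabla_1\bar Q(\theta^*|\theta^*),\;\bar M(\theta)-\theta^*\rangle \leq -\lambda\|\bar M(\theta)-\theta^*\|_2^2.
\]
Chaining these three inequalities, the $\nabla_1\bar Q(\theta^*|\theta^*)$ term cancels and I obtain
\[
\lambda\|\bar M(\theta)-\theta^*\|_2^2 \leq \langle \nabla_1\bar Q(\bar M(\theta)|\theta)-\nabla_1\bar Q(\bar M(\theta)|\theta^*),\;\bar M(\theta)-\theta^*\rangle.
\]
Cauchy--Schwarz combined with part~1 delivers $\lambda\|\bar M(\theta)-\theta^*\|_2\leq\gamma\|\theta-\theta^*\|_2$, i.e., the contraction with $\kappa=\gamma/\lambda$. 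When $\kappa<1$ the iterates remain in $\calTheta_r$ because each successive distance to $\theta^*$ strictly shrinks, so linear convergence follows by induction.

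\textbf{Main obstacle.} Everything past first-order stability is essentially one page of convex analysis. The substantive difficulty is first-order stability itself, because the smoothing expressions inside $\bar Q$ are expectations over doubly infinite trajectories: differentiating under the expectation and bounding $\partial_\theta\nabla_1\bar Q$ require extending the uniform-forgetting/Markovian-decomposition machinery of Theorem~\ref{thm:existence} to the first $\theta$-derivatives of the smoothing kernels. Concretely, I would need to show that the series representing $\partial_\theta\nabla_1\bar Q(\theta'|\theta)$ is dominated, uniformly in $(\theta',\theta)\in\calTheta\times\calTheta_r$, by an integrable function of the sample path; this should follow from the geometric decay of the forgetting coefficients that is inherited from the mixing of the underlying Markov chain.
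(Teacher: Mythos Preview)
Your Part~2 argument is essentially identical to the paper's: self-consistency, the two first-order optimality conditions, strong concavity, and Cauchy--Schwarz combine exactly as you describe.

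For Part~1, your route is correct in principle but works harder than necessary. You propose to show that $\theta\mapsto\nabla_1\bar Q(\theta'|\theta)$ is $C^1$ and then apply the mean-value inequality, which---as you correctly flag as the main obstacle---requires pushing the forgetting machinery through to the \emph{$\theta$-derivatives} of the infinite-window smoothing distributions. The paper never does this. It simply observes that $\nabla_1\bar Q(\theta'|\theta)$ depends on $\theta$ only through $\gamma^\theta_{\infty,1}$ and $\tilde\gamma^\theta_{\infty,1}$, and these are already \emph{Lipschitz} in $\theta$ in total variation (Lemma~\ref{lemma:limitsmoothing}, which is just the $k\to\infty$ limit of the perturbed smoothing-stability bound of Lemma~\ref{lemma:fstability}). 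Since the factors $\nabla\log\pi_b$, $\nabla\log\pi_{lo}$, $\nabla\log\pi_{hi}$ are uniformly bounded on the compact $\calTheta$, pairing them with that TV Lipschitz bound gives
\[
\bigl\|\nabla_1\bar Q(\theta'|\theta)-\nabla_1\bar Q(\theta'|\theta^*)\bigr\|_2 \;\leq\; (\text{const})\cdot\sup_{\omega\in\Omega}\norm{\tilde\gamma^\theta_{\infty,1}(\omega)-\tilde\gamma^{\theta^*}_{\infty,1}(\omega)}_{\tv}\;\leq\;\gamma\,\|\theta-\theta^*\|_2
\]
directly, with an explicit constant $\bar\gamma$. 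No differentiability in $\theta$, no mean-value theorem, and no dominated-convergence step for $\partial_\theta$ are needed. What your route would buy is a clean Jacobian-norm formulation; what the paper's route buys is that all the technical work is already done once Lemma~\ref{lemma:limitsmoothing} is in hand, and it produces an explicit upper bound on $\gamma$.

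One smaller point on self-consistency: the paper's proof of $\bar M(\theta^*)=\theta^*$ (Lemma~\ref{lemma:selfconsistency}) is not quite the one-line population-EM fixed-point argument you sketch, because the algorithm's $Q$-function is built from an arbitrary prior $\mu$ on $O_0$, not the true $\nu^*$. The paper handles this by introducing the ``correct'' finite-sample $Q$-function (with prior $\nu^*$), showing it converges to the same limit $\bar Q$, and only then applying the EM monotonicity plus KL argument in expectation.
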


The proof is given in Appendix~\ref{subsection:proofpopulation}, where we also show an upper bound on $\gamma$. The idea mirrors that of \citep[Theorem~4]{balakrishnan2017statistical} with problem-specific modifications. Algorithm~\ref{algorithm} can be regarded as a perturbed form of this population version algorithm, with convergence characterized in the following theorem. 

\begin{theorem}[Performance guarantee for Algorithm~\ref{algorithm}]\label{thm:perturbed} With all the assumptions, if $\kappa<1$ we have

\begin{enumerate}[leftmargin=*]
\item For all $\Delta\in(0,(1-\kappa)r]$ and $q\in(0,1)$, there exists $\underline T(\Delta,q)\in\N_+$ such that the following statement is true. If the observation length $T\geq\underline T(\Delta,q)$, then with probability at least $1-q$, 
\begin{equation*}
\sup_{\theta\in\calTheta_r}\sup_{\mu\in\mathcal{M}}\norm{M_{\mu,T}(\theta;\omega)-\bar M(\theta)}_2\leq \Delta. 
\end{equation*}
\item If $T\geq \underline T(\Delta, q)$, Algorithm~\ref{algorithm} with any $\mu\in\mathcal{M}$ has the following performance guarantee. If $\theta^{(0)}\in\calTheta_r$, then with probability at least $1-q$, for all $n\in\N_+$,
\begin{equation*}
\norms{\theta^{(n)}-\theta^*}_2\leq \kappa^n\norms{\theta^{(0)}-\theta^*}_2+(1-\kappa)^{-1}\Delta.
\end{equation*}
\end{enumerate}
\end{theorem}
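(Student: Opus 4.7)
The strategy is to prove the two claims in sequence, using Theorem~\ref{thm:existence} and Theorem~\ref{thm:population} as the main ingredients. Claim~(1) is a uniform strong consistency statement for the finite sample update map $M_{\mu,T}(\theta;\omega)$, obtained by combining the uniform convergence from Theorem~\ref{thm:existence} with the well-separated-maximum condition in Assumption~\ref{as:local}.1. Claim~(2) then follows by a perturbed-contraction argument, since $\theta^{(n)}=M_{\mu,T}(\theta^{(n-1)};\omega)$ is a $\Delta$-perturbation of the population update $\bar M(\theta^{(n-1)})$, which contracts toward $\theta^*$ at rate $\kappa$ by the contraction part of Theorem~\ref{thm:population}.

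For Claim~(1), fix $\Delta\in(0,(1-\kappa)r]$, $q\in(0,1)$, and set $\eps:=\Delta$. By Assumption~\ref{as:local}.1, the identifiability gap
\begin{equation*}
\delta(\eps)\defeq\inf_{\theta\in\calTheta_r}\bigg[\bar Q(\bar M(\theta)|\theta)-\sup_{\theta'\in\calTheta,\,\norms{\theta'-\bar M(\theta)}_2\geq\eps}\bar Q(\theta'|\theta)\bigg]
\end{equation*}
is strictly positive. Theorem~\ref{thm:existence} yields almost sure uniform convergence of $Q_{\mu,T}(\theta'|\theta;\omega)$ to $\bar Q(\theta'|\theta)$ over $(\theta,\theta',\mu)\in\calTheta\times\calTheta\times\mathcal{M}$, so there exists $\underline T(\Delta,q)\in\N_+$ such that the event
\begin{equation*}
E_T\defeq\Big\{\sup_{\theta,\theta'\in\calTheta,\,\mu\in\mathcal{M}}\left|Q_{\mu,T}(\theta'|\theta;\omega)-\bar Q(\theta'|\theta)\right|<\delta(\eps)/3\Big\}
\end{equation*}
has probability at least $1-q$ whenever $T\geq\underline T(\Delta,q)$. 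On $E_T$, for any $\theta\in\calTheta_r$ and $\mu\in\mathcal{M}$, chaining the near-optimality $Q_{\mu,T}(M_{\mu,T}(\theta;\omega)|\theta;\omega)\geq Q_{\mu,T}(\bar M(\theta)|\theta;\omega)$ with two applications of the $\delta(\eps)/3$ bound gives $\bar Q(M_{\mu,T}(\theta;\omega)|\theta)>\bar Q(\bar M(\theta)|\theta)-2\delta(\eps)/3$, which by the definition of $\delta(\eps)$ forces $\norm{M_{\mu,T}(\theta;\omega)-\bar M(\theta)}_2<\eps=\Delta$.

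For Claim~(2), induct on $n$, with base case $\theta^{(0)}\in\calTheta_r$ by hypothesis. Work on the event of probability at least $1-q$ from Claim~(1). Assuming $\theta^{(n-1)}\in\calTheta_r$, the triangle inequality combined with the contraction in Theorem~\ref{thm:population} and Claim~(1) yields
\begin{equation*}
\norms{\theta^{(n)}-\theta^*}_2\leq\norms{\theta^{(n)}-\bar M(\theta^{(n-1)})}_2+\norms{\bar M(\theta^{(n-1)})-\theta^*}_2\leq\Delta+\kappa\norms{\theta^{(n-1)}-\theta^*}_2.
\end{equation*}
The constraint $\Delta\leq(1-\kappa)r$ ensures $\kappa r+\Delta\leq r$, so $\theta^{(n)}\in\calTheta_r$ and the induction closes. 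Iterating the recursion produces the geometric-series bound $\kappa^n\norms{\theta^{(0)}-\theta^*}_2+(1-\kappa)^{-1}\Delta$.

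The main obstacle is Claim~(1): lifting the pointwise argmax consistency to a statement uniform over $\theta\in\calTheta_r$ and $\mu\in\mathcal{M}$ on a single high-probability event. This relies essentially on both (i) the uniform-in-$(\theta,\theta',\mu)$ convergence delivered by Theorem~\ref{thm:existence}, and (ii) the uniform-in-$\theta$ identifiability gap in Assumption~\ref{as:local}.1. Without either ingredient, the perturbation $\Delta$ could not be controlled simultaneously for all iterates $\theta^{(n-1)}\in\calTheta_r$ produced by the recursion, and the population-level contraction of Theorem~\ref{thm:population} could not be transferred to the finite sample algorithm in a single probabilistic statement.
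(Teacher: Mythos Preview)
Your proof is correct and follows essentially the same approach as the paper: uniform convergence of the $Q$-function (Theorem~\ref{thm:existence}) plus the identifiability gap (Assumption~\ref{as:local}.1) for Part~1, then the perturbed-contraction induction for Part~2. The only minor technical difference is in how you pass from almost sure convergence to the high-probability bound: the paper first establishes almost sure convergence of $\sup_{\theta,\mu}\norm{M_{\mu,T}(\theta;\omega)-\bar M(\theta)}_2$ itself, then invokes dominated convergence (using compactness of $\calTheta$ for boundedness) followed by Markov's inequality, whereas you go directly from almost sure uniform convergence of $Q_{\mu,T}$ to convergence in probability and then argue on the event $E_T$---your route is slightly more economical since it avoids the DCT step.
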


The proof is provided in Appendix~\ref{subsection:proofperturbed}. Essentially, we use Theorem~\ref{thm:existence} to show the uniform (in $\theta$ and $\mu$) strong consistency of $M_{\mu,T}(\theta;\omega)$ as an estimator of $\bar M(\theta)$, following the standard analysis of \emph{M-estimators}. A direct corollary of this argument is the high probability bound on the difference between $M_{\mu,T}(\theta;\omega)$ and $\bar M(\theta)$, as shown in the first part of the theorem. Combining this high probability bound with Theorem~\ref{thm:population} and \citep[Theorem~5]{balakrishnan2017statistical} yields the final performance guarantee. 

Theorem~\ref{thm:perturbed} has two practical implications. First, under regularity conditions, with large enough $T$, Algorithm~\ref{algorithm} can converge with arbitrarily high probability to an arbitrarily small norm ball around the true parameter. In other words, with enough samples, the EM approach can recover the true parameter of the expert policy arbitrarily well. Second, the estimation error (upper bound) decreases exponentially in the initial phase of the algorithm. In this regard, a practitioner can allocate his computational budget accordingly.

One limitation of our analysis is that the condition $\kappa<1$ is hard to verify for a practical parameterization of the expert policy. This is typical in the theory of EM algorithms: even in the case of i.i.d. samples, characterizing the contraction coefficient is intractable except for a few simple parametric models. Nonetheless, such a condition strengthens our intuition on when the EM approach to HIL works: $\bar Q(\theta'|\theta)$ should have a large curvature with respect to $\theta'$, and the function should not change much with respect to $\theta$ around $\theta^*$. In the next section, we present a numerical example to qualitatively demonstrate our result. 

\section{Numerical example}\label{section:experiment}

In this section, we qualitatively demonstrate our theoretical result through an example. Here, we value clarity over completeness, therefore large-scale experiments are deferred to future works. 

\begin{wrapfigure}{R}{0.5\textwidth}
\centering
\includegraphics[width=220pt]{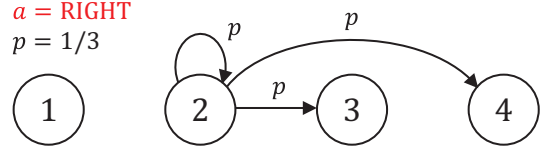}
\caption{The state transition structure.}\label{figure:MDP}
\end{wrapfigure}

Consider the Markov Decision Process (MDP) illustrated in Figure~\ref{figure:MDP}. There are four states, numbered from left to right as 1 to 4. At any state $s_t\in[1:4]$, there are two allowable actions: LEFT and RIGHT. If $a_t=\textrm{RIGHT}$, then the next state is sampled uniformly from the states on the right of state $s_t$ (including $s_t$ itself). Symmetrically, if $a_t=\textrm{LEFT}$, then the next state is sampled uniformly from the states on the left of state $s_t$ (including $s_t$). 

Suppose an expert applies the following options with failure policy with parameters $(\theta^*_{hi},\theta^*_{lo},\theta^*_b)=(0.6,0.7,0.8)$ and $\zeta=0.1$. The option space has two elements: LEFTEND and RIGHTEND. $\pi_{hi}(o_t=\textrm{LEFTEND}|s_t;\theta_{hi})$ equals $\theta_{hi}$ if $s_t=1,2$, and $1-\theta_{hi}$ if $s_t=3,4$. For all $s_t$, $\pi_{lo}(a_t=\textrm{LEFT}|s_t,o_t=\textrm{LEFTEND};\theta_{lo})=\pi_{lo}(a_t=\textrm{RIGHT}|s_t,o_t=\textrm{RIGHTEND};\theta_{lo})=\theta_{lo}$. $\pi_{b}(b_t=1|s_t,o_t=\textrm{LEFTEND};\theta_{b})$ equals $\theta_b$ if $s_t=1$, and $1-\theta_b$ otherwise. Symmetrically, $\pi_{b}(b_t=1|s_t,o_t=\textrm{RIGHTEND};\theta_{b})$ equals $\theta_b$ if $s_t=4$, and $1-\theta_b$ otherwise. Intuitively, the high level policy directs the agent to states 1 and 4, and the option terminates with high probability when the corresponding target state is reached. 

In our experiment, the parameter spaces $\calTheta_{hi}$, $\calTheta_{lo}$ and $\calTheta_{b}$ are all equal to the interval $[0.1,0.9]$. The initial parameter estimate $(\theta^{(0)}_{hi},\theta^{(0)}_{lo},\theta^{(0)}_b)=(0.5,0.6,0.7)$. For all $s_1$, $\mu(o_0=\textrm{RIGHTEND}|s_1)=1$. 

We investigate the behavior of $\norms{\theta^{(n)}-\theta^*}_2$ as a random variable dependent on $n$ and $T$. 50 sample paths of length $T$ are sampled from (approximately) the stationary Markov chain induced by the expert policy, with $T\in\{5000,8000,10000\}$. After running Algorithm~\ref{algorithm} with any sample path $\omega$ and any $T$, we obtain a sequence $\{\norms{\theta^{(n)}-\theta^*}_2;\omega,T\}_{n\in[0:N]}$. Let $err(n,T)$ be the average of $\norms{\theta^{(n)}-\theta^*}_2$ for fixed $n$ and $T$, over the 50 sample paths. The result is shown in Figure~\ref{figure:exp_1}.

\begin{figure}[ht]
    \centering
    \includegraphics[width=450pt]{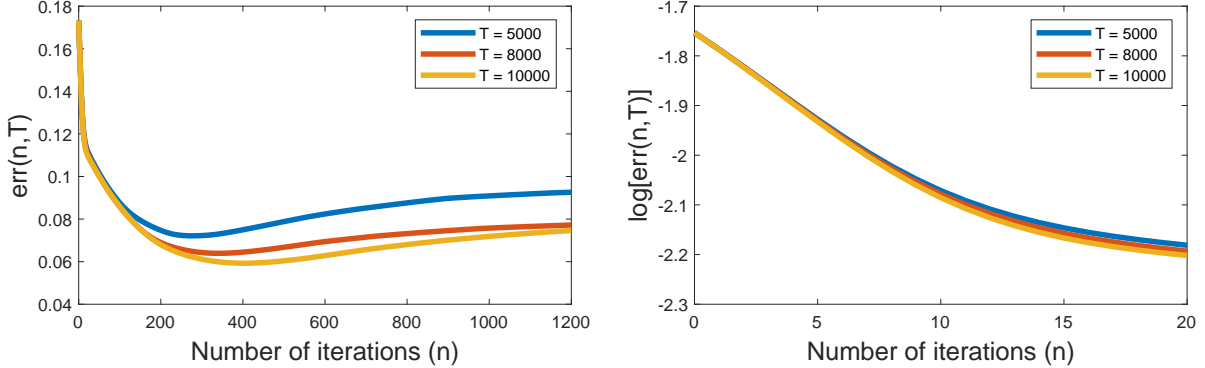}
    \caption[]{Plots of $err(n,T)$ and $\log err(n,T)$ with varying $n$ and $T$.}\label{figure:exp_1}
\end{figure}

Assumption~\ref{as:nondeg}, \ref{as:continuity}, \ref{as:initial} and \ref{as:local}.2 hold in this example, and we speculate that Assumption~\ref{as:concavity} and \ref{as:local}.1 hold as well. The condition $\kappa<1$ cannot be verified, but the empirical result exhibits patterns consistent with the performance guarantee, even though rigorously Theorem~\ref{thm:perturbed} is not applicable. First, $err(n,T)$ decreases exponentially in the early phase of the algorithm. Second, as $T$ increases, Algorithm~\ref{algorithm} achieves better performance. 

An observation is worth mentioning as a separate note: for $n>300$, $err(n,T)$ first slightly increases, then levels off. This is due to the parameter estimate on some sample paths converging to bad stationary points of the finite sample likelihood function, which suggests that early stopping could be helpful in practice. Omitted details and additional experiments are provided in Appendix~\ref{section:additional}, where we also investigate, for example, the effect of $\mu$ and random initialization on the performance of Algorithm~\ref{algorithm}. 

\section{Conclusions}

In this paper, we investigate the EM approach to HIL from a theoretical perspective. We prove that under regularity conditions, the proposed algorithm converges with high probability to a norm ball around the true parameter. To our knowledge, this is the first performance guarantee for an HIL algorithm that only observes primitive state-action pairs. Future works could further investigate the practical performance of this approach, especially its scalability in complicated environments. 

\subsection*{Acknowledgements}
We thank the anonymous reviewers for their constructive comments. Z.Z. thanks Tianrui Chen for helpful discussions. The research was partially supported by the NSF under grants DMS-1664644,
CNS-1645681, and IIS-1914792, by the ONR under grant N00014-19-1-2571, by the NIH
under grants R01 GM135930 and UL54 TR004130, and by the DOE under grant
DE-AR-0001282. 

\bibliography{Provable_HIL}

\newpage
\section*{Appendix}
\appendix

\paragraph{Organization.}
Appendix~\ref{section:proof1} presents discussions that motivate Assumption~\ref{as:initial}. In particular, we show that Assumption~\ref{as:initial} approximately holds in a particular class of environment. Appendix~\ref{section:techarg} provides details on Algorithm~\ref{algorithm}, including the comparison with the existing algorithm from \citep{daniel2016probabilistic}, the forward-backward implementation and the derivation of the $Q$-function from (\ref{eq:Q}). In Appendix~\ref{section:guaranteedetailed}, we prove our theoretical results from Section~\ref{section:guarantee}. Technical lemmas involved in the proofs are deferred to Appendix~\ref{section:technical}. Finally, Appendix~\ref{section:additional} presents details of our numerical example omitted from Section~\ref{section:experiment}. 

\section{Discussion on Assumption~\ref{as:initial}}\label{section:proof1}

In this section we justify Assumption~\ref{as:initial} in a particular class of environment. Consider the stochastic process $\{X_t;\theta\}_{t=1}^\infty=\{S_t,A_t,O_t,B_t;\theta\}_{t=1}^\infty$ generated by any $(o_0,s_1)$ and an options with failure hierarchical policy with parameter $\theta$. It is a Markov chain with its transition kernel parameterized by $\theta$, and its state space $\mathcal{X}=\mathcal S\times \mathcal A\times\mathcal O\times \{0,1\}$ is finite. Denote its one step transition kernel as $Q_\theta$ and its $t$ step transition kernel as $Q^t_\theta$. In the following, we show that $\{X_t;\theta\}_{t=1}^\infty$ is uniformly ergodic when the environment meets the reachability assumption: $\forall s_t,s_{t+1}\in \cals$, there exists $a_t\in \cala$ such that $P(s_{t+1}|s_t,a_t)>0$.

\begin{proposition}[Ergodicity]\label{lemma:ergo}
With Assumption~\ref{as:nondeg}, \ref{as:continuity} and the reachability assumption stated above, for all $\theta\in\calTheta$, a Markov chain with transition kernel $Q_\theta$ has a unique stationary distribution $\nu_\theta$. There exist constants $\alpha\in(0,1)$ and $C>0$ such that for all $\theta\in\calTheta$ and $t\in\mathbb N_+$,
\begin{equation*}
\sup_{\theta\in\calTheta}\max_{x\in\mathcal{X}}\norm{Q_\theta^t(x,\cdot)-\nu_\theta}_\tv\leq C\alpha^t.
\end{equation*}
\end{proposition}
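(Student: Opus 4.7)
The plan is to establish a uniform Doeblin-type minorization condition: there exist $n\in\N_+$ and $\epsilon>0$ such that $Q_\theta^n(x,y)\geq \epsilon$ for all $\theta\in\calTheta$ and all $x,y\in\mathcal{X}$. Given such a bound, existence and uniqueness of the stationary distribution $\nu_\theta$ together with uniform geometric contraction follow from a standard coupling argument: the minorization yields $\norm{Q_\theta^{kn}(x,\cdot)-\nu_\theta}_\tv\leq (1-\epsilon|\mathcal{X}|)^k$, and writing an arbitrary time as $t=kn+r$ with $0\leq r<n$ delivers the claimed bound with $\alpha=(1-\epsilon|\mathcal{X}|)^{1/n}$ and an appropriate $C$.

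I would prove the minorization with $n=2$. The subtlety is that the action is embedded into the Markov state $x=(s,a,o,b)$, so one-step reachability in the underlying MDP does not directly yield one-step positivity of $Q_\theta(x,y)$: the current action in $x$ constrains which next states $s_1$ are accessible, and $n=1$ need not suffice. I would argue via an intermediate state. Starting from $x$, after one step the chain reaches any $(s_1,a_1,o_1,b_1)$ with positive probability whenever $s_1$ lies in the (nonempty) support of $P(\cdot|s,a)$, with $(a_1,o_1,b_1)$ arbitrary; here I use that $\pi_b$ and $\pi_{lo}$ are strictly positive by Assumption~\ref{as:nondeg}, and that $\bar\pi_{hi}(o_1|s_1,o,b_1;\theta_{hi})$ is bounded below by $\min(\pi_{hi},\zeta/|\calo|)>0$ irrespective of $b_1$. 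For the second leg, given a target $y=(s',a',o',b')$, fix any $s_1$ in the above support and use the reachability assumption to select $a_1$ with $P(s'|s_1,a_1)>0$; the remaining coordinates $b',o',a'$ are then attained with positive probability via the same strict positivity. Multiplying these factors shows $Q_\theta^2(x,y)>0$ for every $(\theta,x,y)$.

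To pass from pointwise positivity to a uniform lower bound in $\theta$, I would appeal to Assumption~\ref{as:continuity} and compactness of $\calTheta$. The entries of $Q_\theta$ depend continuously on $\theta$, hence so do those of $Q_\theta^2$. Since $\mathcal{X}$ is finite, the map $\theta\mapsto\min_{x,y}Q_\theta^2(x,y)$ is a finite minimum of continuous, strictly positive functions on the compact set $\calTheta$, and therefore attains a positive minimum $\epsilon:=\inf_{\theta\in\calTheta}\min_{x,y}Q_\theta^2(x,y)>0$. This establishes the uniform Doeblin bound and completes the proof. The main obstacle is the careful bookkeeping in the two-step reachability argument, in particular verifying that the failure mechanism ($\zeta>0$) and the strict positivity of the policies interact correctly with the MDP reachability; once past that point, the argument is a routine combination of finite Markov chain ergodicity with continuity and compactness.
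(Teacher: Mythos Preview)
Your proposal is correct and follows essentially the same route as the paper: establish that $Q_\theta^2$ is everywhere positive, use continuity in $\theta$ together with compactness of $\calTheta$ and finiteness of $\mathcal{X}$ to upgrade this to a uniform lower bound $\delta>0$, and then invoke a standard Doeblin/coupling contraction to obtain uniform geometric ergodicity with $\alpha=(1-\delta)^{1/2}$.

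One small difference is worth flagging. Your two-step argument asserts that \emph{every} intermediate state $(s_1,a_1,o_1,b_1)$ with $s_1$ in the support of $P(\cdot|s,a)$ is reachable in one step, and for this you invoke the failure mechanism via the lower bound $\bar\pi_{hi}\geq \zeta/|\calo|$ when $b_1=0$. The paper instead argues positivity of $Q_\theta^2(x,\tilde x)$ without requiring all intermediate $(o_1,b_1)$ to be reachable: it conditions on $(S_{t+1},A_{t+1})$ only and observes that the remaining conditional probability of $(\tilde b,\tilde o,\tilde a)$ is positive because the $b_{t+1}=1$ branch already yields $\pi_{hi}(o_{t+1}|\cdot)>0$ for every $o_{t+1}$. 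Consequently the paper's proof does not use $\zeta>0$ at all and therefore also covers the standard options framework, whereas your argument as written is specific to the options-with-failure model. Both are valid here; the paper's version just buys a bit more generality.
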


\begin{proof}[Proof of Proposition~\ref{lemma:ergo}]
We start by analyzing the irreducibility of the Markov chain $\{X_t;\theta\}_{t=1}^\infty$ with any $\theta$. Denote the probability measure on the natural filtered space as $\mathbb P_X$. The dependency on $\theta$ is dropped for a cleaner notation, since the following proof holds for all $\theta\in\calTheta$. For any $x,\tilde x\in\mathcal{X}$, let $x=(s,a,o,b)$ and $\tilde x=(\tilde s,\tilde a,\tilde o,\tilde b)$. For any time $t$, 
\begin{multline*}
\mathbb P_X(X_{t+2}=\tilde x|X_t=x)=\sum_{\bar s\in\mathcal S,\bar a\in\mathcal A}\mathbb P_X(X_{t+2}=\tilde x|X_t=x,S_{t+1}=\bar s,A_{t+1}=\bar a)\\
\times\mathbb P_X(S_{t+1}=\bar s,A_{t+1}=\bar a|X_t=x).
\end{multline*}

From Assumption~\ref{as:nondeg}, there exists a state $\bar s$ such that $\forall \bar a\in\mathcal A$, $\mathbb P_X(S_{t+1}=\bar s,A_{t+1}=\bar a|X_t=x)>0$. Consider the first factor in the sum, 
\begin{multline*}
\mathbb P_X(X_{t+2}=\tilde x|X_t=x,S_{t+1}=\bar s,A_{t+1}=\bar a)=\mathbb P_X(S_{t+2}=\tilde s|S_{t+1}=\bar s,A_{t+1}=\bar a)\\
\times \mathbb P_X(B_{t+2}=\tilde b,O_{t+2}=\tilde o,A_{t+2}=\tilde a|X_t=x,S_{t+1}=\bar s,A_{t+1}=\bar a,S_{t+2}=\tilde s). 
\end{multline*}
From Assumption~\ref{as:nondeg}, the second term on the RHS is positive for all $\bar s\in\cals$ and $\bar a\in\cala$. From the reachability assumption, for any $\bar s$ there exists an action $\bar a$ such that $\mathbb P_X(S_{t+2}=\tilde s|S_{t+1}=\bar s,A_{t+1}=\bar a)>0$. As a result, for any $x,\tilde x\in\mathcal{X}$, $\mathbb P_X(X_{t+2}=\tilde x|X_t=x)>0$, and the considered Markov chain is irreducible. 

As shown above, for all $\theta\in\calTheta$, $\min_{x,\tilde x\in\mathcal X}Q^2_\theta(x,\tilde x)>0$ where $Q^2_\theta$ is the two step transition kernel of the Markov chain $\{X_t;\theta\}_{t=1}^\infty$. Due to Assumption~\ref{as:continuity}, $\min_{x,\tilde x\in\mathcal X}Q^2_\theta(x,\tilde x)$ is continuous with respect to $\theta$. Moreover, since $\calTheta$ is compact, if we let $\delta=\inf_{\theta\in\calTheta}\min_{x,\tilde x\in\mathcal X}Q^2_\theta(x,\tilde x)$ we have $\delta>0$. The classical Doeblin-type condition can be constructed as follows. For all $\theta\in\calTheta$ and $x,\tilde x\in\mathcal X$, with any probability measure $\nu$ over the finite sample space $\mathcal X$, 
\begin{equation}
Q^2_\theta(x,\tilde x)\geq \delta\nu(\tilde x).\label{eq:doeblin}
\end{equation}

A Markov chain convergence result is restated in the following lemma, tailored to our need. 
\begin{lemma}[\citep{cappe2006inference}, Theorem~4.3.16 restated]
With the Doeblin-type condition in (\ref{eq:doeblin}), the Markov chain $\{X_t;\theta\}_{t=1}^\infty$ with any $\theta\in\calTheta$ has a unique stationary distribution $\nu_\theta$. Moreover, for all $\theta\in\calTheta$, $x\in\mathcal X$ and $t\in\N_+$, 
\begin{equation*}
\norm{Q^t_\theta(x,\cdot)-\nu_\theta}_\tv\leq(1-\delta)^{\floor{t/2}}.
\end{equation*}
\end{lemma}

Letting $C=(1-\delta)^{-1}$ and $\alpha=(1-\delta)^{1/2}$, we have
\begin{equation*}
\sup_{\theta\in\calTheta}\max_{x_1\in\mathcal{X}}\norm{Q_\theta^t(x_1,\cdot)-\nu_\theta}_\tv\leq(1-\delta)^{\floor{t/2}}\leq C\alpha^t.\qedhere
\end{equation*}
\end{proof}

Proposition~\ref{lemma:ergo} shows that in $\{X_t;\theta\}_{t=1}^\infty$, the initial distribution (of $X_1$) is not very important since the distribution of $X_t$ converges to $\nu_\theta$ uniformly with respect to $X_1$ and $\theta$. As a result, $\{O_{t-1},S_t\}_{t=1}^\infty$ also converges to the unique limiting distribution, regardless of the initial distribution. When sampling the observation sequence from the expert, we can always start sampling late enough such that Assumption~\ref{as:initial} is approximately satisfied. Note that the proof of Proposition~\ref{lemma:ergo} does not use the failure mechanism imposed on the hierarchical policy, implying that the result also holds for the standard options framework.

\section{Details of the algorithm}\label{section:techarg}

\subsection{An error in the existing algorithm}\label{subsection:problem}

First, we point out a technicality when comparing Algorithm~\ref{algorithm} to the algorithm from \citep{daniel2016probabilistic}. The algorithm from \citep{daniel2016probabilistic} learns a hierarchical policy following the standard options framework, not the options with failure framework considered in Algorithm~\ref{algorithm}. To draw direct comparison, we need to let $\zeta=0$ in Algorithm~\ref{algorithm}. However, an error in the existing algorithm can be demonstrated without referring to $\zeta$. 

For simplicity, consider $O_0$ fixed to $o_0\in\calo$; let $2\leq t\leq T-1$. Then, according to the definitions in \citep{daniel2016probabilistic}, the (unnormalized) forward message is defined as
\begin{equation*}
\alpha^\theta_{t}(o_t,b_t)=\P_{\theta,o_0,s_1}(A_{1:t}=a_{1:t},O_t=o_t,B_t=b_t|S_{2:t}=s_{2:t}).
\end{equation*}
The (unnormalized) backward message is defined as
\begin{equation*}
\beta^\theta_{t|T}(o_t,b_t)=\P_{\theta,o_0,s_1}(A_{t+1:T}=a_{t+1:T}|S_{t+1:T}=s_{t+1:T},O_t=o_t,B_t=b_t).
\end{equation*}
The smoothing distribution is defined as
\begin{equation*}
\gamma^{\theta}_{t|T}(o_t,b_t)= \mathbb P_{\theta,o_0,s_1}(O_t=o_t,B_t=b_t|S_{2:T}=s_{2:T},A_{1:T}=a_{1:T}).
\end{equation*}

We use the proportional symbol $\propto$ to represent normalizing constants independent of $o_t$ and $b_t$. \citep{daniel2016probabilistic} claims that, for any $o_t$ and $b_t$, 
\begin{equation*}
\gamma^{\theta}_{t|T}(o_t,b_t)\propto \alpha^\theta_{t}(o_t,b_t)\beta^\theta_{t|T}(o_t,b_t).
\end{equation*}
However, applying Bayes' formula, it follows that
\begin{equation*}
\gamma^{\theta}_{t|T}(o_t,b_t)\propto \mathbb P_{\theta,o_0,s_1}(A_{1:T}=a_{1:T}|S_{2:T}=s_{2:T},O_t=o_t,B_t=b_t)\mathbb P_{\theta,o_0,s_1}(O_t=o_t,B_t=b_t|S_{2:T}=s_{2:T}).
\end{equation*}
Using the Markov property, 
\begin{multline*}
\mathbb P_{\theta,o_0,s_1}(A_{1:T}=a_{1:T}|S_{2:T}=s_{2:T},O_t=o_t,B_t=b_t)=\\\mathbb P_{\theta,o_0,s_1}(A_{1:t}=a_{1:t}|S_{2:T}=s_{2:T},O_t=o_t,B_t=b_t)\\\times\mathbb P_{\theta,o_0,s_1}(A_{t+1:T}=a_{t+1:T}|S_{2:T}=s_{2:T},O_t=o_t,B_t=b_t).
\end{multline*}
Therefore, 
\begin{equation*}
\gamma^{\theta}_{t|T}(o_t,b_t)\propto \mathbb P_{\theta,o_0,s_1}(A_{1:t}=a_{1:t},O_t=o_t,B_t=b_t|S_{2:T}=s_{2:T})\beta^\theta_{t|T}(o_t,b_t).
\end{equation*}
Applying Bayes' formula again, it follows that
\begin{align*}
&\mathbb P_{\theta,o_0,s_1}(A_{1:t}=a_{1:t},O_t=o_t,B_t=b_t|S_{2:T}=s_{2:T})\\
\propto~ &\mathbb P_{\theta,o_0,s_1}(A_{1:t}=a_{1:t},O_t=o_t,B_t=b_t|S_{2:t}=s_{2:t})\\
&\hspace{6em}\times\mathbb P_{\theta,o_0,s_1}(S_{t+1:T}=s_{t+1:T}|S_{2:t}=s_{2:t},A_{1:t}=a_{1:t},O_t=o_t,B_t=b_t)\\
=~&\alpha^\theta_{t}(o_t,b_t)\mathbb P_{\theta,o_0,s_1}(S_{t+1:T}=s_{t+1:T}|S_{t}=s_{t},A_{t}=a_{t},O_t=o_t,B_t=b_t).
\end{align*}

For the claim in \citep{daniel2016probabilistic} to be true, $\mathbb P_{\theta,o_0,s_1}(S_{t+1:T}=s_{t+1:T}|S_{t}=s_{t},A_{t}=a_{t},O_t=o_t,B_t=b_t)$ should not depend on $o_t$ and $b_t$. Clearly this requirement does not hold in most cases, since the likelihood of the future observation sequence should depend on the currently applied option.

\subsection{Proof of Theorem~\ref{thm:fb}}\label{subsection:proofthm2}

We drop the dependency on $\theta$, since the following proof holds for all $\theta\in\calTheta$. The proportional symbol $\propto$ is used to replace a multiplier term that depends on the context.

\vspace{1em}\noindent
1. (Forward recursion)

\noindent First consider any fixed $o_0$. For a cleaner notation, we use $p$ as an abbreviation of $\mathbb P_{\theta,o_0,s_1}$. Let $H_1$, $H_2$ be any two subsets of $\{S_t,A_t,O_t,B_t\}_{t=1}^T$, and let $h_1$, $h_2$ be the sets of values generated from $H_1$ and $H_2$, respectively, such that the uppercase symbols are replaced by the lowercase symbols. ($H_1$ and $H_2$ are two sets of random variables; $h_1$ and $h_2$ are two sets of values of random variables.) Then, for all $(o_0,s_1)$, $p$ is defined as
\begin{equation*}
p(h_1|h_2,o_0,s_1)\defeq\mathbb P_{\theta,o_0,s_1}(H_1=h_1|H_2=h_2).
\end{equation*}
If the RHS does not depend on $o_0$ and $s_1$, we can omit it on the LHS by using $p(h_1|h_2)$. $\forall t\in[2:T]$, 
\begin{align*}
&p(s_{2:t},a_{1:t},o_t,b_t|o_0,s_1)\\
=~&p(s_{2:t},a_{1:t-1},o_t,b_t|o_0,s_1)\pi_{lo}(a_t|s_t,o_t)\\
=~&\sum_{o_{t-1}}p(s_{2:t},a_{1:t-1},o_t,b_t,o_{t-1}|o_0,s_1)\pi_{lo}(a_t|s_t,o_t)\\
=~&\sum_{o_{t-1}}p(s_{2:t},a_{1:t-1},o_{t-1}|o_0,s_1)\pi_b(b_t|s_t,o_{t-1})\bar\pi_{hi}(o_t|s_t,o_{t-1},b_t)\pi_{lo}(a_t|s_t,o_t).
\end{align*}
Furthermore, 
\begin{align*}
p(s_{2:t},a_{1:t-1},o_{t-1}|o_0,s_1)&=p(s_{2:t-1},a_{1:t-1},o_{t-1}|o_0,s_1)P(s_t|s_{t-1},a_{t-1})\\
&\propto \sum_{b_{t-1}}p(s_{2:t-1},a_{1:t-1},o_{t-1},b_{t-1}|o_0,s_1),
\end{align*}
where $\propto$ replaces a multiplier that does not depend on $o_{t-1}$. Taking expectation with respect to $O_0$ gives the desirable forward recursion result. For the case of $t=1$, the proof is analogous.

\vspace{1em}\noindent
2. (Backward recursion) 

\noindent For any $o_0$, $\forall t\in[1:T-1]$, 
\begin{align*}
\beta^\theta_{t|T}(o_t,b_t)&\propto p(s_{t+1:T},a_{t+1:T}|s_t,a_t,o_t,b_t)\\
&= p(s_{t+2:T},a_{t+1:T}|s_{t+1},o_t)P(s_{t+1}|s_t,a_t)\\
&\propto\sum_{o_{t+1},b_{t+1}}p(s_{t+2:T},a_{t+1:T}|s_{t+1},o_t,o_{t+1},b_{t+1})p(o_{t+1},b_{t+1}|s_{t+1},o_{t}),
\end{align*}
where the multipliers replaced by $\propto$ are independent of $o_t$ and $b_t$. Moreover, 
\begin{align*}
&p(s_{t+2:T},a_{t+1:T}|s_{t+1},o_t,o_{t+1},b_{t+1})\\
=~&p(s_{t+2:T},a_{t+2:T}|s_{t+1},o_t,o_{t+1},b_{t+1},a_{t+1})p(a_{t+1}|s_{t+1},o_t,o_{t+1},b_{t+1})\\
=~&\beta^\theta_{t+1|T}(o_{t+1},b_{t+1})p(a_{t+1}|s_{t+1},o_t,o_{t+1},b_{t+1}). 
\end{align*}
Plugging in the structure of the policy gives the desirable result. 

\vspace{1em}\noindent
3. (Smoothing) 

\noindent Consider any fixed $o_0$. For any $t\in[2:T]$, 
\begin{align*}
p(s_{2:T},a_{1:T},o_t,b_t|o_0,s_1)&=p(s_{2:t},a_{1:t},o_t,b_t|o_0,s_1)p(s_{t+1:T},a_{t+1:T}|s_{1:t},a_{1:t},o_t,b_t,o_0)\\
&=p(s_{2:t},a_{1:t},o_t,b_t|o_0,s_1)p(s_{t+1:T},a_{t+1:T}|s_t,a_t,o_t,b_t).
\end{align*}
Taking expectation with respect to $O_0$ on both sides yields the desirable result. Notice that the second term on the RHS does not depend on $O_0$, therefore is not involved in the expectation. For the case of $t=1$ the proof is analogous. 

\vspace{1em}\noindent
4. (Two-step smoothing) 

\noindent For any $t\in[3:T]$, consider any fixed $o_0$, 
\begin{align*}
&p(s_{2:T},a_{1:T},o_{t-1},b_t|o_0,s_1)\\
=~&\sum_{b_{t-1}}p(s_{2:T},a_{1:T},o_{t-1},b_t,b_{t-1}|o_0,s_1)\\
=~&\sum_{b_{t-1}}p(s_{2:t-1},a_{1:t-1},o_{t-1},b_{t-1}|o_0,s_1)p(s_{t:T},a_{t:T},b_t|s_{1:t-1},a_{1:t-1},o_{t-1},b_{t-1},o_0)\\
=~&\sum_{b_{t-1}}p(s_{2:t-1},a_{1:t-1},o_{t-1},b_{t-1}|o_0,s_1)P(s_t|s_{t-1},a_{t-1})p(s_{t+1:T},a_{t:T},b_t|s_{t},o_{t-1}).
\end{align*}
Take expectation with respect to $O_0$ on both sides. Notice that only the first term on the RHS depends on $o_0$. We have
\begin{align*}
&\tilde \gamma_{\mu,t|T}(o_{t-1},b_t)\\
\propto~&\sum_{b_{t-1}}\alpha_{\mu,t-1}(o_{t-1},b_{t-1})P(s_t|s_{t-1},a_{t-1})p(s_{t+1:T},a_{t:T},b_t|s_{t},o_{t-1})\\
\propto~&\pi_b(b_t|s_t,o_{t-1})p(s_{t+1:T},a_{t:T}|s_{t},b_t,o_{t-1})\sum_{b_{t-1}}\alpha_{\mu,t-1}(o_{t-1},b_{t-1})\\
=~&\pi_b(b_t|s_t,o_{t-1})\left[\sum_{o_t}p(s_{t+1:T},a_{t:T},o_t|s_{t},b_t,o_{t-1})\right]\sum_{b_{t-1}}\alpha_{\mu,t-1}(o_{t-1},b_{t-1})\\
\propto~&\pi_b(b_t|s_t,o_{t-1})\left[\sum_{o_t}\bar\pi_{hi}(o_{t}|s_{t},o_{t-1},b_{t})\pi_{lo}(a_{t}|s_{t},o_{t})\beta_{t|T}(o_t,b_t)\right]\sum_{b_{t-1}}\alpha_{\mu,t-1}(o_{t-1},b_{t-1}),
\end{align*}
where the multipliers replaced by $\propto$ are independent of $o_{t-1}$ and $b_t$. For the case of $t=2$ the proof is analogous. \qed

\subsection{Discussion on the \texorpdfstring{$Q$}{Q}-function}\label{subsection:q}

In our algorithm, as motivated by Section~\ref{section:alg}, we effectively consider the following joint distribution on the graphical model shown in Figure~\ref{figure:model}: the prior distribution of $(O_0,S_1)$ is $\hat\nu$, and the distribution of the rest of the graphical model is determined by an options with failure policy with parameters $\zeta$ and $\theta$. From the EM literature \citep{balakrishnan2017statistical,jain2017non}, the complete likelihood function is
\begin{equation*}
L(s_{1:T},a_{1:T},o_{0:T},b_{1:T};\theta)=
\hat\nu(o_0,s_1)\mathbb P_{\theta,o_0,s_1}(S_{2:T}=s_{2:T},A_{1:T}=a_{1:T},O_{1:T}=o_{1:T},B_{1:T}=b_{1:T}).
\end{equation*}
The marginal likelihood function is
\begin{equation*}
L^m(s_{1:T},a_{1:T};\theta)=
\sum_{o_{0:T},b_{1:T}}\hat\nu(o_0,s_1)\mathbb P_{\theta,o_0,s_1}(S_{2:T}=s_{2:T},A_{1:T}=a_{1:T},O_{1:T}=o_{1:T},B_{1:T}=b_{1:T}),
\end{equation*}
where the superscript $m$ means \emph{marginal}. From the definition of smoothing distributions, we can verify that $L^m(s_{1:T},a_{1:T};\theta)=(z^\theta_{\gamma,\mu})^{-1}$. 

The standard MLE approach maximizes the logarithm of the marginal likelihood function (marginal log-likelihood) with respect to $\theta$. However, such an optimization objective is hard to evaluate for time series models (e.g., HMMs and our graphical model). As an alternative, the marginal log-likelihood can be lower bounded \citep[Chap.~5.4]{jain2017non} as
\begin{equation*}
\log L^m(s_{1:T},a_{1:T};\theta')\geq \sum_{o_{0:T},b_{1:T}}\frac{L(s_{1:T},a_{1:T},o_{0:T},b_{1:T};\theta)}{L^m(s_{1:T},a_{1:T};\theta)}\log L(s_{1:T},a_{1:T},o_{0:T},b_{1:T};\theta'),
\end{equation*}
where $\theta$ on the RHS is arbitrary. The RHS is usually called the (unnormalized) $Q$-function. For our graphical model, it is denoted as $\tilde Q_{\mu,T}(\theta'|\theta)$.
\begin{multline*}
\tilde Q_{\mu,T}(\theta'|\theta)=\sum_{o_{0:T},b_{1:T}}\hat\nu(o_0,s_1)\mathbb P_{\theta,o_0,s_1}(S_{2:T}=s_{2:T},A_{1:T}=a_{1:T},O_{1:T}=o_{1:T},B_{1:T}=b_{1:T})\\
\times z^\theta_{\gamma,\mu}\log [\hat\nu(o_0,s_1)\mathbb P_{\theta',o_0,s_1}(S_{2:T}=s_{2:T},A_{1:T}=a_{1:T},O_{1:T}=o_{1:T},B_{1:T}=b_{1:T})].
\end{multline*}
The RHS is well-defined from the non-degeneracy assumption. From the classical monotonicity property of EM updates \citep[Chap.~5.7]{jain2017non}, maximizing the (unnormalized) $Q$-function $\tilde Q_{\mu,T}(\theta'|\theta)$ with respect to $\theta'$ guarantees non-negative improvement on the marginal log-likelihood. Therefore, improvements on parameter inference can be achieved via iteratively maximizing the (unnormalized) $Q$-function. 

Using the structure of the hierarchical policy, $\tilde Q_{\mu,T}$ can be rewritten as
\begin{multline*}
\tilde Q_{\mu,T}(\theta'|\theta)=\sum_{t=2}^T\sum_{o_{t-1},b_t}\tilde \gamma^{\theta}_{\mu,t|T}(o_{t-1},b_t)[\log \pi_{b}(b_t|s_t,o_{t-1};\theta'_b)]\\
+\sum_{t=1}^T\sum_{o_t,b_t}\gamma^{\theta}_{\mu,t|T}(o_t,b_t)[\log \pi_{lo}(a_t|s_t,o_t;\theta'_{lo})]+\sum_{t=1}^T\sum_{o_t}\gamma^{\theta}_{\mu,t|T}(o_t,b_t=1)[\log \pi_{hi}(o_t|s_t;\theta'_{hi})]\\
+z^\theta_{\gamma,\mu}\sum_{o_0,b_1}\mu(o_0|s_1)\mathbb P_{\theta,o_0,s_1}(S_{2:T}=s_{2:T},A_{1:T}=a_{1:T},B_1=b_1)[\log \pi_{b}(b_1|s_1,o_0;\theta'_b)]+C,
\end{multline*}
where $C$ contains terms unrelated to $\theta'$. Consider the first term on the last line, which partially captures the effect of assuming $\hat\nu$ on the parameter inference. Since this term is upper bounded by $\max_{b_1,s_1,o_0}|\log \pi_{b}(b_1|s_1,o_0;\theta'_b)|$, when $T$ is large enough this term becomes negligible. The precise argument is similar to the proof of Lemma~\ref{lemma:qdiff}. Therefore, after dropping the last line and normalizing, we arrive at our definition of the (normalized) $Q$-function in (\ref{eq:Q}). 

\section{Details of the performance guarantee}\label{section:guaranteedetailed}

\subsection{Smoothing in an extended graphical model}\label{subsection:moredef}

Before providing the proofs, we first introduce a few definitions. Consider the extended graphical model shown in Figure~\ref{figure:extended} with a parameter $k$; $k\in\mathbb N_+$. 

\begin{figure}[ht]
    \centering
    \includegraphics[width=300pt]{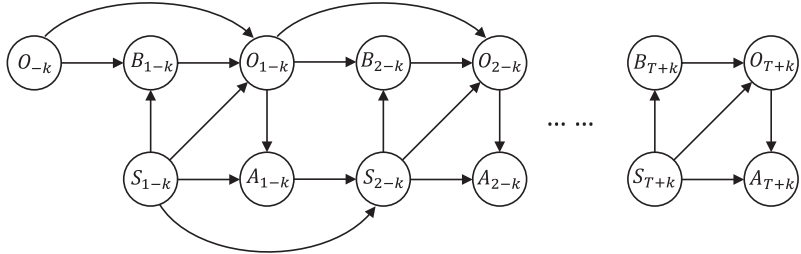}
    \caption{An extended graphical model for hierarchical imitation learning.}\label{figure:extended}
\end{figure}

Let the joint distribution of $(O_{-k},S_{1-k})$ be $\nu^*$. Define the distribution of the rest of the graphical model using an options with failure hierarchical policy with parameters $\zeta$ and $\theta$, analogous to our settings so far. With these two components, the joint distribution on the graphical model is determined. Let $\mathbb P_{\theta,k}$ be such a joint distribution; $\nu^*$ is omitted for conciseness. 

We emphasize the comparison between $\mathbb P_{\theta,k}$ and $\P_{\theta,o_0,s_1}$. The sample space of $\mathbb P_{\theta,k}$ is the domain of $\{S_{1-k:T+k},A_{1-k:T+k},O_{-k:T+k},B_{1-k:T+k}\}$, whereas the sample space of $\P_{\theta,o_0,s_1}$ is the domain of $\{S_{2:T},A_{1:T},O_{1:T},B_{1:T}\}$ since $(O_0,S_1)$ is fixed to $(o_0,s_1)$. 

Consider the infinite length observation sequence $\{s_t,a_t\}_{t\in\mathbb Z}$ corresponding to any $\omega\in\Omega$, where $\Omega$ is defined in (\ref{eq:omega}). Analogous to the non-extended model (Figure~\ref{figure:model}), we can define smoothing distributions for the extended model with any parameter $k$. For all $\theta\in\calTheta$ and $t\in[1:T]$, with any input arguments $o_t$ and $b_t$, the forward message is defined as
\begin{equation*}
\alpha^{\theta}_{k,t}(o_t,b_t)\defeq z_{\alpha,k,t}^{\theta}\mathbb P_{\theta,k}(S_{1-k:t}=s_{1-k:t},A_{1-k:t}=a_{1-k:t},O_t=o_t,B_t=b_t).
\end{equation*}
The backward message is defined as
\begin{multline*}
\beta^{\theta}_{k,t}(o_t,b_t)\defeq\\ z_{\beta,k,t}^{\theta}\mathbb P_{\theta,k}(S_{t+1:T+k}=s_{t+1:T+k},A_{t+1:T+k}=a_{t+1:T+k}|S_t=s_t,A_t=a_t,O_t=o_t,B_t=b_t).
\end{multline*}
The smoothing distribution is defined as
\begin{equation*}
\gamma^{\theta}_{k,t}(o_t,b_t)\defeq z_{\gamma,k}^{\theta}\mathbb P_{\theta,k}(S_{1-k:T+k}=s_{1-k:T+k},A_{1-k:T+k}=a_{1-k:T+k},O_t=o_t,B_t=b_t).
\end{equation*}
The two-step smoothing distribution is defined as
\begin{equation*}
\tilde \gamma^{\theta}_{k,t}(o_{t-1},b_t)\defeq z_{\gamma,k}^{\theta}\mathbb P_{\theta,k}(S_{1-k:T+k}=s_{1-k:T+k},A_{1-k:T+k}=a_{1-k:T+k},O_{t-1}=o_{t-1},B_t=b_t).
\end{equation*}
The quantities $z^\theta_{\alpha,k,t}$, $z^\theta_{\beta,k,t}$ and $z^\theta_{\gamma,k}$ are normalizing constants such that the LHS of the expressions above are probability mass functions. In particular, since $k>0$, we can define $\alpha^{\theta}_{k,t}$ for $t=0$ in the same way as $t\in[1:T]$. The dependency on $T$ in the smoothing distributions is dropped for a cleaner notation. 

Recursive results similar to Theorem~\ref{thm:fb} can be established; the proof is analogous and therefore omitted. As in Theorem~\ref{thm:fb}, we make extensive use of the proportional symbol $\propto$ which stands for, \emph{the LHS equals the RHS multiplied by a normalizing constant}. Moreover, the normalizing constant does not depend on the input arguments of the LHS. 

\begin{corollary}[Forward-backward smoothing for the extended model]\label{corollary:fb}
For all $\theta\in\calTheta$ and $k\in\N_+$, with any input arguments,

\begin{enumerate}[leftmargin=*]
\item (Forward recursion) $\forall t\in[1:T]$, 
\begin{equation}\label{eq:forwarde}
\alpha^\theta_{k,t}(o_t,b_t)\propto\sum_{o_{t-1},b_{t-1}}\pi_b(b_t|s_t,o_{t-1};\theta_b)\bar\pi_{hi}(o_t|s_t,o_{t-1},b_t;\theta_{hi})\pi_{lo}(a_t|s_t,o_t;\theta_{lo}) \alpha^\theta_{k,t-1}(o_{t-1},b_{t-1}).
\end{equation}
\item (Backward recursion) $\forall t\in[1:T-1]$, 
\begin{multline}\label{eq:backwarde}
\beta^\theta_{k,t}(o_t,b_t)\propto\sum_{o_{t+1},b_{t+1}}\pi_b(b_{t+1}|s_{t+1},o_{t};\theta_b)\bar\pi_{hi}(o_{t+1}|s_{t+1},o_{t},b_{t+1};\theta_{hi})\\ \times\pi_{lo}(a_{t+1}|s_{t+1},o_{t+1};\theta_{lo})\beta^\theta_{k,t+1}(o_{t+1},b_{t+1}).
\end{multline}
\item (Smoothing) $\forall t\in[1:T]$, 
\begin{equation}\label{eq:smoothinge}
\gamma^{\theta}_{k,t}(o_t,b_t)\propto\alpha^\theta_{k,t}(o_t,b_t)\beta^\theta_{k,t}(o_t,b_t).
\end{equation}
\item (Two-step smoothing) $\forall t\in[1:T]$, 
\begin{multline}\label{eq:smoothinge2}
\tilde \gamma^{\theta}_{k,t}(o_{t-1},b_t)\propto\pi_b(b_t|s_t,o_{t-1};\theta_{b})\bigg[\sum_{o_t}\bar\pi_{hi}(o_{t}|s_{t},o_{t-1},b_{t};\theta_{hi})\pi_{lo}(a_{t}|s_{t},o_{t};\theta_{lo})\beta^\theta_{k,t}(o_t,b_t)\bigg]\\ \times\bigg[\sum_{b_{t-1}}\alpha^\theta_{k,t-1}(o_{t-1},b_{t-1})\bigg].
\end{multline}
\end{enumerate}
\end{corollary}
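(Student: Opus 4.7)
The plan is to prove Corollary~\ref{corollary:fb} by observing that the extended graphical model has precisely the same local Markov factorization as the original model in Figure~\ref{figure:model}, so every manipulation in Appendix~\ref{subsection:proofthm2} carries over verbatim after a shift of time indices. The only structural differences are that the boundary prior is $\nu^*$ on $(O_{-k}, S_{1-k})$ rather than a Dirac mass on a fixed $(o_0, s_1)$, and the observation window is extended from $[1:T]$ to $[1-k:T+k]$. Neither affects the recursive structure: the prior enters only through the initialization of the forward message at the leftmost time step, and the observations outside $[1:T]$ are handled by the recursion in exactly the same way as the interior ones.

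First I would derive the forward recursion (\ref{eq:forwarde}). Mirroring the proof of Theorem~\ref{thm:fb}, I factor $\mathbb P_{\theta,k}(s_{1-k:t}, a_{1-k:t}, o_t, b_t)$ by summing over $(o_{t-1}, b_{t-1})$ and peeling off the $t$-th time step using the conditional factors $P(s_t|s_{t-1}, a_{t-1})$, $\pi_b$, $\bar\pi_{hi}$, and $\pi_{lo}$. The environment factor $P(s_t|s_{t-1},a_{t-1})$ does not depend on any of $(o_{t-1}, b_{t-1}, o_t, b_t)$ and is absorbed into the proportionality constant. Summing over $b_{t-1}$, which only appears in $\alpha^\theta_{k,t-1}$, then yields (\ref{eq:forwarde}). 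The backward recursion (\ref{eq:backwarde}) is derived symmetrically: conditioning on $(S_t, A_t, O_t, B_t)$ and expanding over $(O_{t+1}, B_{t+1})$, the factor $P(s_{t+1}|s_t, a_t)$ is independent of $(o_t, b_t)$ and is absorbed into the normalization.

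The smoothing identity (\ref{eq:smoothinge}) follows from the conditional independence of past and future observations given the current latent state: $\gamma^\theta_{k,t}$ factors into a past-and-present piece proportional to $\alpha^\theta_{k,t}$ and a future piece proportional to $\beta^\theta_{k,t}$. The two-step identity (\ref{eq:smoothinge2}) is obtained by conditioning instead on $(O_{t-1}, B_t)$: summing over the intermediate $O_t$ inside the backward factor (weighted by $\bar\pi_{hi} \cdot \pi_{lo}$) produces the bracketed sum over $o_t$, while summing over $B_{t-1}$ on the forward side produces the bracketed sum over $b_{t-1}$. The leading $\pi_b(b_t|s_t, o_{t-1}; \theta_b)$ appears because it is the unique conditional factor that couples $B_t$ to $(O_{t-1}, S_t)$ and is not absorbed by the intermediate sums.

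The main obstacle is purely bookkeeping: one must track exactly which conditional factors depend on the LHS arguments so that the remaining multipliers, genuinely independent of those arguments, can be safely folded into $\propto$. Since each of the four manipulations is identical to those in Appendix~\ref{subsection:proofthm2} up to a shift of time indices and the replacement of the boundary prior on $(O_0, S_1)$ by the stationary prior $\nu^*$ on $(O_{-k}, S_{1-k})$, I would record only the minor modifications rather than repeat the full derivation.
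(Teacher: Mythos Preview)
Your proposal is correct and matches the paper's own treatment: the paper simply states that the proof is analogous to that of Theorem~\ref{thm:fb} and omits it. Your observation that the extended model shares the same local Markov factorization, so the derivations in Appendix~\ref{subsection:proofthm2} carry over after shifting the time window and replacing the boundary prior, is exactly the intended argument; the only additional remark worth making is that because $k\geq 1$ the forward message $\alpha^\theta_{k,0}$ is well-defined, which is why the boundary cases at $t=1$ present in Theorem~\ref{thm:fb} disappear here and the recursions in (\ref{eq:forwarde}) and (\ref{eq:smoothinge2}) hold uniformly for all $t\in[1:T]$.
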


The following lemma characterizes the limiting behavior of $\gamma^{\theta}_{k,t}$ and $\tilde \gamma^{\theta}_{k,t}$ as $k\rightarrow\infty$. 

\begin{lemma}[Limits of smoothing distributions]\label{lemma:limitsmoothing} With Assumption~\ref{as:nondeg}, \ref{as:continuity} and \ref{as:initial}, for all $T\geq 2$, $\theta\in\calTheta$, $\omega\in\Omega$ and $t\in[1:T]$, the limits of $\{\gamma^{\theta}_{k,t}\}_{k\in\N_+}$ and $\{\tilde \gamma^{\theta}_{k,t}\}_{k\in\N_+}$ as $k\rightarrow\infty$ exist with respect to the total variation distance. Let $\gamma^{\theta}_{\infty,t}\defeq\lim_{k\rightarrow\infty}\gamma^{\theta}_{k,t}$ and $\tilde\gamma^{\theta}_{\infty,t}\defeq\lim_{k\rightarrow\infty}\tilde\gamma^{\theta}_{k,t}$. They have the following properties: 
\begin{enumerate}[leftmargin=*]
\item $\gamma^{\theta}_{\infty,t}$ and $\tilde\gamma^{\theta}_{\infty,t}$ do not depend on $T$. 
\item $\gamma^{\theta}_{\infty,t}$ and $\tilde\gamma^{\theta}_{\infty,t}$ are entry-wise Lipschitz continuous with respect to $\theta\in\calTheta$. 
\end{enumerate}
\end{lemma}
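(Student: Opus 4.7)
The plan is to establish the existence of the limits by showing that the extended forward and backward messages $\{\alpha^{\theta}_{k,t}\}_{k\in\N_+}$ and $\{\beta^{\theta}_{k,t}\}_{k\in\N_+}$ are Cauchy in total variation, and then to combine these two limits via the forward-backward smoothing formulas (\ref{eq:smoothinge}) and (\ref{eq:smoothinge2}). The engine driving the Cauchy property is a uniform forgetting property of hierarchical filters, which in turn follows from the mixing condition (Lemma~\ref{lemma:mixing}) that the options with failure framework is designed to guarantee. Concretely, I would first record a ``forgetting lemma'' of the following type: there exist $\rho\in(0,1)$ and $C>0$, independent of $\theta\in\calTheta$ and of the observation sequence in $\Omega$, such that running the forward (resp.\ backward) recursion with two different initializations but the same observations gives messages whose total variation distance contracts by at least a factor of $\rho$ every step. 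This is the standard consequence of a Doeblin-type minorization, which Lemma~\ref{lemma:mixing} supplies uniformly in $\theta$ thanks to the $\zeta>0$ failure mechanism.

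For the forward direction, I would compare $\alpha^{\theta}_{k_1,t}$ with $\alpha^{\theta}_{k_2,t}$ for $k_1<k_2$. Both can be obtained by running the recursion in (\ref{eq:forwarde}) on the same observations $s_{1-k_1:t},a_{1-k_1:t}$, but starting at time $1-k_1$ from two different ``initial'' distributions: for the $k_1$-model, the prior induced by $\nu^*$, and for the $k_2$-model, the filter at time $1-k_1$ obtained by processing the extra observations from $1-k_2$ to $-k_1$. Uniform forgetting then yields $\norm{\alpha^{\theta}_{k_2,t}-\alpha^{\theta}_{k_1,t}}_\tv\leq C\rho^{k_1+t}$, which gives the Cauchy property. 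For the backward direction, I would run (\ref{eq:backwarde}) from the boundary condition at time $T+k_2$ down to $T+k_1$, producing some effective terminal message at $T+k_1$ which generally differs from the boundary $(2|\calo|)^{-1}$ used for $k_1$; since both are normalized, their total variation distance is at most one, and propagating it backward from $T+k_1$ to $t$ contracts it by $\rho^{T+k_1-t}$, giving $\norm{\beta^{\theta}_{k_2,t}-\beta^{\theta}_{k_1,t}}_\tv\leq C\rho^{T+k_1-t}$. Completeness of the simplex on the finite set $\calo\times\{0,1\}$ then delivers $\alpha^{\theta}_{\infty,t}$ and $\beta^{\theta}_{\infty,t}$, and the smoothing formulas together with the lower bound on the partition function (also a consequence of Assumption~\ref{as:nondeg} and the mixing condition) give $\gamma^\theta_{\infty,t}$ and $\tilde\gamma^\theta_{\infty,t}$.

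The two stated properties follow from this construction. For $T$-independence: $\alpha^\theta_{k,t}$ is already independent of $T$, and by the backward argument above, $\beta^\theta_{\infty,t}$ can be realized by running the backward recursion from any finite horizon $T'+k'\to\infty$ with the same canonical boundary $(2|\calo|)^{-1}$, so reindexing $k'=k+T-T'$ shows the limit for two different values of $T$ must coincide. For Lipschitz continuity in $\theta$: for each finite $k$, the recursions (\ref{eq:forwarde})--(\ref{eq:smoothinge2}) are compositions of continuously differentiable positive functions (Assumption~\ref{as:nondeg} and \ref{as:continuity}) on the compact set $\calTheta$, and I would show, by differentiating the normalized recursions and using the contraction from the mixing step above to absorb the accumulation over time, that the entrywise Lipschitz constants of $\gamma^\theta_{k,t}$ and $\tilde\gamma^\theta_{k,t}$ are bounded uniformly in $k$. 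Equi-Lipschitz plus pointwise convergence on the compact $\calTheta$ upgrades the convergence to uniform convergence and forces the limit to inherit the same Lipschitz constant.

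The main obstacle is the uniform (in $\theta$ and in the observation sequence) forgetting bound with contraction rate strictly less than one; without the failure mechanism one can only construct such a bound in multi-step form, and keeping the contraction rate uniform across $\calTheta$ while also matching the Lipschitz differentiation in the last step requires care. All other steps, including the $T$-independence argument and the passage of Lipschitz continuity to the limit, are then essentially bookkeeping once the forgetting estimate is in place.
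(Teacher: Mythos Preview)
Your proposal is correct and rests on the same core ingredient as the paper, namely uniform forgetting derived from the Doeblin-type minorization of Lemma~\ref{lemma:mixing}. The execution, however, is organized differently. You establish Cauchy-ness separately for the forward messages $\alpha^{\theta}_{k,t}$ and backward messages $\beta^{\theta}_{k,t}$, then combine them through the smoothing formula, and finally argue Lipschitz continuity by differentiating the recursions and summing the resulting geometric series. The paper instead works directly on the smoothing distribution: it defines forward and backward \emph{smoothing operators} $K^{\theta,\theta}_{F,t}$ and $K^{\theta,\theta}_{B,t}$ acting on the normalized product $(\vphi^{\theta}\otimes\rho^{\theta})_t$, observes that these are genuine Markov kernels satisfying the same Doeblin minorization, and packages the result as a single ``smoothing stability'' lemma (Lemma~\ref{lemma:fstability}). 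Crucially, that lemma also allows the parameter in the forward and backward recursions to differ ($\hat\theta\neq\theta$), so the Lipschitz bound in $\theta$ falls out of the very same estimate by taking $k_1,k_2\to\infty$ in Lemma~\ref{lemma:smoothingdiff2}. What your modular route buys is transparency (the roles of $\alpha$ and $\beta$ are cleanly separated) at the cost of having to re-do the contraction-plus-perturbation argument a second time for the Lipschitz step; what the paper's route buys is economy, since one perturbed-contraction lemma delivers existence, $T$-independence, and Lipschitz continuity simultaneously.
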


The proof is given in Appendix~\ref{subsection:limitsmoothing}. The dependency of $\gamma^{\theta}_{\infty,t}$ and $\tilde\gamma^{\theta}_{\infty,t}$ on $\omega$ is omitted for a cleaner notation. 

\subsection{The stochastic convergence of the \texorpdfstring{$Q$}{Q}-function}\label{subsection:asymptoticqfun}

In this subsection, we present the proof of Theorem~\ref{thm:existence}.

First, consider $\gamma^{\theta}_{\infty,t}$ and $\tilde\gamma^{\theta}_{\infty,t}$ defined in Lemma~\ref{lemma:limitsmoothing}. Using the arguments from Section~\ref{section:guarantee}, they can also be analyzed in the \emph{infinitely extended} probability space $(\mathcal{X}^{\mathbb Z}, \mathcal{P}(\mathcal{X}^{\mathbb Z}), \P_{\theta^*,\nu^*})$, where $\mathcal{P}(\cdot)$ denotes the power set. We only define $\gamma^{\theta}_{\infty,t}$ and $\tilde\gamma^{\theta}_{\infty,t}$ for $\omega\in\Omega$; for other sample paths, they are defined arbitrarily. Since $\P_{\theta^*,\nu^*}(\Omega)=1$, such a restriction from $\mathcal{X}^{\mathbb Z}$ to $\Omega$ does not change our probabilistic results. 

For any sample path $\omega$, let $\omega(s_t)$ and $\omega(a_t)$ be the values of $S_t$ and $A_t$ corresponding to $\omega$. With a slight overload of notation, let $\omega(t)=\{\omega(s_t),\omega(a_t),\omega(o_t),\omega(b_t)\}$, which is the set of 
components in $\omega$ corresponding to time $t$.

For all $\theta\in\calTheta$, $\theta'\in\tilde\calTheta$, $\omega\in\Omega$ and $t\in\N_+$, define
\begin{multline*}
f_t(\theta'|\theta;\omega)\defeq\sum_{o_{t-1},b_t}\tilde \gamma^{\theta}_{\infty,t}(o_{t-1},b_t;\omega)\left[\log \pi_{b}(b_t|\omega(s_t),o_{t-1};\theta'_b)\right]+\sum_{o_t,b_t}\gamma^{\theta}_{\infty,t}(o_t,b_t;\omega)\\
\times\left[\log \pi_{lo}(\omega(a_t)|\omega(s_t),o_t;\theta'_{lo})\right]+\sum_{o_t}\gamma^{\theta}_{\infty,t}(o_t,b_t=1;\omega)\left[\log \pi_{hi}(o_t|\omega(s_t);\theta'_{hi})\right],
\end{multline*}
where the dependency of the RHS on $\omega$ is shown explicitly for clarity. $|f_t(\theta'|\theta;\omega)|$ is upper bounded by a constant that does not depend on $\theta$, $\theta'$, $\omega$ and $t$, due to Assumption~\ref{as:nondeg} and \ref{as:continuity}. Moreover, for all $\theta$, $\omega$ and $t$, $f_t(\theta'|\theta;\omega)$ is continuously differentiable with respect to $\theta'\in\tilde\calTheta$; for all $\theta'$, $\omega$ and $t$, $f_t(\theta'|\theta;\omega)$ is Lipschitz continuous with respect to $\theta\in\calTheta$, due to Lemma~\ref{lemma:limitsmoothing}. 

Next, define
\begin{equation}
\bar Q(\theta'|\theta)\defeq\E_{\theta^*,\nu^*}[f_1(\theta'|\theta;\omega)].\label{eq:qbar}
\end{equation}
The subscripts $\theta^*$ and $\nu^*$ in $\E_{\theta^*,\nu^*}$ denote that the expectation is taken with respect to the probability measure $\P_{\theta^*,\nu^*}$. 

With the above definitions, we state the complete version of Theorem~\ref{thm:existence}. The $Q$-function defined in (\ref{eq:Q}) is written as $Q_{\mu,T}(\theta'|\theta;\omega)$, showing its dependency on the sample path. 

\begin{theorem}[The complete version of Theorem~\ref{thm:existence}]\label{thm:stronger} With Assumption~\ref{as:nondeg}, \ref{as:continuity} and \ref{as:initial}, consider $\bar Q(\theta'|\theta)$ defined in (\ref{eq:qbar}), we have
\begin{enumerate}[leftmargin=*]
\item For all $\theta\in\calTheta$, $\bar Q(\theta'|\theta)$ is continuously differentiable with respect to $\theta'\in\tilde\calTheta$, where $\tilde\calTheta$ is defined in Assumption~\ref{as:nondeg}. The gradient is
\begin{equation*}
\nabla\bar Q(\theta'|\theta)=\E_{\theta^*, \nu^*}[\nabla f_1(\theta'|\theta;\omega)].
\end{equation*}
Moreover, as the set of maximizing arguments, $\argmax_{\theta'\in\calTheta}\bar Q(\theta'|\theta)$ is nonempty. 
\item As $T\rightarrow\infty$, 
\begin{equation*}
\sup_{\theta,\theta'\in\calTheta}\sup_{\mu\in\mathcal{M}}\left|Q_{\mu,T}(\theta'|\theta;\omega)-\bar Q(\theta'|\theta)\right|\rightarrow 0,~P_{\theta^*,\nu^*}\text{-a.s.}
\end{equation*}
\end{enumerate}
\end{theorem}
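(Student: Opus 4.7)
The plan is to handle the two parts separately. For part~1, Assumptions~\ref{as:nondeg} and \ref{as:continuity} imply that $\nabla_{\theta'}f_1(\theta'|\theta;\omega)$ is continuous in $\theta'\in\tilde\calTheta$ and uniformly bounded in $(\theta,\theta',\omega)$; a dominated convergence / Leibniz-rule argument then justifies differentiation under the expectation in $\bar Q(\theta'|\theta)=\E_{\theta^*,\nu^*}[f_1(\theta'|\theta;\omega)]$, giving $\nabla\bar Q(\theta'|\theta)=\E_{\theta^*,\nu^*}[\nabla f_1(\theta'|\theta;\omega)]$ together with continuity of this gradient. Nonemptyness of $\argmax_{\theta'\in\calTheta}\bar Q(\theta'|\theta)$ then follows from continuity of $\bar Q(\cdot|\theta)$ and compactness of $\calTheta$.

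For part~2, I would go through a two-step approximation. Introduce the intermediate quantity $\tilde Q_T(\theta'|\theta;\omega)\defeq T^{-1}\sum_{t=1}^T f_t(\theta'|\theta;\omega)$ built from the limiting smoothing distributions $\gamma^\theta_{\infty,t}$ and $\tilde\gamma^\theta_{\infty,t}$ of Lemma~\ref{lemma:limitsmoothing}, and bound $Q_{\mu,T}-\bar Q$ via the triangle inequality through $\tilde Q_T$. For the first piece $Q_{\mu,T}-\tilde Q_T$, the key input is a uniform forgetting estimate for the forward--backward recursions (Theorem~\ref{thm:fb} and Corollary~\ref{corollary:fb}): a Doeblin-type minorization derived from Assumption~\ref{as:nondeg} together with the failure mechanism of $\bar\pi_{hi}$ should produce constants $C>0$ and $\rho\in(0,1)$ with
\[
\norm{\gamma^{\theta}_{\mu,t|T}-\gamma^{\theta}_{\infty,t}}_\tv+\norm{\tilde\gamma^{\theta}_{\mu,t|T}-\tilde\gamma^{\theta}_{\infty,t}}_\tv\leq C(\rho^{t-1}+\rho^{T-t})
\]
uniformly over $\theta\in\calTheta$, $\mu\in\mathcal{M}$ and $\omega\in\Omega$. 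Combined with uniform boundedness of the log-policy factors in (\ref{eq:Q}), averaging over $t\in[1:T]$ then yields $\sup_{\theta,\theta',\mu}|Q_{\mu,T}-\tilde Q_T|=O(1/T)$.

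For the second piece $\tilde Q_T-\bar Q$, Assumption~\ref{as:initial} forces $\P_{\theta^*,\nu^*}$ to be ergodic under the time shift (extreme points of the stationary set are ergodic measures), and each $f_t(\theta'|\theta;\cdot)$ is the composition of $f_1$ with the $t$-fold shift, since $\gamma^\theta_{\infty,t}$ and $\tilde\gamma^\theta_{\infty,t}$ are fixed measurable functionals of the doubly-infinite trajectory $\omega$. Birkhoff's ergodic theorem then gives $\tilde Q_T(\theta'|\theta;\omega)\to\bar Q(\theta'|\theta)$ a.s.\ for each fixed $(\theta,\theta')$. To upgrade pointwise convergence into uniform convergence, I would show joint Lipschitz continuity of $(\theta,\theta')\mapsto Q_{\mu,T}(\theta'|\theta;\omega)$ and of $\bar Q$ on $\calTheta\times\calTheta$ with a Lipschitz constant independent of $T$, $\mu$ and $\omega\in\Omega$ (in $\theta'$ from boundedness of $\nabla_{\theta'}\log\pi_{\cdot}$; in $\theta$ from Lemma~\ref{lemma:limitsmoothing} plus an analogous entry-wise Lipschitz bound for the finite-$T$ smoothing distributions). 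A finite $\eps$-net on $\calTheta\times\calTheta$ together with a union bound then promotes the pointwise convergence to uniform a.s.\ convergence, and the supremum over $\mu\in\mathcal{M}$ is absorbed by the uniform forgetting estimate above.

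The main obstacle is the uniform forgetting estimate: geometric decay from both the past boundary (the forward recursion seeded by $\mu$) and the future boundary (the backward recursion seeded by $\beta^\theta_{T|T}=(2|\calO|)^{-1}$) must hold simultaneously over $\theta\in\calTheta$, $\mu\in\mathcal{M}$, and $\omega\in\Omega$. This is where the options with failure framework becomes essential: the parameter $\zeta>0$ provides a floor $\zeta/|\calO|$ on every latent-to-latent transition, which together with Assumption~\ref{as:nondeg} yields the one-step Doeblin minorization needed to contract the Hilbert projective metric along both the forward and the backward recursions; without the failure mechanism, only a multi-step mixing estimate in the style of \citep[Chap.~4.3]{cappe2006inference} would be available. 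Once this uniform forgetting is in place, the ergodic theorem and the equicontinuity upgrade are largely routine.
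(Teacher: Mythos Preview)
Your proposal is correct and follows essentially the same architecture as the paper: for part~1 the paper also uses dominated convergence to push the derivative through the expectation, and for part~2 it introduces exactly your intermediate object (denoted $Q^s_{\infty,T}$ there), bounds $|Q_{\mu,T}-Q^s_{\infty,T}|\le \mathrm{const}/T$ via the geometric forgetting estimate (Lemma~\ref{lemma:qdiff}, resting on Lemmas~\ref{lemma:mixing}, \ref{lemma:fstability}, \ref{lemma:smoothingdiff1}), and invokes Birkhoff's ergodic theorem for the pointwise limit of $Q^s_{\infty,T}$.

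The one genuine difference is in how pointwise a.s.\ convergence is upgraded to uniform convergence. The paper establishes a \emph{strong stochastic equicontinuity} condition (\ref{eq:sse}) and then quotes an Arzel\`a--Ascoli-type result \citep[Theorem~21.8]{davidson1994stochastic}. Your route---a uniform Lipschitz constant in $(\theta,\theta')$ independent of $T$ and $\omega$, followed by a finite $\eps$-net---is more elementary and works here precisely because $\cals,\cala,\calo$ are finite, so the Lipschitz bounds coming from $\nabla_{\theta'}\log\pi_{\cdot}$ and from Lemma~\ref{lemma:limitsmoothing} are genuinely uniform over $\omega\in\Omega$. Two small points: (i) the object you need Lipschitz for in the upgrade is $\tilde Q_T$ (equivalently each $f_t$), not $Q_{\mu,T}$, though the latter inherits the same bound; (ii) the paper carries out the Doeblin contraction in total variation rather than the Hilbert projective metric, but either yields the required geometric rate $\rho=1-\eps_b^2\zeta/|\calo|$.
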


Before proving Theorem~\ref{thm:stronger}, we state the following definition and an auxiliary lemma required for the proof. For all $\theta,\theta'\in\calTheta$, $\omega\in\Omega$ and $T\geq 2$, the sample-path-based population $Q$-function $Q^s_{\infty,T}(\theta'|\theta;\omega)$ is defined as
\begin{equation}
Q^s_{\infty,T}(\theta'|\theta;\omega)\defeq\frac{1}{T}\sum_{t=1}^T f_t(\theta'|\theta;\omega).\label{eq:Qsampleb}
\end{equation}
The superscript \emph{s} in $Q^s_{\infty,T}$ stands for \emph{sample-path-based}. If the sample path $\omega$ is not specified, $Q^s_{\infty,T}(\theta'|\theta)$ is a random variable associated with probability measure $\P_{\theta^*,\nu^*}$. Note that due to stationarity, for any $\theta$, $\theta'$ and $T$, $\bar Q(\theta'|\theta)=\E_{\theta^*,\nu^*}[Q^s_{\infty,T}(\theta'|\theta;\omega)]$.

The difference between $Q^s_{\infty,T}$ and $Q_{\mu,T}$ is bounded in the following lemma.  

\begin{lemma}[Bounding the difference between the $Q$-function and the sample-path-based population $Q$-function]\label{lemma:qdiff} With Assumption~\ref{as:nondeg}, \ref{as:continuity} and \ref{as:initial}, for all $T\geq 2$ and $\omega\in\Omega$, 
\begin{equation*}
\sup_{\theta,\theta'\in\calTheta}\sup_{\mu\in\mathcal{M}}\left|Q^s_{\infty,T}(\theta'|\theta;\omega)-Q_{\mu,T}(\theta'|\theta;\omega)\right|\leq const\cdot T^{-1},
\end{equation*}
where $const$ is a constant independent of $T$ and $\omega$.
\end{lemma}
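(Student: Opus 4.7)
The plan is to expand the difference term by term and exploit two structural facts: the log-policies are uniformly bounded on $\calTheta$, and the smoothing distributions $\gamma^\theta_{\mu,t|T},\tilde\gamma^\theta_{\mu,t|T}$ only differ from their infinite-window counterparts $\gamma^\theta_{\infty,t},\tilde\gamma^\theta_{\infty,t}$ through boundary effects that decay geometrically in the distance to the boundary. Since both $Q_{\mu,T}$ and $Q^s_{\infty,T}$ carry the prefactor $1/T$, it suffices to show that $\sum_{t=1}^T |f_t(\theta'|\theta;\omega) - f^{\mu,T}_t(\theta'|\theta;\omega)|$ is bounded by a constant $C_0$ independent of $T$, $\omega$ and of the suprema over $\theta,\theta'\in\calTheta$ and $\mu\in\mathcal M$, where $f^{\mu,T}_t$ denotes the per-time-step summand of $Q_{\mu,T}$ (with the convention that the $\tilde\gamma$ piece of $f^{\mu,T}_1$ is absent). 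By Assumptions~\ref{as:nondeg}--\ref{as:continuity} and compactness of $\calTheta$, each of $\pi_{hi},\pi_{lo},\pi_b$ is bounded below by some $\pi_{\min}>0$, so $|\log\pi_*|\le L\defeq\log(1/\pi_{\min})$. Using $|\sum p_1 g-\sum p_2 g|\le 2\sup|g|\,\|p_1-p_2\|_\tv$ and the fact that marginalization does not increase total variation, each summand difference satisfies $|f_t-f^{\mu,T}_t|\le 4L\,\|\gamma^\theta_{\mu,t|T}-\gamma^\theta_{\infty,t}\|_\tv+2L\,\|\tilde\gamma^\theta_{\mu,t|T}-\tilde\gamma^\theta_{\infty,t}\|_\tv$, modulo the single unmatched boundary term $A_1\defeq\sum_{o_0,b_1}\tilde\gamma^\theta_{\infty,1}(o_0,b_1)\log\pi_b(b_1|s_1,o_0;\theta'_b)$ at $t=1$, which is deterministically bounded by $L$ and hence contributes at most $L/T$ to the final estimate.

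The crux is then a geometric forgetting estimate, uniform in $\theta\in\calTheta$, $\mu\in\mathcal M$ and $\omega\in\Omega$, of the form $\|\gamma^\theta_{\mu,t|T}-\gamma^\theta_{\infty,t}\|_\tv+\|\tilde\gamma^\theta_{\mu,t|T}-\tilde\gamma^\theta_{\infty,t}\|_\tv\le C_1(\rho^{t-1}+\rho^{T-t})$ for some $C_1>0$ and $\rho\in(0,1)$. The two models share the same observations on $[1,T]$ but differ at the two endpoints: the forward message is initialized with the prior $\hat\nu$ at time $1$ in one and with the stationary law propagated from $-\infty$ in the other, while the backward message carries the uniform boundary condition from (\ref{eq:backwardT}) at time $T$ in one and absorbs the realized future observations in the other. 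Running the forward recursion (\ref{eq:forward}), resp.\ (\ref{eq:forwarde}), from two different initial measures yields contraction at geometric rate $\rho$ in total variation, and the backward recursion (\ref{eq:backward}), resp.\ (\ref{eq:backwarde}), contracts symmetrically from the time-$T$ boundary. The contraction follows from a Doeblin-type minorization of the one-step smoothing kernel, which holds on every proper sample path because Assumption~\ref{as:nondeg} gives strict positivity of $\pi_{hi},\pi_{lo},\pi_b$ on $\calTheta$ and, crucially, the failure probability $\zeta>0$ enforces a uniform lower bound $\bar\pi_{hi}(o_t|s_t,o_{t-1},b_t=0;\theta_{hi})\ge \zeta/|\mathcal O|$ regardless of $o_{t-1}$; this is exactly why the options-with-failure framework is adopted (cf.\ the remark preceding Lemma~\ref{lemma:mixing}). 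Converting the contraction on the un-normalized messages into a contraction on the normalized product $\alpha\beta/z$ defining $\gamma$ is routine once $\pi_{\min}$ is used to bound the denominators from below and above.

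Combining the pieces, $\sum_{t=1}^T(\rho^{t-1}+\rho^{T-t})\le 2/(1-\rho)$ is an absolute constant, so the boundedness reduction gives $\sum_{t=1}^T|f_t-f^{\mu,T}_t|\le 12 L C_1/(1-\rho)$; together with the $A_1$ discrepancy this yields the claimed $T^{-1}$ bound with a constant independent of $T$, $\omega\in\Omega$, $\theta,\theta'\in\calTheta$ and $\mu\in\mathcal M$. The principal obstacle is the uniform forgetting step, which is precisely the content of the forthcoming Lemma~\ref{lemma:mixing}; once that minorization is in hand, the remainder is routine bookkeeping on the forward--backward recursion.
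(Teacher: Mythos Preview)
Your proposal is correct and follows essentially the same route as the paper's proof: decompose the difference term by term, bound the log-policies uniformly via Assumption~\ref{as:nondeg} and compactness, control each summand by the total-variation distance between $\gamma^\theta_{\mu,t|T}$ (resp.\ $\tilde\gamma^\theta_{\mu,t|T}$) and $\gamma^\theta_{\infty,t}$ (resp.\ $\tilde\gamma^\theta_{\infty,t}$), invoke the geometric forgetting estimate $(1-\eps_b^2\zeta/|\calo|)^{t-1}+(1-\eps_b^2\zeta/|\calo|)^{T-t}$, sum the resulting geometric series, and absorb the unmatched $t=1$ term separately. The only minor imprecision is that the forgetting bound you need is not the content of Lemma~\ref{lemma:mixing} itself but of its downstream consequence Lemma~\ref{lemma:smoothingdiff1} (taken in the limit $k\to\infty$), which packages the Doeblin minorization from Lemma~\ref{lemma:mixing} through the smoothing-stability machinery of Lemma~\ref{lemma:fstability}; your sketch correctly anticipates all the ingredients of that chain.
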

The proof is provided in Appendix~\ref{section:qdiffproof}. Now we are ready to present the proof of Theorem~\ref{thm:stronger} step-by-step. The structure of this proof is similar to the standard analysis of HMM maximum likelihood estimators \citep[Chap. 12]{cappe2006inference}. 

\begin{proof}[Proof of Theorem~\ref{thm:stronger}] We prove the two parts of the theorem separately.

\vspace{1em}\noindent
1. For all $\theta'\in\tilde\calTheta$, there exists $\delta_{\theta'}>0$ such that the set $\{\tilde\theta;\norms{\tilde\theta-\theta'}_2\leq\delta_{\theta'}\}\subseteq\tilde\calTheta$. For all $\theta\in\calTheta$ and $\omega\in\Omega$, due to the differentiability of $f_1(\theta'|\theta;\omega)$ with respect to $\theta'$, there exists a gradient $\nabla f_1(\theta'|\theta;\omega)$ at any $\theta'\in\tilde\calTheta$ such that
\begin{equation*}
\lim_{\delta\rightarrow 0}\sup_{\tilde\theta\in\tilde\calTheta;\norms{\tilde\theta-\theta'}_2\leq\delta}\frac{|f_1(\tilde\theta|\theta;\omega)-f_1(\theta'|\theta;\omega)-\langle\nabla f_1(\theta'|\theta;\omega),\tilde\theta-\theta'\rangle|}{\norms{\tilde\theta-\theta'}_2}=0.
\end{equation*}

We need to transform the above almost surely (in $\omega$) convergence to the convergence of expectation, using the dominated convergence theorem. As a requirement, the quantity inside the limit on the LHS needs to be upper-bounded. For all $\theta\in\calTheta$, $\theta'\in\tilde\calTheta$, $\omega\in\Omega$ and $0<\delta\leq\delta_{\theta'}$, 
\begin{multline}
\sup_{\tilde\theta\in\tilde\calTheta;\norms{\tilde\theta-\theta'}_2\leq\delta}\frac{|f_1(\tilde\theta|\theta;\omega)-f_1(\theta'|\theta;\omega)-\langle\nabla f_1(\theta'|\theta;\omega),\tilde\theta-\theta'\rangle|}{\norms{\tilde\theta-\theta'}_2}\leq\\
\sup_{\tilde\theta;\norms{\tilde\theta-\theta'}_2\leq\delta_{\theta'}}\frac{|f_1(\tilde\theta|\theta;\omega)-f_1(\theta'|\theta;\omega)|}{\norms{\tilde\theta-\theta'}_2}+\sup_{\tilde\theta;\norms{\tilde\theta-\theta'}_2\leq\delta_{\theta'}}\frac{|\langle\nabla f_1(\theta'|\theta;\omega),\tilde\theta-\theta'\rangle|}{\norms{\tilde\theta-\theta'}_2}.\label{eq:upper}
\end{multline}

Since continuously differentiable functions are Lipschitz continuous on convex and compact subsets, $\pi_{hi}$, $\pi_{lo}$ and $\pi_b$ as functions of $\tilde\theta\in\tilde\calTheta$ are Lipschitz continuous on $\{\tilde\theta;\norms{\tilde\theta-\theta'}_2\leq\delta_{\theta'}\}$, with any other input arguments. From the expression of $f_1$, we can verify that for any fixed $\theta$ and $\omega$, $f_1(\tilde\theta|\theta;\omega)$ as a function of $\tilde\theta$ is Lipschitz continuous on $\{\tilde\theta;\norms{\tilde\theta-\theta'}_2\leq\delta_{\theta'}\}$, and the Lipschitz constant only depends on $\theta'$ and $\delta_{\theta'}$. Consequently, the RHS of (\ref{eq:upper}) can be upper-bounded for all $\omega\in\Omega$. Applying the dominated convergence theorem, we have
\begin{equation}
\lim_{\delta\rightarrow 0}\E_{\theta^*,\nu^*}\left[\sup_{\tilde\theta\in\tilde\calTheta;\norms{\tilde\theta-\theta'}_2\leq\delta}\frac{|f_1(\tilde\theta|\theta;\omega)-f_1(\theta'|\theta;\omega)-\langle\nabla f_1(\theta'|\theta;\omega),\tilde\theta-\theta'\rangle|}{\norms{\tilde\theta-\theta'}_2}\right]=0.\label{eq:importantlim}
\end{equation}

On the other hand, notice that for all $\theta\in\calTheta$, $\theta'\in\tilde\calTheta$ and $\delta>0$, 
\begin{align*}
&\sup_{\tilde\theta\in\tilde\calTheta;\norms{\tilde\theta-\theta'}_2\leq\delta}\frac{|\bar Q(\tilde\theta|\theta)-\bar Q(\theta'|\theta)-\langle \E_{\theta^*,\nu^*}[\nabla f_1(\theta'|\theta;\omega)],\tilde\theta-\theta'\rangle|}{\norms{\tilde\theta-\theta'}_2}\\
=~&\sup_{\tilde\theta\in\tilde\calTheta;\norms{\tilde\theta-\theta'}_2\leq\delta}\frac{|\E_{\theta^*,\nu^*}[f_1(\tilde\theta|\theta;\omega)-f_1(\theta'|\theta;\omega)-\langle\nabla f_1(\theta'|\theta;\omega),\tilde\theta-\theta'\rangle]|}{\norms{\tilde\theta-\theta'}_2}\\
\leq~&\E_{\theta^*,\nu^*}\left[\sup_{\tilde\theta\in\tilde\calTheta;\norms{\tilde\theta-\theta'}_2\leq\delta}\frac{|f_1(\tilde\theta|\theta;\omega)-f_1(\theta'|\theta;\omega)-\langle\nabla f_1(\theta'|\theta;\omega),\tilde\theta-\theta'\rangle|}{\norms{\tilde\theta-\theta'}_2}\right].
\end{align*}
Combining with (\ref{eq:importantlim}) proves the differentiability of $\bar Q(\theta'|\theta)$ with respect to $\theta'\in\tilde\calTheta$ for any fixed $\theta$. The gradient is
\begin{equation*}
\nabla\bar Q(\theta'|\theta)=\E_{\theta^*,\nu^*}[\nabla f_1(\theta'|\theta;\omega)].
\end{equation*}

Analogously, using the dominated convergence theorem we can also show that the gradient $\nabla\bar Q(\theta'|\theta)$ is continuous with respect to $\theta'\in\tilde\calTheta$. Details are omitted due to the similarity with the above procedure. It is worth noting that we let $\theta'\in\tilde\calTheta$ instead of $\calTheta$. In this way, the gradient $\nabla\bar Q(\theta'|\theta)$ can be naturally defined when $\theta'$ is not an interior point of $\calTheta$. 

From differentiability and $\calTheta\subseteq\tilde\calTheta$, $\bar Q(\theta'|\theta)$ is also continuous with respect to $\theta'\in\calTheta$. Since $\calTheta$ is compact, the set of maximizing arguments $\argmax_{\theta'\in\calTheta}\bar Q(\theta'|\theta)$ is nonempty. 

\vspace{1em}\noindent
2. We need to prove the uniform (in $\theta,\theta'\in\calTheta$ and $\mu\in\mathcal{M}$) almost sure convergence of the $Q$-function $Q_{\mu,T}(\theta'|\theta;\omega)$ to the population $Q$-function $\bar Q(\theta'|\theta)$. The proof is separated into three steps. First, we show the almost sure convergence of $Q^s_{\infty,T}(\theta'|\theta;\omega)$ to $\bar Q(\theta'|\theta)$ for all $\theta,\theta'\in\calTheta$ using the ergodic theorem. Second, we extend this pointwise convergence to uniform (in $\theta,\theta'$) convergence using a version of the Arzel\`{a}-Ascoli theorem \citep[Chap.~21]{davidson1994stochastic}. Finally, from Lemma~\ref{lemma:qdiff}, the difference between $Q_{\mu,T}(\theta'|\theta;\omega)$ and $Q^s_{\infty,T}(\theta'|\theta;\omega)$ vanishes uniformly in $\mu$ as $T\rightarrow\infty$. 

Concretely, for the pointwise (in $\theta,\theta'$) almost sure convergence of $Q^s_{\infty,T}(\theta'|\theta;\omega)$ as $T\rightarrow\infty$, we apply Birkhoff's ergodic theorem. Let $\mathcal T:\mathcal{X}^\Z\rightarrow\mathcal{X}^\Z$ be the standard shift operator. That is, for any $t\in\mathbb Z$, $\mathcal T\omega(t)=\omega(t+1)$.
Due to stationarity, $\mathcal T$ is a measure-preserving map, i.e., $\P_{\theta^*,\nu^*}(\mathcal T^{-1}F)=\P_{\theta^*,\nu^*}(F)$ for all $F\in \mathcal{P}(\mathcal{X}^\Z)$. Therefore, the quadruple $\{\mathcal{X}^\Z, \mathcal{P}(\mathcal{X}^\Z), \P_{\theta^*,\nu^*},\mathcal T\}$ defines a dynamical system. 

Here, we need some clarification on some concepts and notations. Consider the Markov chain $\{X_t\}_{t=1}^\infty=\{S_t,A_t,O_t,B_t\}_{t=1}^\infty$ induced by the expert policy, let $\Pi_{X,\theta^*}$ be its set of all stationary distributions. Comparing $\Pi_{X,\theta^*}$ to $\Pi_{\theta^*}$ from Assumption~\ref{as:initial}, they both depend on the true parameter $\theta^*$; the former corresponds to the chain $\{S_t,A_t,O_t,B_t\}_{t=1}^\infty$, while the latter corresponds to the chain $\{O_{t-1},S_t\}_{t=1}^\infty$. From the structure of our graphical model, they are equivalent by some transformation. 

From Section~\ref{section:guarantee}, $\P_{\theta^*,\nu^*}$ is defined from an element of $\Pi_{X,\theta^*}$ that depends on $\nu^*$. Denote this stationary distribution as $\psi$. Since $\nu^*$ is an extreme point of $\Pi_{\theta^*}$ (Assumption~\ref{as:initial}), $\psi$ is also an extreme point of $\Pi_{X,\theta^*}$. Then, we can apply a standard Markov chain ergodicity result. From \citep[Theorem~5.7]{hairer2006ergodic}, the dynamical system $\{\mathcal{X}^\Z, \mathcal{P}(\mathcal{X}^\Z), \P_{\theta^*,\nu^*},\mathcal T\}$ is ergodic. For our case, Birkhoff's ergodic theorem is restated as follows. 
\begin{lemma}[\citep{hairer2006ergodic}, Corollary 5.3 restated]\label{lemma:ergodic} If a dynamical system $\{\mathcal{X}^\Z, \mathcal{P}(\mathcal{X}^\Z), \P_{\theta^*,\nu^*},\mathcal T\}$ is ergodic and $f:\mathcal{X}^\Z\rightarrow \R$ satisfies $\E_{\theta^*,\nu^*}[f(\omega)]<\infty$, then as $T\rightarrow\infty$,
\begin{equation*}
\frac{1}{T}\sum_{t=0}^{T-1}f(\mathcal T^t\omega)\rightarrow\E_{\theta^*,\nu^*}[f(\omega)],~P_{\theta^*,\nu^*}\text{-a.s.}
\end{equation*}
\end{lemma}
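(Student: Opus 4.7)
The plan is to prove this as a specialization of the classical Birkhoff--Khinchin pointwise ergodic theorem. I would first establish almost sure convergence of the time averages $S_T(\omega)\defeq T^{-1}\sum_{t=0}^{T-1}f(\mathcal T^t\omega)$ to some $\mathcal T$-invariant measurable function $f_\infty$, then invoke the ergodicity hypothesis to force $f_\infty$ to be constant almost surely, and finally identify that constant as $\E_{\theta^*,\nu^*}[f(\omega)]$.

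For the a.s. convergence the backbone is the maximal ergodic inequality: for any $c\in\R$ and $N\geq 1$, letting $E_c^N\defeq\{\omega:\max_{1\leq T\leq N}\sum_{t=0}^{T-1}(f(\mathcal T^t\omega)-c)>0\}$, one shows $\int_{E_c^N}(f(\omega)-c)\,d\P_{\theta^*,\nu^*}\geq 0$. Defining $\bar f\defeq\limsup_T S_T$ and $\underline f\defeq\liminf_T S_T$, both functions are $\mathcal T$-invariant almost surely: $S_T(\mathcal T\omega)-S_T(\omega)=T^{-1}(f(\mathcal T^T\omega)-f(\omega))$, and Borel--Cantelli applied to $\sum_T\P_{\theta^*,\nu^*}(|f(\mathcal T^T\omega)|\geq T\epsilon)=\sum_T\P_{\theta^*,\nu^*}(|f|\geq T\epsilon)\leq\epsilon^{-1}\E_{\theta^*,\nu^*}|f|<\infty$ (the measure-preserving property is used here) gives $T^{-1}f(\mathcal T^T\omega)\to 0$ a.s., so the remainder vanishes. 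Applying the maximal inequality to the $\mathcal T$-invariant sets $\{\underline f<a\}\cap\{\bar f>b\}$ with rational $a<b$ yields $\P_{\theta^*,\nu^*}(\underline f<a<b<\bar f)=0$ for every such pair, hence $\underline f=\bar f\defeq f_\infty$ almost surely.

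Since $f_\infty$ is measurable and $\mathcal T$-invariant, the ergodicity of the dynamical system forces $f_\infty$ to be constant $\P_{\theta^*,\nu^*}$-almost surely. To identify its value, observe that $\E_{\theta^*,\nu^*}[S_T]=\E_{\theta^*,\nu^*}[f]$ for every $T$ by the measure-preserving property of $\mathcal T$. For bounded $f$, dominated convergence then gives $f_\infty=\E_{\theta^*,\nu^*}[f]$ directly. For general $f\in L^1$, I would truncate $f_M\defeq f\cdot\mathbf{1}\{|f|\leq M\}$, apply the bounded case to $f_M$, and let $M\to\infty$, using
\begin{equation*}
\E_{\theta^*,\nu^*}\left|T^{-1}\sum_{t=0}^{T-1}(f-f_M)(\mathcal T^t\omega)\right|\leq\E_{\theta^*,\nu^*}|f-f_M|\xrightarrow{M\to\infty}0
\end{equation*}
together with Fatou's lemma on $|f_\infty-\E_{\theta^*,\nu^*}[f]|$ to transfer the conclusion from $f_M$ to $f$.

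The main obstacle is the maximal ergodic inequality itself, whose proof relies on a combinatorial ``rising sun'' (Garsia) argument rather than a direct estimate. Once this ingredient is in hand, the invariance of $\bar f$ and $\underline f$ and the collapse under ergodicity are routine, and the $L^1$ truncation step that pins the constant down to $\E_{\theta^*,\nu^*}[f]$ is standard measure-theoretic bookkeeping. Since the statement is literally restated from \citep{hairer2006ergodic}, a fully self-contained proof is not strictly necessary; the sketch above clarifies which classical ingredients the paper is implicitly invoking.
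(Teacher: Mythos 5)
The paper does not prove this lemma at all: it is imported verbatim as Corollary~5.3 of \citep{hairer2006ergodic} (Birkhoff's pointwise ergodic theorem specialized to the ergodic dynamical system $\{\mathcal{X}^\Z,\mathcal{P}(\mathcal{X}^\Z),\P_{\theta^*,\nu^*},\mathcal T\}$), so there is no in-paper argument to compare yours against. Your sketch correctly reconstructs the classical proof that the citation encapsulates: the maximal ergodic inequality, the a.s.\ $\mathcal T$-invariance of $\limsup_T S_T$ and $\liminf_T S_T$ (your Borel--Cantelli step showing $T^{-1}f(\mathcal T^T\omega)\to 0$ a.s.\ is the right way to handle the boundary term for unbounded $f\in L^1$), the collapse of the invariant limit to a constant under ergodicity, and the identification of that constant via $\E_{\theta^*,\nu^*}[S_T]=\E_{\theta^*,\nu^*}[f]$ plus truncation. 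The one place worth tightening is the last step: to transfer the identification from $f_M$ to $f$ you need Fatou applied to $\liminf_T|S_T(f-f_M)|$ (which exists a.s.\ because the first part of the argument already gives a.s.\ convergence of $S_T(g)$ for every $g\in L^1$), yielding $\E_{\theta^*,\nu^*}|f_\infty-\E_{\theta^*,\nu^*}[f_M]|\leq\E_{\theta^*,\nu^*}|f-f_M|\to 0$; you gesture at exactly this, so the argument is complete in substance. Given that the paper's intent is to use the theorem as a black box, your proposal is a correct (and strictly more informative) account of what is being invoked; the only caveat is that in the paper's application the hypothesis should be read as $f\in L^1(\P_{\theta^*,\nu^*})$, i.e.\ $\E_{\theta^*,\nu^*}|f|<\infty$, which is what your proof actually uses and what holds for the bounded functions $f_1(\theta'|\theta;\cdot)$ to which the lemma is applied.
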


For our purpose, observe that for any $\theta,\theta'\in\calTheta$, $f_t(\theta'|\theta;\omega)=f_1(\theta'|\theta;\mathcal T^{t-1}\omega)$. Therefore, applying the ergodic theorem to $Q^s_{\infty,T}(\theta'|\theta)$, as $T\rightarrow\infty$,  
\begin{equation}
Q^s_{\infty,T}(\theta'|\theta;\omega)\rightarrow \bar Q(\theta'|\theta),~P_{\theta^*,\nu^*}\text{-a.s.}\label{eq:pointwise}
\end{equation}

To extend the pointwise convergence in (\ref{eq:pointwise}) to uniform (in $\theta,\theta'$) convergence, the following concept is required. The sequence $\{Q^s_{\infty,T}(\theta'|\theta)\}$ indexed by $T$ as functions of $\theta$ and $\theta'$ is \emph{strongly stochastically equicontinuous} \citep[Equation 21.43]{davidson1994stochastic} if for any $\eps>0$ there exists $\delta>0$ such that
\begin{equation}
\limsup_{T\rightarrow\infty}\sup_{\theta_1,\theta'_1,\theta_2,\theta'_2\in\calTheta;\norms{\theta_1-\theta_2}_2+\norms{\theta'_1-\theta'_2}_2\leq\delta}\left|Q^s_{\infty,T}(\theta'_1|\theta_1;\omega)-Q^s_{\infty,T}(\theta'_2|\theta_2;\omega)\right|<\eps,~P_{\theta^*,\nu^*}\text{-a.s.}\label{eq:sse}
\end{equation}
Indeed this property holds for $\{Q^s_{\infty,T}(\theta'|\theta)\}$, as shown in Appendix~\ref{subsection:sseproof}. The version of the Arzel\`{a}-Ascoli theorem we use is restated as follows, tailored to our need. 

\begin{lemma}[\citep{davidson1994stochastic}, Theorem 21.8 restated]\label{lemma:uniform}
Given (\ref{eq:pointwise}) and (\ref{eq:sse}), as $T\rightarrow \infty$ we have
\begin{equation*}
\sup_{\theta,\theta'\in\calTheta}\left|Q^s_{\infty,T}(\theta'|\theta;\omega)-\bar Q(\theta'|\theta)\right|\rightarrow 0,~P_{\theta^*,\nu^*}\text{-a.s.}
\end{equation*}
\end{lemma}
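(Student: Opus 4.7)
The plan is a standard Arzel\`a--Ascoli-type compactness argument on the product space $\calTheta\times\calTheta$. Fix an arbitrary $\eps>0$. I would first invoke the strong stochastic equicontinuity property (\ref{eq:sse}) to obtain $\delta>0$ and a set $\Omega_0\subseteq\Omega$ with $\P_{\theta^*,\nu^*}(\Omega_0)=1$ such that, for every $\omega\in\Omega_0$,
\begin{equation*}
\limsup_{T\to\infty}\sup_{\norms{\theta_1-\theta_2}_2+\norms{\theta'_1-\theta'_2}_2\leq\delta}\left|Q^s_{\infty,T}(\theta'_1|\theta_1;\omega)-Q^s_{\infty,T}(\theta'_2|\theta_2;\omega)\right|<\eps/3.
\end{equation*}
I would also note that $\bar Q$ is continuous on the compact set $\calTheta\times\calTheta$: this can be inferred from (\ref{eq:pointwise}) together with (\ref{eq:sse}), since an a.s. pointwise limit of a stochastically equicontinuous sequence inherits equicontinuity (and hence, on a compact set, uniform continuity). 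By shrinking $\delta$ if necessary, assume additionally that $\left|\bar Q(\theta'_1|\theta_1)-\bar Q(\theta'_2|\theta_2)\right|\leq\eps/3$ whenever $\norms{\theta_1-\theta_2}_2+\norms{\theta'_1-\theta'_2}_2\leq\delta$.

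Compactness of $\calTheta\times\calTheta$ then yields a finite set of centers $(\theta_k,\theta'_k)_{k=1}^K$ whose $\delta/2$-balls in the product metric cover $\calTheta\times\calTheta$. Applying (\ref{eq:pointwise}) at each center gives, for every $k$, a null set $N_k$ such that $Q^s_{\infty,T}(\theta'_k|\theta_k;\omega)\to\bar Q(\theta'_k|\theta_k)$ for all $\omega\in\Omega\setminus N_k$. Set $\Omega_\eps\defeq\Omega_0\setminus\bigcup_{k=1}^K N_k$; as a finite union of null sets is null, $\P_{\theta^*,\nu^*}(\Omega_\eps)=1$.

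For any $\omega\in\Omega_\eps$, I would choose $T_0(\omega,\eps)$ large enough that the equicontinuity supremum is below $\eps/3$ for all $T\geq T_0$ and $\left|Q^s_{\infty,T}(\theta'_k|\theta_k;\omega)-\bar Q(\theta'_k|\theta_k)\right|\leq\eps/3$ for each of the finitely many indices $k$ and all $T\geq T_0$. For any $(\theta,\theta')$ lying in the $\delta/2$-ball around some $(\theta_k,\theta'_k)$, the triangle inequality
\begin{align*}
\left|Q^s_{\infty,T}(\theta'|\theta;\omega)-\bar Q(\theta'|\theta)\right|
&\leq\left|Q^s_{\infty,T}(\theta'|\theta;\omega)-Q^s_{\infty,T}(\theta'_k|\theta_k;\omega)\right|\\
&\quad+\left|Q^s_{\infty,T}(\theta'_k|\theta_k;\omega)-\bar Q(\theta'_k|\theta_k)\right|+\left|\bar Q(\theta'_k|\theta_k)-\bar Q(\theta'|\theta)\right|
\end{align*}
bounds each summand by $\eps/3$, so $\sup_{\theta,\theta'\in\calTheta}\left|Q^s_{\infty,T}(\theta'|\theta;\omega)-\bar Q(\theta'|\theta)\right|\leq\eps$ for all $T\geq T_0$.

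The main bookkeeping obstacle is producing a single probability-one sample set on which the uniform convergence holds for every $\eps>0$, rather than an $\eps$-dependent one. I would handle this by fixing a countable sequence $\eps_n\downarrow 0$, performing the covering argument for each $\eps_n$ (each covering uses finitely many centers, so only countably many pointwise-convergence null sets enter in total), and intersecting the resulting full-measure sets. On the intersection, the bound above holds eventually for every $\eps_n$, which yields the claimed $\P_{\theta^*,\nu^*}$-a.s. uniform convergence.
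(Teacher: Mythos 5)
Your proposal is correct. Note that the paper does not actually prove this lemma: it is invoked as a black-box citation of \citep[Theorem~21.8]{davidson1994stochastic}, a stochastic Arzel\`a--Ascoli result, so there is no internal proof to compare against. What you have written is precisely the standard proof of that cited theorem: a finite $\delta/2$-net of the compact set $\calTheta\times\calTheta$, pointwise a.s.\ convergence (\ref{eq:pointwise}) at the finitely many centers, the strong stochastic equicontinuity (\ref{eq:sse}) to control the oscillation of $Q^s_{\infty,T}$ between a point and its nearest center, and the matching modulus of continuity for $\bar Q$ (which you correctly derive from (\ref{eq:pointwise}) and (\ref{eq:sse}) by picking a single sample path in the intersection of the relevant full-measure sets; alternatively, continuity of $\bar Q$ is already available in the paper from Theorem~\ref{thm:stronger} and Lemma~\ref{lemma:limitsmoothing}). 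You also correctly identify and resolve the one genuinely delicate point, namely that the exceptional null set produced by the covering argument depends on $\eps$, by running the construction along a countable sequence $\eps_n\downarrow 0$ and intersecting the resulting countably many full-measure sets. The argument is complete and would serve as a valid self-contained substitute for the citation.
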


Combining Lemma~\ref{lemma:qdiff} and Lemma~\ref{lemma:uniform} concludes the proof of the second part. 
\end{proof}

\paragraph{On the concavity of $\bar Q(\cdot|\theta)$.} As discussed after introducing Assumption~\ref{as:concavity}, we expect the following to hold in certain cases of tabular parameterization: for all $\theta\in\calTheta$, the function $\bar Q(\cdot|\theta)$ is strongly concave over $\calTheta$. Details are presented below. 

Consider $\theta'_b$ for example, we need to provide sufficient conditions such that the following function is strongly concave with respect to $\theta'_b\in\calTheta_b$, given any $\theta\in\calTheta$. 
\begin{equation*}
\bar Q_b(\theta'_b|\theta)=\sum_{o_{0},b_1}\E_{\theta^*,\nu^*}\left[\tilde \gamma^{\theta}_{\infty,t}(o_0,b_1;\omega)\log \pi_{b}(b_1|\omega(s_1),o_0;\theta'_b)\right].
\end{equation*}

Let the marginal distribution of $\nu^*$ on $S_1$ be $\nu^*_{S_1}$. If $\nu^*_{S_1}$ is strictly positive on $\cals$, then we rewrite $\bar Q_b(\theta'_b|\theta)$ as
\begin{equation*}
\bar Q_b(\theta'_b|\theta)=\sum_{o_{0},b_1}\sum_{s_1\in\cals}\nu^*_{S_1}(s_1)\E_{\theta^*,\nu^*|S_1=s_1}\left[\tilde \gamma^{\theta}_{\infty,t}(o_0,b_1;\omega)\right]\log \pi_{b}(b_1|s_1,o_0;\theta'_b).
\end{equation*}
In the case of tabular parameterization, $\pi_{b}(b_1|s_1,o_0;\theta'_b)$ is an entry of $\theta'_b$ indexed as $\theta'_b(b_1,s_1,o_0)$; its logarithm is 1-strongly concave on the interval $[0,1]$. $\bar Q_b(\theta'_b|\theta)$ is strongly concave with respect to $\theta'_b$ if $\E_{\theta^*,\nu^*|S_1=s_1}[\tilde \gamma^{\theta}_{\infty,t}(o_0,b_1;\omega)]$ is strictly positive for all $o_0$ and $b_1$. We speculate that this requirement is mild, but a rigorous characterization is quite challenging. 

\subsection{The convergence of the population version algorithm}\label{subsection:proofpopulation}

We first present the complete version of Theorem~\ref{thm:population}, where an upper bound on $\gamma$ is also shown. Notice that we assume all the assumptions, including Assumption~\ref{as:concavity} and \ref{as:local}.

\begin{theorem}[The complete version of Theorem~\ref{thm:population}]\label{thm:populationcomplete} With all the assumptions, 

\begin{enumerate}[leftmargin=*]
\item (First-order stability) There exists $0<\gamma\leq \bar\gamma$ such that for all $\theta\in\calTheta_r$, 
\begin{equation*}
\norm{\nabla\bar Q(\bar M(\theta)|\theta)-\nabla\bar Q(\bar M(\theta)|\theta^*)}_2\leq \gamma\norm{\theta-\theta^*}_2.
\end{equation*}
Specifically, the upper bound $\bar\gamma$ is given by
\begin{multline*}
\bar\gamma=\frac{4|\calo|L_{\theta^*,r}}{\eps^2_b\zeta}\left(\sup_{\theta'\in\calTheta_r}z_{\theta',\theta^*}\right)\bigg(2\max_{o_0,s_1,b_1}\sup_{\theta'_b\in\calTheta_b}\norm{\nabla\log \pi_{b}(b_1|s_1,o_{0};\theta'_b)}_2\\
+\max_{s_1,a_1,o_1}\sup_{\theta'_{lo}\in\calTheta_{lo}}\norm{\nabla\log \pi_{lo}(a_1|s_1,o_1;\theta'_{lo})}_2+\max_{s_1,o_1}\sup_{\theta'_{hi}\in\calTheta_{hi}}\norm{\nabla\log \pi_{hi}(o_1|s_1;\theta'_{hi})}_2\bigg).
\end{multline*}
$\zeta$ is the failure parameter in the options with failure framework; $\eps_b$ is a mixing constant defined in Lemma~\ref{lemma:mixing}; $L_{\theta^*,r}$ is a Lipschitz constant defined in Lemma~\ref{lemma:lipschitz}; $z_{\theta',\theta^*}$ is defined in Lemma~\ref{lemma:fstability}. 
\item (Contraction) Let $\kappa=\gamma/\lambda$. For all $\theta\in\calTheta_r$, 
\begin{equation*}
\norm{\bar M(\theta)-\theta^*}_2\leq \kappa\norm{\theta-\theta^*}_2. 
\end{equation*}
If $\kappa<1$, the population version algorithm converges linearly to the true parameter $\theta^*$. 
\end{enumerate}
\end{theorem}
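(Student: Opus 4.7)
The plan is to mirror the two-step structure of \citep[Theorem~4]{balakrishnan2017statistical}: first establish the first-order stability bound with the explicit constant $\bar\gamma$ tailored to the hierarchical model, and then derive the contraction as a fairly mechanical consequence of stability together with Assumption~\ref{as:concavity}.

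For Part~1 (first-order stability), I will exploit the closed form $\nabla\bar Q(\theta'|\theta)=\E_{\theta^*,\nu^*}[\nabla f_1(\theta'|\theta;\omega)]$ from Theorem~\ref{thm:stronger}, observing that $f_1$ depends on $\theta$ only through the limit smoothing distributions $\gamma^\theta_{\infty,1}$ and $\tilde\gamma^\theta_{\infty,1}$, while the gradient in $\theta'$ acts solely on the log-policy factors $\log\pi_b,\log\pi_{lo},\log\pi_{hi}$. Therefore $\nabla f_1(\theta'|\theta;\omega)-\nabla f_1(\theta'|\theta^*;\omega)$ can be written as a sum of three terms, each of the form (difference of smoothing values) $\times$ (gradient of a log-policy at $\theta'$). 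I will bound the gradient factors uniformly by the sup norms appearing in the expression for $\bar\gamma$, and the smoothing-difference factors by invoking Lemma~\ref{lemma:lipschitz}, which is where the Lipschitz constant $L_{\theta^*,r}$ enters. The factor $|\calo|$ counts the outer summations over options; the factors $\eps_b^{-2}\zeta^{-1}$ and $z_{\theta',\theta^*}$ arise because the Lipschitz analysis of $\gamma^\theta_{\infty,1},\tilde\gamma^\theta_{\infty,1}$ ultimately relies on uniform mixing (Lemma~\ref{lemma:mixing}) and on the importance-ratio control of Lemma~\ref{lemma:fstability}. Since $\bar M(\theta)\in\calTheta_r$ when $\theta\in\calTheta_r$ (an easy byproduct of the contraction step, or alternatively $\bar M(\theta)\in\calTheta$ plus the stated sup ranging over all of $\calTheta_b,\calTheta_{lo},\calTheta_{hi}$), plugging $\theta'=\bar M(\theta)$ and taking expectations yields the claimed bound with $\gamma\leq\bar\gamma$.

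For Part~2 (contraction), I will combine first-order stability with the strong concavity hypothesis. From Assumption~\ref{as:concavity}, applying the standard monotonicity of gradients of a $\lambda$-strongly concave function gives
\begin{equation*}
\langle\nabla\bar Q(\theta^*|\theta^*)-\nabla\bar Q(\bar M(\theta)|\theta^*),\bar M(\theta)-\theta^*\rangle\geq \lambda\norm{\bar M(\theta)-\theta^*}_2^2.
\end{equation*}
The variational inequalities coming from $\theta^*=\bar M(\theta^*)\in\argmax_{\theta'\in\calTheta}\bar Q(\theta'|\theta^*)$ and $\bar M(\theta)\in\argmax_{\theta'\in\calTheta}\bar Q(\theta'|\theta)$ give $\langle\nabla\bar Q(\theta^*|\theta^*),\bar M(\theta)-\theta^*\rangle\leq 0$ and $\langle\nabla\bar Q(\bar M(\theta)|\theta),\theta^*-\bar M(\theta)\rangle\leq 0$, which together imply
\begin{equation*}
\lambda\norm{\bar M(\theta)-\theta^*}_2^2\leq \langle\nabla\bar Q(\bar M(\theta)|\theta^*)-\nabla\bar Q(\bar M(\theta)|\theta),\theta^*-\bar M(\theta)\rangle.
\end{equation*}
Cauchy--Schwarz followed by Part~1 finishes this step, producing the contraction ratio $\kappa=\gamma/\lambda$; linear convergence under $\kappa<1$ is then an immediate induction.

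The main obstacle will be Part~1, specifically the Lipschitz bound on the infinite-horizon smoothing distributions and the careful tracking of problem-specific constants. The smoothing distributions are themselves limits of ratios of forward/backward messages that depend on $\theta$ throughout long trajectories, so naive Lipschitz estimates would blow up with the horizon. I expect that the route through Lemma~\ref{lemma:mixing} and Lemma~\ref{lemma:lipschitz} exactly circumvents this by leveraging the $\zeta$-failure mechanism for uniform geometric forgetting: the mixing constant $\eps_b$, which appears to the power $-2$ in $\bar\gamma$, quantifies the rate at which early perturbations of $\theta$ are damped in the smoother, and $z_{\theta',\theta^*}$ controls the remaining sample-path-dependent factor so that expectations under $\P_{\theta^*,\nu^*}$ can be bounded uniformly over $\theta'\in\calTheta_r$. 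Once those two lemmas are in hand, assembling them into the displayed $\bar\gamma$ is essentially bookkeeping.
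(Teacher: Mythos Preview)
Your proposal is correct and follows essentially the same route as the paper: Part~1 decomposes the gradient difference via the expression $\nabla\bar Q(\theta'|\theta)=\E_{\theta^*,\nu^*}[\nabla f_1(\theta'|\theta;\omega)]$ into three blocks of the form (smoothing-distribution difference)$\times$(log-policy gradient), bounds the gradient factors uniformly over $\calTheta$, and controls the smoothing differences by the limit form of Lemma~\ref{lemma:smoothingdiff2} (which in turn rests on Lemmas~\ref{lemma:mixing}, \ref{lemma:lipschitz}, \ref{lemma:fstability}); Part~2 is exactly the optimality-conditions-plus-strong-concavity-plus-Cauchy--Schwarz argument you outline.

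The one point you treat as given but the paper singles out as requiring work is the self-consistency $\theta^*=\bar M(\theta^*)$. In the i.i.d.\ setting this is standard EM folklore, but here $\bar Q$ is defined only as the almost-sure limit of $Q_{\mu,T}$, and it is not a priori clear that $\theta^*$ maximizes $\bar Q(\cdot|\theta^*)$. The paper isolates this as Lemma~\ref{lemma:selfconsistency} and proves it by combining the classical EM monotonicity inequality (applied with the \emph{true} prior $\nu^*$, so that the marginal likelihood is correctly specified) with the stochastic convergence from Theorem~\ref{thm:existence} and non-negativity of KL divergence; uniqueness then comes from Assumption~\ref{as:local}.1. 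You should make this step explicit rather than folding it into the variational-inequality line.
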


\begin{proof}[Proof of Theorem~\ref{thm:populationcomplete}] We prove the two parts separately in the following. 

\vspace{1em}\noindent
1. For convenience of notation, let $\nabla\bar Q(\theta'|\theta)=[\nabla_b\bar Q(\theta'|\theta),\nabla_{lo}\bar Q(\theta'|\theta),\nabla_{hi}\bar Q(\theta'|\theta)]$ such that, for example, $\nabla_b\bar Q(\theta'|\theta)$ is the gradient of $\bar Q(\theta'|\theta)$ with respect to $\theta'_b$. Using the expressions of $\nabla\bar Q(\theta'|\theta)$ from Theorem~\ref{thm:stronger}, we have
\begin{multline*}
\norm{\nabla\bar Q(\bar M(\theta)|\theta)-\nabla\bar Q(\bar M(\theta)|\theta^*)}_2
\leq\norm{\nabla_b\bar Q(\bar M(\theta)|\theta)-\nabla_b\bar Q(\bar M(\theta)|\theta^*)}_2\\
+\norm{\nabla_{lo}\bar Q(\bar M(\theta)|\theta)-\nabla_{lo}\bar Q(\bar M(\theta)|\theta^*)}_2+\norm{\nabla_{hi}\bar Q(\bar M(\theta)|\theta)-\nabla_{hi}\bar Q(\bar M(\theta)|\theta^*)}_2.
\end{multline*}

Consider the first term, 
\begin{align*}
&\norm{\nabla_b\bar Q(\bar M(\theta)|\theta)-\nabla_b\bar Q(\bar M(\theta)|\theta^*)}_2\\
=~&\norm{\E_{\theta^*,\nu^*}\bigg\{\sum_{o_{0},b_1}\left[\tilde \gamma^{\theta}_{\infty,1}(o_{0},b_1;\omega)-\tilde \gamma^{\theta^*}_{\infty,1}(o_{0},b_1;\omega)\right]\left[\nabla\log \pi_{b}(b_1|\omega(s_1),o_{0};\bar M(\theta)_b)\right]\bigg\}}_2\\
\leq~&\sum_{o_{0},b_1}\norm{\E_{\theta^*,\nu^*}\bigg\{\left[\tilde \gamma^{\theta}_{\infty,1}(o_{0},b_1;\omega)-\tilde \gamma^{\theta^*}_{\infty,1}(o_{0},b_1;\omega)\right]\left[\nabla\log \pi_{b}(b_1|\omega(s_1),o_{0};\bar M(\theta)_b)\right]\bigg\}}_2\\
\leq~&\sum_{o_{0},b_1}\E_{\theta^*,\nu^*}\bigg\{\left|\tilde \gamma^{\theta}_{\infty,1}(o_{0},b_1;\omega)-\tilde \gamma^{\theta^*}_{\infty,1}(o_{0},b_1;\omega)\right|\norm{\nabla\log \pi_{b}(b_1|\omega(s_1),o_{0};\bar M(\theta)_b)}_2\bigg\}\\
\leq~&\max_{o_0,s_1,b_1}\sup_{\theta'_b\in\calTheta_b}\norm{\nabla\log \pi_{b}(b_1|s_1,o_{0};\theta'_b)}_2\E_{\theta^*,\nu^*}\bigg\{\sum_{o_{0},b_1}\left|\tilde \gamma^{\theta}_{\infty,1}(o_{0},b_1;\omega)-\tilde \gamma^{\theta^*}_{\infty,1}(o_{0},b_1;\omega)\right|\bigg\}\\
\leq~&2\max_{o_0,s_1,b_1}\sup_{\theta'_b\in\calTheta_b}\norm{\nabla\log \pi_{b}(b_1|s_1,o_{0};\theta'_b)}_2\times\sup_{\omega\in\Omega}\norm{\tilde \gamma^{\theta}_{\infty,1}(\omega)-\tilde \gamma^{\theta^*}_{\infty,1}(\omega)}_\tv\\
\leq~&\frac{8|\calo|L_{\theta^*,r}}{\eps^2_b\zeta}\left(\sup_{\theta'\in\calTheta_r}z_{\theta',\theta^*}\right)\left(\max_{o_0,s_1,b_1}\sup_{\theta'_b\in\calTheta_b}\norm{\nabla\log \pi_{b}(b_1|s_1,o_{0};\theta'_b)}_2\right)\norm{\theta-\theta^*}_2.
\end{align*}
We use the triangle inequality and the Jensen's inequality in the third and the fourth line respectively. The fifth line is finite due to $\theta_b$ being compact and the continuity of the gradient (Assumption~\ref{as:continuity}). The last line is due to the limit form of Lemma~\ref{lemma:smoothingdiff2}, similar to the argument in Appendix~\ref{subsection:limitsmoothing}. Notice that the coefficient of $\norms{\theta-\theta^*}_2$ on the last line does not depend on $\theta$. 

Analogously, we have
\begin{multline*}
\norm{\nabla_{lo}\bar Q(\bar M(\theta)|\theta)-\nabla_{lo}\bar Q(\bar M(\theta)|\theta^*)}_2\leq\\ \frac{4|\calo|L_{\theta^*,r}}{\eps^2_b\zeta}\left(\sup_{\theta'\in\calTheta_r}z_{\theta',\theta^*}\right)\left(\max_{s_1,a_1,o_1}\sup_{\theta'_{lo}\in\calTheta_{lo}}\norm{\nabla\log \pi_{lo}(a_1|s_1,o_1;\theta'_{lo})}_2\right)\norm{\theta-\theta^*}_2,
\end{multline*}
\begin{multline*}
\norm{\nabla_{hi}\bar Q(\bar M(\theta)|\theta)-\nabla_{hi}\bar Q(\bar M(\theta)|\theta^*)}_2\leq\\ \frac{4|\calo|L_{\theta^*,r}}{\eps^2_b\zeta}\left(\sup_{\theta'\in\calTheta_r}z_{\theta',\theta^*}\right)\left(\max_{s_1,o_1}\sup_{\theta'_{hi}\in\calTheta_{hi}}\norm{\nabla\log \pi_{hi}(o_1|s_1;\theta'_{hi})}_2\right)\norm{\theta-\theta^*}_2.
\end{multline*}
Combining everything, we have the upper bound on $\gamma$. 

\vspace{1em}\noindent
2. The proof of the second part mirrors the proof of \citep[Theorem~4]{balakrishnan2017statistical}. The main difference is the construction of the following self-consistency (\emph{a.k.a.} fixed-point) condition. 
\begin{lemma}[Self-consistency]\label{lemma:selfconsistency} With all the assumptions, $\theta^*=\bar M(\theta^*)$. 
\end{lemma}

The proof of this lemma is presented in Appendix~\ref{subsection:selfconsistency}. Such a condition is used without proof in \citep{balakrishnan2017statistical} since it only considers i.i.d. samples, and the self-consistency condition for EM with i.i.d. samples is a well-known result. However, for the case of dependent samples like our graphical model, such a condition results from the stochastic convergence of the $Q$-function which is not immediate. 

For the rest of the proof, we present a brief sketch here for completeness. Due to concavity, we have the first order optimality conditions: for all $\theta,\theta'\in\calTheta$, $\langle\nabla \bar Q(\bar M(\theta^*)|\theta^*),\theta-\bar M(\theta^*)\rangle\leq 0$ and $\langle\nabla \bar Q(\bar M(\theta)|\theta),\theta'-\bar M(\theta)\rangle\leq 0$. Using $\theta^*=\bar M(\theta^*)$, we can combine the two optimality conditions together and obtain the following. For all $\theta\in\calTheta$, 
\begin{equation*}
\langle\nabla \bar Q(\bar M(\theta)|\theta^*)-\nabla \bar Q(\theta^*|\theta^*),\theta^*-\bar M(\theta)\rangle\leq \langle\nabla \bar Q(\bar M(\theta)|\theta^*)-\nabla \bar Q(\bar M(\theta)|\theta),\theta^*-\bar M(\theta)\rangle.
\end{equation*}
From Assumption~\ref{as:concavity}, $\lhs\geq\lambda\norms{\theta^*-\bar M(\theta)}_2^2$. From Cauchy-Schwarz and the first part of this theorem, $\rhs\leq \gamma\norms{\theta^*-\bar M(\theta)}_2\norms{\theta-\theta^*}_2$. Canceling $\norms{\theta^*-\bar M(\theta)}_2$ on both sides completes the proof. 
\end{proof}

\subsection{Proof of Theorem~\ref{thm:perturbed}}\label{subsection:proofperturbed}

1. We first show the strong consistency of $M_{\mu,T}(\theta;\omega)$, the parameter update of Algorithm~\ref{algorithm}, as an estimator of $\bar M(\theta)$. This follows from standard techniques in the analysis of M-estimators. In particular, consider the set of sample paths $\omega$ such that $\omega\in\Omega$ and $\argmax_{\theta'\in\calTheta} Q_{\mu,T}(\theta'|\theta;\omega)$ has a unique element $M_{\mu,T}(\theta;\omega)$. Such a set of sample paths has probability measure 1. 

For all $\theta\in\calTheta$, $T\geq 2$ and $\mu\in\mathcal{M}$, with one of the above sample path $\omega$, 
\begin{align*}
0&\leq \bar Q(\bar M(\theta)|\theta)-\bar Q(M_{\mu,T}(\theta;\omega)|\theta)\\
&\leq \bar Q(\bar M(\theta)|\theta)-Q_{\mu,T}(\bar M(\theta)|\theta;\omega)+Q_{\mu,T}(\bar M(\theta)|\theta;\omega)-Q_{\mu,T}(M_{\mu,T}(\theta;\omega)|\theta;\omega)\\
&\hspace{20em}+Q_{\mu,T}(M_{\mu,T}(\theta;\omega)|\theta;\omega)-\bar Q(M_T(\theta;\omega)|\theta)\\
&\leq 2\sup_{\theta'\in\calTheta}\left|\bar Q(\theta'|\theta)-Q_{\mu,T}(\theta'|\theta;\omega)\right|.
\end{align*}

From Theorem~\ref{thm:stronger}, $\P_{\theta^*,\nu^*}$-almost surely, $\sup_{\theta,\theta'\in\calTheta}\sup_{\mu\in\mathcal{M}}|\bar Q(\theta'|\theta)-Q_{\mu,T}(\theta'|\theta;\omega)|\rightarrow 0$ as $T\rightarrow\infty$. Therefore,
\begin{equation*}
\sup_{\theta\in\calTheta_r}\sup_{\mu\in\mathcal{M}}\left[\bar Q(\bar M(\theta)|\theta)-\bar Q(M_{\mu,T}(\theta;\omega)|\theta)\right]\rightarrow 0,~P_{\theta^*,\nu^*}\text{-a.s.}
\end{equation*}
An equivalent argument is the following. $\P_{\theta^*,\nu^*}$-almost surely, for any $\delta>0$ there exists $T_\omega\in\N_+$ such that for all $T\geq T_\omega$, $\sup_{\theta\in\calTheta_r}\sup_{\mu\in\mathcal{M}}[\bar Q(\bar M(\theta)|\theta)-\bar Q(M_{\mu,T}(\theta;\omega)|\theta)]\leq \delta$. In particular, for any $\eps>0$, let
\begin{equation*}
\delta=\frac{1}{2}\inf_{\theta\in\calTheta_r}\bigg[\bar Q(\bar M(\theta)|\theta)-\sup_{\theta'\in\calTheta;\norms{\theta'-\bar M(\theta)}_2\geq\eps}\bar Q(\theta'|\theta)\bigg].
\end{equation*}

From the identifiability assumption (Assumption~\ref{as:local}), the RHS is positive. Therefore, such an assignment of $\delta$ is valid. Consequently, for all $T\geq T_\omega$, $\theta\in\calTheta_r$ and $\mu\in\mathcal{M}$,
\begin{equation*}
\bar Q(\bar M(\theta)|\theta)-\bar Q(M_{\mu,T}(\theta;\omega)|\theta)<\bar Q(\bar M(\theta)|\theta)-\sup_{\theta'\in\calTheta;\norms{\theta'-\bar M(\theta)}_2\geq\eps}\bar Q(\theta'|\theta), 
\end{equation*}
which means that $\norms{M_{\mu,T}(\theta;\omega)-\bar M(\theta)}_2<\eps$. Taking supremum over $\theta\in\calTheta_r$ and $\mu\in\mathcal{M}$, we summarize the argument as the following. $\P_{\theta^*,\nu^*}$-almost surely, for any $\eps>0$ there exists $T_\omega\in\N_+$ such that for all $T\geq T_\omega$, 
\begin{equation*}
\sup_{\theta\in\calTheta_r}\sup_{\mu\in\mathcal{M}}\norm{M_{\mu,T}(\theta;\omega)-\bar M(\theta)}_2<\eps.
\end{equation*}
Such a result is equivalent to the uniform (in $\theta$ and $\mu$) strong consistency of $M_{\mu,T}(\theta;\omega)$ as an estimator of $\bar M(\theta)$. As $T\rightarrow\infty$,
\begin{equation*}
\sup_{\theta\in\calTheta_r}\sup_{\mu\in\mathcal{M}}\norm{M_{\mu,T}(\theta;\omega)-\bar M(\theta)}_2\rightarrow 0,~P_{\theta^*,\nu^*}\text{-a.s.}
\end{equation*}

This result is insufficient for Part 1, since $T_\omega$ is sample path dependent. To get rid of this sample path dependency, we use the dominated convergence theorem. Notice that $\P_{\theta^*,\nu^*}$-almost surely, for all $T\geq 2$, $\sup_{\theta\in\calTheta_r}\sup_{\mu\in\mathcal{M}}\norms{M_{\mu,T}(\theta;\omega)-\bar M(\theta)}_2$ is bounded due to the compactness of $\calTheta$. Therefore we have
\begin{equation*}
\lim_{T\rightarrow\infty}\E_{\theta^*,\nu^*}\left[\sup_{\theta\in\calTheta_r}\sup_{\mu\in\mathcal{M}}\norm{M_{\mu,T}(\theta;\omega)-\bar M(\theta)}_2\right]=0. 
\end{equation*}

For any $q>0$, there exists $\underline T(q)\in\N_+$ such that for all $T\geq \underline T(q)$,
\begin{equation*}
\E_{\theta^*,\nu^*}\left[\sup_{\theta\in\calTheta_r}\sup_{\mu\in\mathcal{M}}\norm{M_{\mu,T}(\theta;\omega)-\bar M(\theta)}_2\right]\leq q. 
\end{equation*}
Applying Markov's inequality, for any $\Delta>0$, 
\begin{equation*}
\P_{\theta^*,\nu^*}\left(\sup_{\theta\in\calTheta_r}\sup_{\mu\in\mathcal{M}}\norm{M_{\mu,T}(\theta;\omega)-\bar M(\theta)}_2\geq\Delta\right)\leq\frac{1}{\Delta}\E_{\theta^*,\nu^*}\left[\sup_{\theta\in\calTheta_r}\sup_{\mu\in\mathcal{M}}\norm{M_{\mu,T}(\theta;\omega)-\bar M(\theta)}_2\right]\leq \frac{q}{\Delta}. 
\end{equation*}
Scaling $q$ yields the desirable result. 

\vspace{1em}\noindent
2. The proof of Part 2 is the same as \citep[Theorem~5]{balakrishnan2017statistical}. We present a sketch for completeness. For all $T\geq \underline T(\Delta,q)$, condition the following proof on the high probability event that $\sup_{\theta\in\calTheta_r}\sup_{\mu\in\mathcal{M}}\norm{M_{\mu,T}(\theta;\omega)-\bar M(\theta)}_2\leq\Delta$. 

Assume $\norms{\theta^{(n-1)}-\theta^*}_2\leq r$, which holds for $n=1$. Then, using the triangle inequality, the result from Theorem~\ref{thm:population}, the above concentration and $\Delta\leq (1-\kappa)r$, we have the following for any $\mu$. 
\begin{align}
\norm{\theta^{(n)}-\theta^*}_2&\leq \norm{\bar M(\theta^{(n-1)})-\theta^*}_2+\norm{M_{\mu,T}(\theta^{(n-1)})-\bar M(\theta^{(n-1)})}_2\nonumber\\
&\leq \kappa\norms{\theta^{(n-1)}-\theta^*}_2+\Delta,\label{eq:onestep} 
\end{align}
and $\norms{\theta^{(n)}-\theta^*}_2\leq \kappa r+(1-\kappa)r=r$. From induction, the one step relation (\ref{eq:onestep}) holds for all $n\in\N_+$. Unrolling (\ref{eq:onestep}) and regrouping the terms completes the proof. \qed

\section{Proofs of auxiliary lemmas}\label{section:technical}

This section presents proofs omitted in earlier sections. Assumptions~\ref{as:nondeg}, \ref{as:continuity} and \ref{as:initial} are assumed. 

In particular, the first three subsections develop a few essential lemmas required for the proofs in later subsections. In Appendix~\ref{subsection:mixing}, we show an important mixing property of the options with failure framework. In Appendix~\ref{subsection:fs}, such a mixing property is used to prove a general contraction result of our forward-backward smoothing procedure (Theorem~\ref{thm:fb} and Corollary~\ref{corollary:fb}), similar to the concept of \emph{filtering stability} in the HMM literature. At a high level, considering the forward-backward recursion in the extended graphical model (Corollary~\ref{corollary:fb}), this result characterizes the effect of changing $\theta$ and the boundary conditions $\alpha^\theta_{k,0}$ and $\beta^\theta_{k,T}$ on the smoothing distribution $\gamma^\theta_{k,t}$, given any observation sequence $\{s_t,a_t\}_{t\in\mathbb Z}$. Due to this high level reasoning, we name this result as the \emph{smoothing stability} lemma. Appendix~\ref{subsection:approximation} provides concrete applications of this lemma to quantities defined in earlier sections. 

\subsection{Mixing}\label{subsection:mixing}

Recall that $\zeta$ is the auxiliary parameter in the options with failure framework. 

\begin{lemma}[Mixing]\label{lemma:mixing} There exists a constant $\eps_b>0$ and a conditional distribution $\bar\pi_{o,b}(o_t,b_t|s_t;\theta)$ parameterized by $\theta$ such that for all $\theta\in\calTheta$, with any input arguments $b_t$, $s_t$, $o_{t-1}$ and $o_t$, 
\begin{equation*}
0<\eps_b\zeta\bar\pi_{o,b}(o_t,b_t|s_t;\theta)\leq\pi_b(b_{t}|s_{t},o_{t-1};\theta_b)\bar\pi_{hi}(o_{t}|s_{t},o_{t-1},b_{t};\theta_{hi})\leq \eps_b^{-1}|\calo|\bar\pi_{o,b}(o_t,b_t|s_t;\theta).
\end{equation*}
\end{lemma}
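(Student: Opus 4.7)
The plan is to construct an explicit reference distribution $\bar\pi_{o,b}(o_t,b_t|s_t;\theta)$ that depends only on $(s_t,\theta)$, then verify both inequalities simultaneously by case analysis on $b_t$ and on whether $o_t=o_{t-1}$. First I would extract a uniform lower bound $c_b\defeq\inf_{\theta\in\calTheta}\min_{b,s,o}\pi_b(b|s,o;\theta_b)>0$, which exists by the strict positivity in Assumption~\ref{as:nondeg}, the continuity in Assumption~\ref{as:continuity}, and the compactness of $\calTheta$.

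Next I would propose the candidate
$$
\bar\pi_{o,b}(o_t,b_t|s_t;\theta)\defeq\begin{cases}\tfrac{1}{2}\pi_{hi}(o_t|s_t;\theta_{hi}), & b_t=1,\\ \tfrac{1}{2|\calo|}, & b_t=0,\end{cases}
$$
which sums to one over $(o_t,b_t)$ and is therefore a valid conditional distribution. The motivation is that on the $b_t=1$ branch $\bar\pi_{hi}$ already drops its dependence on $o_{t-1}$ by definition, so I mirror it with $\pi_{hi}$; on the $b_t=0$ branch the two possible values of $\bar\pi_{hi}$ are $1-\zeta+\zeta/|\calo|$ and $\zeta/|\calo|$, which are incomparable in size, so a uniform reference of order $1/|\calo|$ is the natural choice.

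Then I would compute the ratio $g/\bar\pi_{o,b}$, where $g$ denotes the product in the middle of the displayed inequality, in three cases. For $b_t=1$ it equals $2\pi_b(1|s_t,o_{t-1};\theta_b)\in[2c_b,2]$; for $b_t=0$ with $o_t=o_{t-1}$ it equals $2|\calo|\pi_b(0|\cdot)(1-\zeta+\zeta/|\calo|)$, which via $\zeta/|\calo|\leq 1-\zeta+\zeta/|\calo|\leq 1$ lies in $[2c_b\zeta,2|\calo|]$; and for $b_t=0$ with $o_t\neq o_{t-1}$ it equals $2\zeta\pi_b(0|\cdot)\in[2c_b\zeta,2\zeta]\subseteq[2c_b\zeta,2|\calo|]$. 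Intersecting over the three cases and invoking $\zeta\in(0,1)$, setting $\eps_b\defeq\min(2c_b,1/2)$ immediately yields $\eps_b\zeta\leq g/\bar\pi_{o,b}\leq\eps_b^{-1}|\calo|$, and strict positivity of the lower bound follows from $\bar\pi_{o,b}>0$.

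The only real obstacle is that a single $\eps_b$ must cover all three cases, which is precisely what forces the asymmetric $\zeta$ and $|\calo|$ factors in the statement: the $\zeta$ on the lower side accommodates the $b_t=0,\,o_t\neq o_{t-1}$ branch whose values are of order $\zeta/|\calo|$, while the $|\calo|$ on the upper side accommodates the $b_t=0,\,o_t=o_{t-1}$ branch whose values are of order $1$ against a reference of order $1/|\calo|$. Apart from this alignment, the argument reduces to the uniform positivity of $\pi_b$ and the explicit closed form of $\bar\pi_{hi}$.
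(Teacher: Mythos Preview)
Your proof is correct. It follows the same overall architecture as the paper's argument---extract a uniform lower bound $c_b$ on $\pi_b$ from non-degeneracy, continuity, and compactness, then verify the sandwich inequality by case analysis on $b_t$ and on whether $o_t=o_{t-1}$---but your choice of reference distribution is different and in fact simpler.

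The paper first constructs an intermediate one-variable reference $\bar\pi_b(b_t|s_t;\theta_b)$ by normalizing $f(b_t,s_t;\theta_b)\defeq\min_{o_{t-1}}\pi_b(b_t|s_t,o_{t-1};\theta_b)$, proves $\eps_b\bar\pi_b\leq\pi_b\leq\eps_b^{-1}\bar\pi_b$ with $\eps_b=c_b/2$, and then sets $\bar\pi_{o,b}(o_t,b_t|s_t;\theta)$ to be $\bar\pi_b(0|s_t;\theta_b)/|\calo|$ when $b_t=0$ and $\bar\pi_b(1|s_t;\theta_b)\pi_{hi}(o_t|s_t;\theta_{hi})$ when $b_t=1$. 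You instead bypass the intermediate step entirely and put a fixed weight $1/2$ on each value of $b_t$, yielding a reference that depends on $\theta$ only through $\theta_{hi}$. Both constructions produce the same asymmetric constants $\eps_b\zeta$ and $\eps_b^{-1}|\calo|$, and since downstream (Lemma~\ref{lemma:fstability}) only the existence of \emph{some} $\bar\pi_{o,b}$ with these bounds is used to build the Doeblin minorization, either choice suffices. Your version is marginally more elementary; the paper's version has the minor conceptual advantage that $\bar\pi_b$ tracks the actual shape of $\pi_b$ in $b_t$, which could tighten constants in specific parameterizations, but this is not exploited anywhere.
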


\begin{proof}[Proof of Lemma~\ref{lemma:mixing}]The proof is separated into two parts. 

\vspace{1em}\noindent
1. We first show an intermediate result: there exists a constant $\eps_b>0$ and a conditional distribution $\bar \pi_b(b_t|s_t; \theta_b)$ parameterized by $\theta_b$ such that for all $\theta_b\in\calTheta_b$, with any input arguments $b_t$, $s_t$ and $o_{t-1}$, 
\begin{equation*}
0<\eps_b\bar \pi_b(b_t|s_t; \theta_b)\leq\pi_b(b_{t}|s_{t},o_{t-1};\theta_b)\leq \eps_b^{-1}\bar \pi_b(b_t|s_t; \theta_b). 
\end{equation*}

This can be proved as follows. Let $c_b=\inf_{\theta_b\in\calTheta_b}\min_{b_t,s_t,o_{t-1}}\pi_b(b_{t}|s_{t},o_{t-1};\theta_b)$. Similar to the procedure in Appendix~\ref{section:proof1}, from the non-degeneracy assumption, the differentiabiilty assumption and $\calTheta$ being compact, we have $c_b>0$. For any $\theta_b\in\Theta_b$, with any input arguments $b_t$ and $s_t$, let $f(b_t,s_t;\theta_b)=\min_{o_{t-1}\in \calo}\pi_b(b_{t}|s_{t},o_{t-1};\theta_b)$. Observe that $c_b\leq f(b_t,s_t;\theta_b)\leq 1$. Let $\eps_b=c_b/2$ and
\begin{equation*}
\bar \pi_b(b_t|s_t; \theta_b)=\frac{f(b_t,s_t;\theta_b)}{\sum_{b'_t\in\{0,1\}}f(b'_t,s_t;\theta_b)}. 
\end{equation*}
Clearly $\eps_b\bar \pi_b(b_t|s_t; \theta_b)>0$. Moreover, for any $o_{t-1}$, $\eps_b\bar \pi_b(b_t|s_t; \theta_b)< 2c_b\bar \pi_b(b_t|s_t; \theta_b)\leq f(b_t,s_t;\theta_b)\leq \pi_b(b_{t}|s_{t},o_{t-1};\theta_b)$. 

On the other hand, with any input arguments, 
\begin{equation*}
\eps_b^{-1}\bar \pi_b(b_t|s_t; \theta_b)\geq \eps_b^{-1}c_b/2=1\geq \pi_b(b_{t}|s_{t},o_{t-1};\theta_b), 
\end{equation*}
which completes the proof of the first part. 

\vspace{1em}\noindent
2. Define $\bar\pi_{o,b}(o_t,b_t|s_t;\theta)$ as follows. With any input arguments, let
\begin{align*}
&\bar\pi_{o,b}(o_t,b_t=0|s_t;\theta)\defeq\bar \pi_b(b_t=0|s_t; \theta_b)/|\calo|,\\
&\bar\pi_{o,b}(o_t,b_t=1|s_t;\theta)\defeq\bar \pi_b(b_t=1|s_t; \theta_b)\pi_{hi}(o_t|s_t;\theta_{hi}).
\end{align*}

Clearly $\eps_b\zeta\bar\pi_{o,b}(o_t,b_t|s_t;\theta)>0$. Omit the dependency on $\theta$ for a cleaner notation since every term is parameterized by $\theta$. When $b_t=1$, with any other input arguments, 
\begin{equation*}
\eps_b\bar \pi_b(b_t=1|s_t)\pi_{hi}(o_t|s_t)\leq\pi_b(b_{t}=1|s_{t},o_{t-1})\bar\pi_{hi}(o_{t}|s_{t},o_{t-1},b_{t}=1)\leq \eps_b^{-1}\bar \pi_b(b_t=1|s_t)\pi_{hi}(o_t|s_t).
\end{equation*}
Similarly, when $b_t=0$ and $o_t=o_{t-1}$, 
\begin{align*}
\eps_b\bar \pi_b(b_t=0|s_t)\zeta/|\calo|&\leq\eps_b\bar \pi_b(b_t=0|s_t)\bigg(1-\frac{|\calo|-1}{|\calo|}\zeta\bigg)\\
&\leq\pi_b(b_{t}=0|s_{t},o_{t-1})\bar\pi_{hi}(o_{t}=o_{t-1}|s_{t},o_{t-1},b_{t}=0)\\
&\leq \eps_b^{-1}\bar \pi_b(b_t=0|s_t).
\end{align*}

Finally, when $b_t=0$ and $o_t\neq o_{t-1}$,
\begin{equation*}
\eps_b\bar \pi_b(b_t=0|s_t)\zeta/|\calo|\leq \pi_b(b_{t}=0|s_{t},o_{t-1})\bar\pi_{hi}(o_{t}|s_{t},o_{t-1},b_{t}=0)\leq \eps_b^{-1}\bar \pi_b(b_t=0|s_t)\zeta/|\calo|.
\end{equation*}
Combining the above cases and the definition of $\bar\pi_{o,b}(o_t,b_t|s_t;\theta)$ completes the proof.
\end{proof}

\subsection{Smoothing stability}\label{subsection:fs}

Before stating the smoothing stability lemma, we introduce a few definitions. The quantities defined in this subsection depend on an observation sequence $\{s_t,a_t\}_{t\in\mathbb Z}$, but such a dependency is usually omitted to simplify the notation, unless specified otherwise. Consistent with our notations so far, in the following we make extensive use of the proportional symbol $\propto$. 

\subsubsection{Forward and backward recursion operators}\label{subsubsection:fb}

With any given observation sequence $\{s_t,a_t\}_{t\in\mathbb Z}$ and any $\theta\in\calTheta$, define the filtering operator $F^\theta_t$ as the following. For any probability measure $\vphi$ over $\mathcal O\times\{0,1\}$, $F^\theta_t\vphi$ is also a probability measure such that with any input arguments $o_t$ and $b_t$, 
\begin{equation}
F^\theta_t\vphi(o_{t},b_{t})\propto\sum_{o_{t-1},b_{t-1}}\pi_b(b_t|s_t,o_{t-1};\theta_b)\bar\pi_{hi}(o_t|s_t,o_{t-1},b_t;\theta_{hi})\pi_{lo}(a_t|s_t,o_t;\theta_{lo})\vphi(o_{t-1},b_{t-1}). \label{eq:def1}
\end{equation}
The RHS has exactly the form of the forward recursion, therefore the recursion on both $\alpha^{\theta}_{k,t}$ in (\ref{eq:forward}) and $\alpha^\theta_{\mu,t}$ in (\ref{eq:forwarde}) can be expressed using $F^\theta_t$. For generality, let $\{\vphi^\theta_t\}_{t\in\mathbb Z}$ and $\{\hat\vphi^{\hat\theta}_t\}_{t\in\mathbb Z}$ be any two indexed sets of probability measures such that $F^\theta_t\vphi^\theta_{t-1}=\vphi^\theta_t$ and $F^{\hat\theta}_t\hat\vphi^{\hat\theta}_{t-1}=\hat\vphi^{\hat\theta}_t$. We restrict $\{\vphi^\theta_t\}_{t\in\mathbb Z}$ and $\{\hat\vphi^{\hat\theta}_t\}_{t\in\mathbb Z}$ to be strictly positive. Due to Assumption~\ref{as:nondeg}, such a restriction is valid. Notice that $\theta$ and $\hat\theta$ here can be equal. We use the seemingly more complicated notation $\{\hat\vphi^{\hat\theta}_t\}_{t\in\mathbb Z}$ because even if $\theta=\hat\theta$, $\{\vphi^\theta_t\}_{t\in\mathbb Z}$ and $\{\hat\vphi^{\hat\theta}_t\}_{t\in\mathbb Z}$ are still different; in this case they are just two different sets of probability measures satisfying the same recursion $F^\theta_t$. 

Similarly, we define the backward recursion operator $B^\theta_t$ as follows. For any probability measure $\rho$ over $\mathcal O\times\{0,1\}$, $B^\theta_t\rho$ is also a probability measure such that with any input arguments $o_t$ and $b_t$, 
\begin{multline}
B^\theta_t\rho(o_t,b_t)\propto\sum_{o_{t+1},b_{t+1}}\pi_b(b_{t+1}|s_{t+1},o_{t};\theta_b)\bar\pi_{hi}(o_{t+1}|s_{t+1},o_{t},b_{t+1};\theta_{hi})\\
\times\pi_{lo}(a_{t+1}|s_{t+1},o_{t+1};\theta_{lo})\rho(o_{t+1},b_{t+1}).\label{eq:def2}
\end{multline}
The recursion on both $\beta^\theta_{t|T}$ in (\ref{eq:backward}) and $\beta^\theta_{k,t}$ in (\ref{eq:backwarde}) can be expressed using $B^\theta_t$. Let $\{\rho^\theta_t\}_{t\in\mathbb Z}$ and $\{\hat\rho^{\hat\theta}_t\}_{t\in\mathbb Z}$ be any two indexed sets of probability measures such that $B^\theta_t\rho^\theta_{t+1}=\rho^\theta_t$ and $B^{\hat\theta}_t\hat\rho^{\hat\theta}_{t+1}=\hat\rho^{\hat\theta}_t$. We restrict $\{\rho^\theta_t\}_{t\in\mathbb Z}$ and $\{\hat\rho^{\hat\theta}_t\}_{t\in\mathbb Z}$ to be strictly positive. 

The operation $\otimes$ is defined as follows: $\{(\vphi^{\theta}\otimes\hat\rho^{\hat\theta})_t\}_{t\in\mathbb Z}$ is an indexed set of probability measures such that for any input arguments $o_t$ and $b_t$,  
\begin{equation}
(\vphi^{\theta}\otimes\hat\rho^{\hat\theta})_t(o_t,b_t)\propto \vphi^{\theta}_t(o_t,b_t)\hat\rho^{\hat\theta}_t(o_t,b_t).\label{eq:def3}
\end{equation}

Finally, we clarify the use of $\propto$ in the above definitions. In (\ref{eq:def1}), (\ref{eq:def2}) and (\ref{eq:def3}), the normalizing constants replaced by $\propto$ are independent of the input arguments $(o_t,b_t)$. 

\subsubsection{Forward and backward smoothing operators}

For any $\theta,\hat\theta\in\calTheta$ and any $t$, with any observation sequence $\{s_t,a_t\}_{t\in\mathbb Z}$ and any input arguments $o_t$ and $b_t$, observe that
\begin{multline*}
(\hat\vphi^{\hat\theta}\otimes\rho^{\theta})_t(o_t,b_t)\propto\sum_{o_{t-1},b_{t-1}}\pi_b(b_t|s_t,o_{t-1};\hat\theta_b)\bar\pi_{hi}(o_t|s_t,o_{t-1},b_t;\hat\theta_{hi})\pi_{lo}(a_t|s_t,o_t;\hat\theta_{lo})\\
\times\rho^{\theta}_t(o_t,b_t)\frac{(\hat\vphi^{\hat\theta}\otimes\rho^{\theta})_{t-1}(o_{t-1},b_{t-1})}{\rho^{\theta}_{t-1}(o_{t-1},b_{t-1})},
\end{multline*}
and
\begin{equation*}
\rho^{\theta}_{t-1}(o_{t-1},b_{t-1})\propto \sum_{o'_{t},b'_{t}}\pi_b(b'_{t}|s_{t},o_{t-1};\theta_b)\bar\pi_{hi}(o'_{t}|s_{t},o_{t-1},b'_{t};\theta_{hi})\pi_{lo}(a_{t}|s_{t},o'_{t};\theta_{lo})\rho^{\theta}_t(o'_{t},b'_{t}). 
\end{equation*}
To simplify notation, let
\begin{equation}
h(\theta;o_{t-1},s_t,a_t,o_t,b_t)=\pi_b(b_t|s_t,o_{t-1};\theta_b)\bar\pi_{hi}(o_t|s_t,o_{t-1},b_t;\theta_{hi})\pi_{lo}(a_t|s_t,o_t;\theta_{lo}). \label{eq:definitionofh}
\end{equation}
Then, 
\begin{equation}
(\hat\vphi^{\hat\theta}\otimes\rho^{\theta})_t(o_t,b_t)=C^{\hat\theta,\theta}_F\sum_{o_{t-1},b_{t-1}}\frac{h(\hat\theta;o_{t-1},s_t,a_t,o_t,b_t)\rho^{\theta}_t(o_{t},b_{t})(\hat\vphi^{\hat\theta}\otimes\rho^{\theta})_{t-1}(o_{t-1},b_{t-1})}{\sum_{o'_t,b'_t}h(\theta;o_{t-1},s_t,a_t,o'_t,b'_t)\rho^{\theta}_t(o'_{t},b'_{t})},\label{eq:forwardop}
\end{equation}
where $C^{\hat\theta,\theta}_F$ is a normalizing constant such that
\begin{equation*}
\left(C^{\hat\theta,\theta}_F\right)^{-1}=\sum_{o_{t-1},b_{t-1}}\frac{\sum_{o_t,b_t}h(\hat\theta;o_{t-1},s_t,a_t,o_t,b_t)\rho^{\theta}_t(o_{t},b_{t})}{\sum_{o'_t,b'_t}h(\theta;o_{t-1},s_t,a_t,o'_t,b'_t)\rho^{\theta}_t(o'_{t},b'_{t})}(\hat\vphi^{\hat\theta}\otimes\rho^{\theta})_{t-1}(o_{t-1},b_{t-1}).
\end{equation*}

From (\ref{eq:forwardop}), we define the forward smoothing operator $K^{\hat\theta,\theta}_{F,t}$ on the probability measure $(\hat\vphi^{\hat\theta}\otimes\rho^{\theta})_{t-1}$ such that as probability measures, 
\begin{equation*}
(\hat\vphi^{\hat\theta}\otimes\rho^{\theta})_{t-1}K^{\hat\theta,\theta}_{F,t}=(\hat\vphi^{\hat\theta}\otimes\rho^{\theta})_{t}.
\end{equation*}
The subscript $F$ in $K^{\hat\theta,\theta}_{F,t}$ stands for \emph{forward}. $K^{\hat\theta,\theta}_{F,t}$ depends on the the parameters $\theta$ and $\hat\theta$, the observation $\{s_t,a_t\}_{t\in\mathbb Z}$ and the specific choice of $\{\rho^\theta_t\}_{t\in\mathbb Z}$. In the general case of $\theta\neq\hat\theta$, $K^{\hat\theta,\theta}_{F,t}$ is a nonlinear operator which requires rather sophisticated analysis. However, when $\theta=\hat\theta$, it is straightforward to verify that the normalizing constant $C^{\theta,\theta}_F=1$, and $K^{\theta,\theta}_{F,t}$ becomes a linear operator. 

In fact, the linear operator $K^{\theta,\theta}_{F,t}$ can be regarded as the standard operation of a Markov transition kernel on probability measures. With a slight overload of notation, define such a Markov transition kernel on $\calo\times\{0,1\}$, entry-wise, as the following. For any $(o_t,b_t)$ and $(o_{t-1},b_{t-1})$ in $\calo\times\{0,1\}$, 
\begin{equation}
K^{\theta,\theta}_{F,t}(o_{t},b_{t}|o_{t-1},b_{t-1})\defeq\frac{h(\theta;o_{t-1},s_t,a_t,o_t,b_t)\rho^\theta_t(o_t,b_t)}{\sum_{o'_{t},b'_{t}}h(\theta;o_{t-1},s_t,a_t,o'_t,b'_t)\rho^\theta_t(o'_{t},b'_{t})}.\label{eq:forwardkernel}
\end{equation}
We name this Markov transition kernel as the forward smoothing kernel. Such a definition is analogous to \emph{Markovian decomposition} in the HMM literature \citep{cappe2006inference}. The only caveat here is that we also allow perturbations on the parameter. The resulting operator $K^{\hat\theta,\theta}_{F,t}$ is nonlinear and no longer corresponds to a Markov transition kernel. 

To proceed, we characterize the difference between operators $K^{\hat\theta,\theta}_{F,t}$ and $K^{\theta,\theta}_{F,t}$ when $\hat\theta$ and $\theta$ are close. First, we show a version of Lipschitz continuity for the options with failure framework. 

\begin{lemma}[Lipschitz continuity]\label{lemma:lipschitz} For all $\theta\in\calTheta$ and $\delta>0$, there exists a real number $L_{\theta,\delta}$ such that with any input arguments $o_{t-1}$, $s_t$, $a_t$, $o_t$ and $b_t$, the function $h(\tilde\theta;o_{t-1},s_t,a_t,o_t,b_t)$ defined in (\ref{eq:definitionofh}) is $L_{\theta,\delta}$-Lipschitz with respect to $\tilde\theta$ on the set $\{\tilde\theta;\tilde\theta\in\calTheta, \norms{\tilde\theta-\theta}_2\leq \delta\}$. 
Moreover, $L_{\theta,\delta}$ is upper bounded by a constant that does not depend on $\theta$ and $\delta$. 
\end{lemma}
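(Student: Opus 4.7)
The plan is to establish the Lipschitz property of $h$ factor-by-factor and then combine the three bounds via a standard product rule. Since $\calTheta$ is compact, the intersection $\{\tilde\theta\in\calTheta:\norms{\tilde\theta-\theta}_2\leq\delta\}$ is always a subset of $\calTheta$, so the parameter $\delta$ plays no essential role; I would in fact produce a single constant that serves as $L_{\theta,\delta}$ uniformly in $\theta$ and $\delta$.

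First I would handle each of the three base factors $\pi_b$, $\pi_{lo}$, $\pi_{hi}$ individually. By Assumption~\ref{as:continuity}, each is continuously differentiable on $\tilde\calTheta$ with respect to the corresponding sub-parameter. Because $\calTheta\subset\tilde\calTheta$ is compact and the state, action, option, and termination-indicator spaces are all finite, the gradient norms $\norms{\nabla\pi_b}_2$, $\norms{\nabla\pi_{lo}}_2$ and $\norms{\nabla\pi_{hi}}_2$ each attain a finite supremum over $\theta\in\calTheta$ and over the finitely many remaining argument combinations; call this common upper bound $G$. Since $\calTheta$ is convex, the mean value theorem then yields global $G$-Lipschitz continuity of each of these factors with respect to the corresponding sub-parameter, uniformly in all other arguments. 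Moreover, each factor is bounded by $1$, being a conditional probability value.

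The factor $\bar\pi_{hi}$ requires a brief case analysis because of its piecewise definition. If $b_t=0$, then $\bar\pi_{hi}$ takes one of the constant values $1-\zeta+\zeta/|\calo|$ or $\zeta/|\calo|$, independent of $\theta$, hence trivially $0$-Lipschitz. If $b_t=1$, then $\bar\pi_{hi}(o_t|s_t,o_{t-1},1;\theta_{hi})=\pi_{hi}(o_t|s_t;\theta_{hi})$, which is $G$-Lipschitz by the preceding paragraph. In both cases, $\bar\pi_{hi}$ is $G$-Lipschitz in $\theta_{hi}$ and bounded by $1$.

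To combine, I invoke the elementary product rule: if $f_1,f_2,f_3$ are real-valued functions on a common metric space with $\max_i\sup|f_i|\leq 1$ and each $f_i$ is $G$-Lipschitz, then a telescoping argument gives
\begin{equation*}
|f_1f_2f_3(\tilde\theta_1)-f_1f_2f_3(\tilde\theta_2)|\leq 3G\,\norms{\tilde\theta_1-\tilde\theta_2}_2.
\end{equation*}
Since the Euclidean norm on each sub-component $\theta_b,\theta_{lo},\theta_{hi}$ is dominated by $\norms{\theta}_2$, this yields $L_{\theta,\delta}\leq 3G$, a constant independent of $\theta$ and $\delta$. No step is particularly delicate; the only mildly technical point is the case analysis on $b_t$ for the piecewise $\bar\pi_{hi}$, and I do not anticipate any substantive obstacle.
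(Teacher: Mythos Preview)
Your proposal is correct and follows essentially the same route as the paper: both exploit continuous differentiability (Assumption~\ref{as:continuity}) together with the compactness and convexity of $\calTheta$ to produce a single Lipschitz constant for $h$ valid on all of $\calTheta$, which then automatically bounds $L_{\theta,\delta}$ uniformly in $\theta$ and $\delta$. The paper applies this reasoning to $h$ as a single continuously differentiable function, whereas you decompose $h$ factor-by-factor (with the case split on $b_t$ for $\bar\pi_{hi}$) and recombine via the product rule to obtain the explicit bound $3G$; this is a presentational refinement rather than a different argument.
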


\begin{proof}[Proof of Lemma~\ref{lemma:lipschitz}]
Due to Assumption~\ref{as:continuity}, with any input arguments $o_{t-1}$, $s_t$, $a_t$, $o_t$ and $b_t$, $h(\tilde\theta;o_{t-1},s_t,a_t,o_t,b_t)$ is continuously differentiable with respect to $\tilde\theta\in\tilde\calTheta$. As continuously differentiable functions are Lipschitz continuous on convex and compact subsets, $h(\tilde\theta;o_{t-1},s_t,a_t,o_t,b_t)$ is Lipschitz continuous on $\calTheta$, hence also on $\{\tilde\theta;\tilde\theta\in\calTheta, \norms{\tilde\theta-\theta}_2\leq \delta\}$. The Lipschitz constants depend on the choice of input arguments $o_{t-1}$, $s_t$, $a_t$, $o_t$ and $b_t$. 

We can let $L_{\theta,\delta}$ be the smallest Lipschitz constant on $\{\tilde\theta;\tilde\theta\in\calTheta, \norms{\tilde\theta-\theta}_2\leq \delta\}$ that holds for all input arguments $o_{t-1}$, $s_t$, $a_t$, $o_t$ and $b_t$. Clearly $L_{\theta,\delta}$ is upper bounded by any Lipschitz constant on $\calTheta$ that holds for all input arguments, which does not depend on $\theta$ and $\delta$. 
\end{proof}

Next, we bound the difference between operators $K^{\hat\theta,\theta}_{F,t}$ and $K^{\theta,\theta}_{F,t}$. 

\begin{lemma}[Perturbation on the forward smoothing kernel]\label{lemma:perturbsmooth} Let $\vphi$ be any probability measure on $\mathcal O\times\{0,1\}$. Let $K^{\hat\theta,\theta}_{F,t}$ and $K^{\theta,\theta}_{F,t}$ be defined with the same observation sequence $\{s_t,a_t\}_{t\in\mathbb Z}$ and the same choice of $\{\rho^\theta_t\}_{t\in\mathbb Z}$. Their difference is only in the first entry of the superscript ($\hat\theta$ in $K^{\hat\theta,\theta}_{F,t}$; $\theta$ in $K^{\theta,\theta}_{F,t}$). Then, for all $t$, $\vphi$, $\theta$, $\hat\theta$, $\{s_t,a_t\}_{t\in\mathbb Z}$ and $\{\rho^\theta_t\}_{t\in\mathbb Z}$, 
\begin{equation*}
\norm{\vphi K^{\hat\theta,\theta}_{F,t}-\vphi K^{\theta,\theta}_{F,t}}_\tv\leq\frac{\max_{o_{t-1},o_t,b_t}h(\theta;o_{t-1},s_t,a_t,o_t,b_t)}{\min_{o_{t-1},o_t,b_t}h(\theta;o_{t-1},s_t,a_t,o_t,b_t)}\frac{L_{\theta,\norms{\hat\theta-\theta}_2}\norms{\hat\theta-\theta}_2}{\min_{o_{t-1},o_t,b_t}h(\hat\theta;o_{t-1},s_t,a_t,o_t,b_t)}.
\end{equation*}
\end{lemma}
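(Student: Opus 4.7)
The plan is to write both probability measures as normalized versions of explicit non-negative unnormalized densities on $\calo\times\{0,1\}$, and then apply an elementary total variation inequality for ratios. For convenience set $D(o_{t-1})\defeq\sum_{o'_t,b'_t}h(\theta;o_{t-1},s_t,a_t,o'_t,b'_t)\rho^{\theta}_t(o'_t,b'_t)$, which is the common denominator appearing inside both $K^{\hat\theta,\theta}_{F,t}$ and $K^{\theta,\theta}_{F,t}$. Define
\begin{align*}
f(o_t,b_t)&\defeq\sum_{o_{t-1},b_{t-1}}\frac{h(\hat\theta;o_{t-1},s_t,a_t,o_t,b_t)\,\rho^{\theta}_t(o_t,b_t)\,\vphi(o_{t-1},b_{t-1})}{D(o_{t-1})},\\
g(o_t,b_t)&\defeq\sum_{o_{t-1},b_{t-1}}\frac{h(\theta;o_{t-1},s_t,a_t,o_t,b_t)\,\rho^{\theta}_t(o_t,b_t)\,\vphi(o_{t-1},b_{t-1})}{D(o_{t-1})}.
\end{align*}
Then $\vphi K^{\hat\theta,\theta}_{F,t}=f/Z_f$ with $Z_f=\sum_{o_t,b_t}f(o_t,b_t)$, while $\vphi K^{\theta,\theta}_{F,t}=g$ itself, because the choice of $D$ makes $g$ already sum to one---this is exactly the statement that $K^{\theta,\theta}_{F,t}$ is a genuine Markov transition kernel.

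The next step is an elementary lemma: for any two non-negative, not-identically-zero functions $f,g$ with $Z_f=\sum f$ and $Z_g=\sum g$, one has $\norm{f/Z_f-g/Z_g}_\tv\leq\sum|f-g|/Z_f$. This follows by writing $f/Z_f-g/Z_g=(f-g)/Z_g+f(Z_g-Z_f)/(Z_f Z_g)$, taking $\ell^1$ norms, and using $|Z_g-Z_f|\leq\sum|f-g|$ to obtain the analogous bound with $Z_g$ in the denominator; swapping the labels of $f$ and $g$ then produces the $Z_f$ version. Applying this to our $(f,g)$ reduces the problem to an upper bound on $\sum|f-g|$ and a lower bound on $Z_f$.

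For the numerator, the only $\theta$-dependent component inside each summand of $f-g$ is $h(\hat\theta;\cdot)-h(\theta;\cdot)$, so Lemma~\ref{lemma:lipschitz} gives $|h(\hat\theta;\cdot)-h(\theta;\cdot)|\leq L_{\theta,\norms{\hat\theta-\theta}_2}\norms{\hat\theta-\theta}_2$ uniformly in the remaining arguments. Pulling this factor out of the sums, using $\sum_{o_t,b_t}\rho^{\theta}_t(o_t,b_t)=1$, and bounding $\sum_{o_{t-1},b_{t-1}}\vphi(o_{t-1},b_{t-1})/D(o_{t-1})\leq 1/\min_{o_{t-1}}D(o_{t-1})\leq 1/\min_{o_{t-1},o_t,b_t}h(\theta;\cdot)$ yields $\sum|f-g|\leq L_{\theta,\norms{\hat\theta-\theta}_2}\norms{\hat\theta-\theta}_2/\min_{o_{t-1},o_t,b_t}h(\theta;\cdot)$. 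For the denominator, exchanging the order of summation gives $Z_f=\sum_{o_{t-1},b_{t-1}}[\vphi(o_{t-1},b_{t-1})/D(o_{t-1})]\sum_{o_t,b_t}h(\hat\theta;\cdot)\rho^{\theta}_t(o_t,b_t)$; lower-bounding the inner sum by $\min_{o_{t-1},o_t,b_t}h(\hat\theta;\cdot)$ and $D(o_{t-1})$ by $\max_{o_{t-1},o_t,b_t}h(\theta;\cdot)$ yields $Z_f\geq\min_{o_{t-1},o_t,b_t}h(\hat\theta;\cdot)\,/\,\max_{o_{t-1},o_t,b_t}h(\theta;\cdot)$. Dividing the first bound by the second produces exactly the ratio claimed in the lemma.

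The main obstacle is purely bookkeeping: one must track which operator carries the $\hat\theta$ perturbation (the numerator of the forward smoothing action) and which keeps $\theta$ (the denominator $D$ that is inherited from $\{\rho^\theta_t\}_{t\in\mathbb Z}$), and recognize that this denominator mismatch is precisely what makes $K^{\hat\theta,\theta}_{F,t}$ nonlinear and forces the nontrivial renormalization constant $Z_f$. Once this structure is untangled, the argument is deterministic: no probabilistic tool beyond the Lipschitz estimate of Lemma~\ref{lemma:lipschitz} is required, and the mixing input to the overall smoothing stability program will only enter at the next stage when this single-step perturbation bound is iterated.
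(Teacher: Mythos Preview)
Your argument is correct and reaches the same bound. The core idea matches the paper's: both proofs exploit that $K^{\theta,\theta}_{F,t}$ is already a Markov kernel (so $g$ is normalized), isolate the perturbation $h(\hat\theta;\cdot)-h(\theta;\cdot)$, and control the remaining normalization via the Lipschitz estimate of Lemma~\ref{lemma:lipschitz}. The organizational difference is that you package the normalization issue into the general inequality $\norm{f/Z_f-g/Z_g}_\tv\leq\sum|f-g|/Z_f$ and then bound $\sum|f-g|$ and $Z_f$ separately, whereas the paper keeps the constant $C^{\hat\theta,\theta}_F=1/Z_f$ explicit, splits $|C^{\hat\theta,\theta}_F h(\hat\theta)-h(\theta)|\leq C^{\hat\theta,\theta}_F|h(\hat\theta)-h(\theta)|+|C^{\hat\theta,\theta}_F-1|\,h(\theta)$, and bounds $C^{\hat\theta,\theta}_F$ and $|C^{\hat\theta,\theta}_F-1|$ individually. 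Your abstraction is slightly cleaner and reusable; the paper's route is more hands-on but yields the same intermediate quantities (in particular your $Z_f$ is exactly $(C^{\hat\theta,\theta}_F)^{-1}$), and both collapse to the identical final expression.
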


\begin{proof}[Proof of Lemma~\ref{lemma:perturbsmooth}]
From the definitions, for any $t$, $\vphi$, $\theta$, $\hat\theta$, $\{s_t,a_t\}_{t\in\mathbb Z}$ and $\{\rho^\theta_t\}_{t\in\mathbb Z}$, 
\begin{align*}
&\norm{\vphi K^{\hat\theta,\theta}_{F,t}-\vphi K^{\theta,\theta}_{F,t}}_\tv\\
=~&\frac{1}{2}\sum_{o_t,b_t}\left|\sum_{o_{t-1},b_{t-1}}\frac{\left[C^{\hat\theta,\theta}_F h(\hat\theta;o_{t-1},s_t,a_t,o_t,b_t)-h(\theta;o_{t-1},s_t,a_t,o_t,b_t)\right]}{\sum_{o'_{t},b'_{t}}h(\theta;o_{t-1},s_t,a_t,o'_t,b'_t)\rho^\theta_t(o'_{t},b'_{t})}\rho^\theta_t(o_t,b_t)\vphi(o_{t-1},b_{t-1})\right|\\
\leq~&\frac{1}{2}\sum_{o_{t-1},b_{t-1}}\frac{\sum_{o_t,b_t}\left|C^{\hat\theta,\theta}_F h(\hat\theta;o_{t-1},s_t,a_t,o_t,b_t)-h(\theta;o_{t-1},s_t,a_t,o_t,b_t)\right|\rho^\theta_t(o_t,b_t)}{\sum_{o'_{t},b'_{t}}h(\theta;o_{t-1},s_t,a_t,o'_t,b'_t)\rho^\theta_t(o'_{t},b'_{t})}\vphi(o_{t-1},b_{t-1}).
\end{align*}

From the definition of the normalizing constant $C^{\hat\theta,\theta}_F$, we have
\begin{equation*}
\left(C^{\hat\theta,\theta}_F\right)^{-1}=\sum_{o_{t-1},b_{t-1}}\frac{\sum_{o_t,b_t}h(\hat\theta;o_{t-1},s_t,a_t,o_t,b_t)\rho^{\theta}_t(o_{t},b_{t})}{\sum_{o'_t,b'_t}h(\theta;o_{t-1},s_t,a_t,o'_t,b'_t)\rho^{\theta}_t(o'_{t},b'_{t})}\vphi(o_{t-1},b_{t-1}).
\end{equation*}
Therefore, 
\begin{equation*}
C^{\hat\theta,\theta}_F\leq\max_{o_{t-1}}\frac{\sum_{o_t,b_t}h(\theta;o_{t-1},s_t,a_t,o_t,b_t)\rho^{\theta}_t(o_{t},b_{t})}{\sum_{o_t,b_t}h(\hat\theta;o_{t-1},s_t,a_t,o_t,b_t)\rho^{\theta}_t(o_{t},b_{t})},
\end{equation*}
and
\begin{align*}
&\left|C^{\hat\theta,\theta}_F-1\right|\\
=~&\left|\sum_{o_{t-1},b_{t-1}}\frac{\sum_{o_t,b_t}[h(\hat\theta;o_{t-1},s_t,a_t,o_t,b_t)-h(\theta;o_{t-1},s_t,a_t,o_t,b_t)]\rho^{\theta}_t(o_{t},b_{t})}{\sum_{o_t,b_t}h(\theta;o_{t-1},s_t,a_t,o_t,b_t)\rho^{\theta}_t(o_{t},b_{t})}\vphi(o_{t-1},b_{t-1})\right|C^{\hat\theta,\theta}_F\\
\leq~&\frac{L_{\theta,\norms{\hat\theta-\theta}_2}\norms{\hat\theta-\theta}_2 C^{\hat\theta,\theta}_F}{\min_{o_{t-1}}\sum_{o_t,b_t}h(\theta;o_{t-1},s_t,a_t,o_t,b_t)\rho^{\theta}_t(o_{t},b_{t})}.
\end{align*}

As a result, for any given $o_{t-1}$, $o_t$ and $b_t$, 
\begin{align*}
&\left|C^{\hat\theta,\theta}_Fh(\hat\theta;o_{t-1},s_t,a_t,o_t,b_t)-h(\theta;o_{t-1},s_t,a_t,o_t,b_t)\right|\\
\leq~&C^{\hat\theta,\theta}_F\left|h(\hat\theta;o_{t-1},s_t,a_t,o_t,b_t)-h(\theta;o_{t-1},s_t,a_t,o_t,b_t)\right|+\left|C^{\hat\theta,\theta}_F-1\right|h(\theta;o_{t-1},s_t,a_t,o_t,b_t)\\
\leq~& \left[1+\frac{h(\theta;o_{t-1},s_t,a_t,o_t,b_t)}{\min_{o'_{t-1}}\sum_{o'_t,b'_t}h(\theta;o'_{t-1},s_t,a_t,o'_t,b'_t)\rho^{\theta}_t(o'_{t},b'_{t})}\right]L_{\theta,\norms{\hat\theta-\theta}_2}\norm{\hat\theta-\theta}_2 C^{\hat\theta,\theta}_F. 
\end{align*}

Combining everything together, 
\begin{align*}
&\norm{\vphi K^{\hat\theta,\theta}_{F,t}-\vphi K^{\theta,\theta}_{F,t}}_\tv\\
\leq~&L_{\theta,\norms{\hat\theta-\theta}_2}\norm{\hat\theta-\theta}_2 C^{\hat\theta,\theta}_F\times\max_{o_{t-1}}\frac{1+\frac{\sum_{o_t,b_t}h(\theta;o_{t-1},s_t,a_t,o_t,b_t)\rho^\theta_t(o_t,b_t)}{\min_{o'_{t-1}}\sum_{o'_t,b'_t}h(\theta;o'_{t-1},s_t,a_t,o'_t,b'_t)\rho^{\theta}_t(o'_{t},b'_{t})}}{2\sum_{o'_{t},b'_{t}}h(\theta;o_{t-1},s_t,a_t,o'_t,b'_t)\rho^\theta_t(o'_{t},b'_{t})}\\
=~& \frac{L_{\theta,\norms{\hat\theta-\theta}_2}\norms{\hat\theta-\theta}_2 C^{\hat\theta,\theta}_F}{\min_{o'_{t-1}}\sum_{o'_t,b'_t}h(\theta;o'_{t-1},s_t,a_t,o'_t,b'_t)\rho^{\theta}_t(o'_{t},b'_{t})}\\
\leq~&\frac{\max_{o_{t-1},o_t,b_t}h(\theta;o_{t-1},s_t,a_t,o_t,b_t)}{\min_{o_{t-1},o_t,b_t}h(\theta;o_{t-1},s_t,a_t,o_t,b_t)}\frac{L_{\theta,\norms{\hat\theta-\theta}_2}\norms{\hat\theta-\theta}_2}{\min_{o_{t-1},o_t,b_t}h(\hat\theta;o_{t-1},s_t,a_t,o_t,b_t)}.\qedhere
\end{align*}
\end{proof}

On the other hand, we can formulate a backward smoothing recursion as
\begin{equation}
(\vphi^{\theta}\otimes\hat\rho^{\hat\theta})_t(o_t,b_t)=C^{\theta,\hat\theta}_B\sum_{o_{t+1},b_{t+1}}\frac{h(\hat\theta;o_{t},s_{t+1},a_{t+1},o_{t+1},b_{t+1})\vphi^{\theta}_t(o_{t},b_{t})(\vphi^{\theta}\otimes\hat\rho^{\hat\theta})_{t+1}(o_{t+1},b_{t+1})}{\sum_{o'_t,b'_t}h(\theta;o'_{t},s_{t+1},a_{t+1},o_{t+1},b_{t+1})\vphi^{\theta}_t(o'_{t},b'_{t})},\label{eq:backwardop}
\end{equation}
where $C^{\theta,\hat\theta}_B$ is a normalizing constant such that
\begin{equation*}
\left(C^{\theta,\hat\theta}_B\right)^{-1}=\sum_{o_{t+1},b_{t+1}}\frac{\sum_{o_t,b_t}h(\hat\theta;o_{t},s_{t+1},a_{t+1},o_{t+1},b_{t+1})\vphi^{\theta}_t(o_{t},b_{t})}{\sum_{o'_t,b'_t}h(\theta;o'_{t},s_{t+1},a_{t+1},o_{t+1},b_{t+1})\vphi^{\theta}_t(o'_{t},b'_{t})}(\vphi^{\theta}\otimes\hat\rho^{\hat\theta})_{t+1}(o_{t+1},b_{t+1}).
\end{equation*}

The subscript $B$ in $K^{\theta,\hat\theta}_{B,t}$ stands for \emph{backward}. Similar to the forward smoothing operator $K^{\hat\theta,\theta}_{F,t}$, we can define the backward smoothing operator $K^{\theta,\hat\theta}_{B,t}$ from (\ref{eq:backwardop}) such that as probability measures, 
\begin{equation*}
(\vphi^{\theta}\otimes\hat\rho^{\hat\theta})_{t+1}K^{\theta,\hat\theta}_{B,t}=(\vphi^{\theta}\otimes\hat\rho^{\hat\theta})_{t}.
\end{equation*}
Analogous to $K^{\hat\theta,\theta}_{F,t}$, in the general case of $\theta\neq\hat\theta$, $K^{\theta,\hat\theta}_{B,t}$ is a nonlinear operator. However, if $\theta=\hat\theta$, $K^{\theta,\hat\theta}_{B,t}$ becomes a linear operator and induces a Markov transition kernel. The following lemma is similar to Lemma~\ref{lemma:perturbsmooth}. We state it without proof. 

\begin{lemma}[Perturbation on the backward smoothing kernel]\label{lemma:perturbsmooth2} Let $\rho$ be any probability measure on $\mathcal O\times\{0,1\}$. Let $K^{\theta,\hat\theta}_{B,t}$ and $K^{\theta,\theta}_{B,t}$ be defined with the same observation sequence $\{s_t,a_t\}_{t\in\mathbb Z}$ and the same choice of $\{\vphi^{\theta}_t\}_{t\in\mathbb Z}$. Then, for any $t$, $\rho$, $\theta$, $\hat\theta$, $\{s_t,a_t\}_{t\in\mathbb Z}$ and $\{\vphi^{\theta}_t\}_{t\in\mathbb Z}$, 
\begin{multline*}
\norm{\rho K^{\theta,\hat\theta}_{B,t}-\rho K^{\theta,\theta}_{B,t}}_\tv\leq\frac{\max_{o_t,o_{t+1},b_{t+1}}h(\theta;o_{t},s_{t+1},a_{t+1},o_{t+1},b_{t+1})}{\min_{o_t,o_{t+1},b_{t+1}}h(\theta;o_{t},s_{t+1},a_{t+1},o_{t+1},b_{t+1})}\\ \times\frac{L_{\hat\theta,\norms{\hat\theta-\theta}_2}\norms{\hat\theta-\theta}_2}{\min_{o_t,o_{t+1},b_{t+1}}h(\hat\theta;o_{t},s_{t+1},a_{t+1},o_{t+1},b_{t+1})}.
\end{multline*}
\end{lemma}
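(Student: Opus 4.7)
The proof mirrors that of Lemma~\ref{lemma:perturbsmooth}, with the forward recursion (\ref{eq:forwardop}) replaced by the backward recursion (\ref{eq:backwardop}), the pair $(o_{t-1},b_{t-1}),(o_t,b_t)$ replaced by $(o_t,b_t),(o_{t+1},b_{t+1})$, and the auxiliary sequence $\{\rho^\theta_t\}$ replaced by the forward-message sequence $\{\vphi^\theta_t\}$. Concretely, I would first substitute $\rho$ for $(\vphi^{\theta}\otimes\hat\rho^{\hat\theta})_{t+1}$ in (\ref{eq:backwardop}) to obtain closed-form expressions for $\rho K^{\theta,\hat\theta}_{B,t}$ and $\rho K^{\theta,\theta}_{B,t}$, subtract them, expand the total variation norm, and apply the triangle inequality to the inner sum over $(o_{t+1},b_{t+1})$. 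This reduces the bound to a weighted average over $(o_{t+1},b_{t+1})$ and $(o_t,b_t)$ of $|C^{\theta,\hat\theta}_B h(\hat\theta;\cdot) - h(\theta;\cdot)|$ divided by $\sum_{o'_t,b'_t} h(\theta;\cdot)\vphi^\theta_t(o'_t,b'_t)$.

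The second step is the standard algebraic split $C^{\theta,\hat\theta}_B h(\hat\theta;\cdot) - h(\theta;\cdot) = C^{\theta,\hat\theta}_B[h(\hat\theta;\cdot) - h(\theta;\cdot)] + (C^{\theta,\hat\theta}_B - 1)h(\theta;\cdot)$. For the first summand, Lemma~\ref{lemma:lipschitz} applied with base point $\hat\theta$ gives $|h(\hat\theta;\cdot) - h(\theta;\cdot)| \leq L_{\hat\theta,\norms{\hat\theta-\theta}_2}\norms{\hat\theta-\theta}_2$, which matches the Lipschitz constant in the stated inequality. For the second summand, the normalizing constant is bounded from its definition by $C^{\theta,\hat\theta}_B \leq \max_{o_{t+1},b_{t+1}}[\sum_{o_t,b_t} h(\theta;\cdot)\vphi^\theta_t(o_t,b_t) / \sum_{o_t,b_t} h(\hat\theta;\cdot)\vphi^\theta_t(o_t,b_t)]$, and writing $C^{\theta,\hat\theta}_B - 1$ as $C^{\theta,\hat\theta}_B$ times a $\rho$-weighted average of differences of $h$ (again controlled by Lemma~\ref{lemma:lipschitz}) yields $|C^{\theta,\hat\theta}_B - 1| \leq L_{\hat\theta,\norms{\hat\theta-\theta}_2}\norms{\hat\theta-\theta}_2 C^{\theta,\hat\theta}_B / \min_{o_{t+1},b_{t+1}} \sum_{o_t,b_t}h(\theta;\cdot)\vphi^\theta_t(o_t,b_t)$.

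Combining the two pieces and replacing every $\vphi^\theta_t$-weighted sum of $h$ by its pointwise extremum (so that the factor $\vphi^\theta_t$, being a probability measure, cancels) delivers the stated bound. The main source of difficulty is purely one of bookkeeping: because in $K^{\theta,\hat\theta}_{B,t}$ the perturbed parameter $\hat\theta$ appears only inside $h$ in the numerator while the fixed $\theta$ indexes both $\vphi^\theta_t$ and the denominator's $h$, one must track this asymmetry carefully so that the final bound correctly features $L_{\hat\theta,\norms{\hat\theta-\theta}_2}$ and $\min_{o_t,o_{t+1},b_{t+1}} h(\hat\theta;\cdot)$ in the appropriate places. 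Apart from this, all steps are direct mirrors of the forward case.
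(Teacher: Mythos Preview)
Your proposal is correct and matches the paper's approach exactly: the paper itself omits the proof of Lemma~\ref{lemma:perturbsmooth2}, stating only that it is similar to Lemma~\ref{lemma:perturbsmooth}, and your outline is precisely that mirror argument with the correct substitutions. You also correctly identify the one nontrivial bookkeeping point---that the Lipschitz constant must be centered at $\hat\theta$ rather than $\theta$ to produce $L_{\hat\theta,\norms{\hat\theta-\theta}_2}$ in the final bound.
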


Notice that the bounds in both Lemma~\ref{lemma:perturbsmooth} and Lemma~\ref{lemma:perturbsmooth2} depend on the observation sequence $\{s_t,a_t\}_{t\in\mathbb Z}$. 

\subsubsection{A perturbed contraction result for smoothing stability}

For any $t_1,t_2\in\mathbb Z$ with $t_1\leq t_2$, let $\mathbb I=[t_1:t_2]$. 
Remember the following definition from Appendix~\ref{subsubsection:fb}, with the index set restricted to $\mathbb I$: for any $\theta,\hat\theta\in\calTheta$, $\{\vphi^{\theta}_t\}_{t\in\mathbb I}$ and $\{\hat\vphi^{\hat\theta}_t\}_{t\in\mathbb I}$ are two indexed sets of probability measures defined on $\calo\times\{0,1\}$ such that, for all $t\in\mathbb I$, (1) if $t\neq t_1$, $F^{\theta}_t\vphi^{\theta}_{t-1}=\vphi^{\theta}_t$ and $F^{\hat\theta}_t\hat\vphi^{\hat\theta}_{t-1}=\hat\vphi^{\hat\theta}_t$; (2) $\vphi^{\theta}_t$ and $\hat\vphi^{\hat\theta}_t$ are strictly positive on their domains. $\{\rho^{\theta}_t\}_{t\in\mathbb I}$ and $\{\hat\rho^{\hat\theta}_t\}_{t\in\mathbb I}$ are two indexed sets of probability measures defined on $\calo\times\{0,1\}$ such that for all $t\in\mathbb I$, (1) if $t\neq t_2$, $B^{\theta}_t\rho^{\theta}_{t+1}=\rho^{\theta}_t$ and $B^{\hat\theta}_t\hat\rho^{\hat\theta}_{t+1}=\hat\rho^{\hat\theta}_t$; (2) $\rho^{\theta}_t$ and $\hat\rho^{\hat\theta}_t$ are strictly positive on their domains. $\theta$ and $\hat\theta$ are allowed to be equal. 

The smoothing stability lemma is stated as follows. 

\begin{lemma}[Smoothing stability] \label{lemma:fstability} With $\{\vphi^{\theta}_t\}_{t\in\mathbb I}$, $\{\hat\vphi^{\hat\theta}_t\}_{t\in\mathbb I}$, $\{\rho^{\theta}_t\}_{t\in\mathbb I}$ and $\{\hat\rho^{\hat\theta}_t\}_{t\in\mathbb I}$ defined above, 
\begin{equation*}
\norm{(\vphi^{\theta}\otimes\rho^{\theta})_{t_2}-(\hat\vphi^{\hat\theta}\otimes\rho^{\theta})_{t_2}}_\tv\leq \bigg(1-\frac{\eps^2_b\zeta}{|\calo|}\bigg)^{t_2-t_1}+\frac{|\calo|z_{\theta,\hat\theta}L_{\theta,\norms{\hat\theta-\theta}_2}}{\eps^2_b\zeta}\norm{\hat\theta-\theta}_2,
\end{equation*}
\begin{equation*}
\norm{(\hat\vphi^{\hat\theta}\otimes\rho^{\theta})_{t_1}-(\hat\vphi^{\hat\theta}\otimes\hat\rho^{\hat\theta})_{t_1}}_\tv\leq \bigg(1-\frac{\eps^2_b\zeta}{|\calo|}\bigg)^{t_2-t_1}+\frac{|\calo|z_{\theta,\hat\theta}L_{\theta,\norms{\hat\theta-\theta}_2}}{\eps^2_b\zeta}\norm{\hat\theta-\theta}_2, 
\end{equation*}
where $z_{\theta,\theta'}$ is a positive real number dependent only on $\theta$ and $\hat\theta$. Specifically, 
\begin{equation*}
z_{\theta,\theta'}=\max_{s'_t,a'_t}\frac{[\max_{o_{t-1},o_t,b_t}h(\theta;o_{t-1},s'_t,a'_t,o_t,b_t)]\vee[\max_{o_{t-1},o_t,b_t}h(\hat\theta;o_{t-1},s'_t,a'_t,o_t,b_t)]}{[\min_{o_{t-1},o_t,b_t}h(\theta;o_{t-1},s'_t,a'_t,o_t,b_t)][\min_{o_{t-1},o_t,b_t}h(\hat\theta;o_{t-1},s'_t,a'_t,o_t,b_t)]}.
\end{equation*}
\end{lemma}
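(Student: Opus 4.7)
The plan is to prove both inequalities by the same two-ingredient template. Each step of the forward recursion advances $(\hat\vphi^{\hat\theta}\otimes\rho^\theta)_t$ by the nonlinear operator $K^{\hat\theta,\theta}_{F,t}$, while the reference sequence $(\vphi^\theta\otimes\rho^\theta)_t$ advances by the \emph{linear} Markov kernel $K^{\theta,\theta}_{F,t}$. Inserting $(\hat\vphi^{\hat\theta}\otimes\rho^\theta)_{t-1}K^{\theta,\theta}_{F,t}$ between the two and applying the triangle inequality splits the one-step error into a Markov-contraction part and a parameter-perturbation part. Lemma~\ref{lemma:perturbsmooth} handles the latter, so the whole argument reduces to establishing a uniform Dobrushin contraction for $K^{\theta,\theta}_{F,t}$ (and analogously $K^{\hat\theta,\hat\theta}_{B,t}$ for the second inequality) and then telescoping over $t_2-t_1$ steps.

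First I would establish the Doeblin-type minorization. Plugging the two-sided bounds of Lemma~\ref{lemma:mixing} into the denominator (upper bound) and numerator (lower bound) of the kernel in~\eqref{eq:forwardkernel} gives
\begin{equation*}
K^{\theta,\theta}_{F,t}(o_t,b_t\mid o_{t-1},b_{t-1})\;\geq\;\frac{\eps^2_b\zeta}{|\calo|}\,\nu_t(o_t,b_t),
\end{equation*}
where $\nu_t$ is the probability measure on $\calo\times\{0,1\}$ proportional to $\bar\pi_{o,b}(o_t,b_t|s_t;\theta)\pi_{lo}(a_t|s_t,o_t;\theta_{lo})\rho_t^\theta(o_t,b_t)$ and, crucially, does not depend on $(o_{t-1},b_{t-1})$. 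The same computation for $K^{\hat\theta,\hat\theta}_{B,t}$ gives the same lower bound with minorizer $\vphi_t^{\hat\theta}$, the denominator collapsing cleanly via $\sum_{o'_t,b'_t}\vphi_t^{\hat\theta}=1$. Decomposing each kernel as $\delta\nu+(1-\delta)K'$ with $\delta=\eps^2_b\zeta/|\calo|$ then yields the standard Dobrushin bound $\|\mu_1 K-\mu_2 K\|_\tv\leq(1-\delta)\|\mu_1-\mu_2\|_\tv$.

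To conclude, set $d_t\defeq\norm{(\vphi^\theta\otimes\rho^\theta)_t-(\hat\vphi^{\hat\theta}\otimes\rho^\theta)_t}_\tv$. Combining the contraction with Lemma~\ref{lemma:perturbsmooth}, and noting $\max h(\theta)/[\min h(\theta)\min h(\hat\theta)]\leq z_{\theta,\hat\theta}$, gives the one-step recursion
\begin{equation*}
d_t\;\leq\;(1-\delta)\,d_{t-1}\;+\;z_{\theta,\hat\theta}\,L_{\theta,\norms{\hat\theta-\theta}_2}\,\norms{\hat\theta-\theta}_2.
\end{equation*}
Unrolling $t_2-t_1$ times from $d_{t_1}\leq 1$ and bounding the finite geometric sum by $\delta^{-1}=|\calo|/(\eps^2_b\zeta)$ yields the first inequality. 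For the second, I would apply Lemma~\ref{lemma:perturbsmooth2} with the roles of $\theta$ and $\hat\theta$ swapped (identifying the lemma's $\theta$ with my $\hat\theta$), which produces exactly the Lipschitz constant $L_{\theta,\norms{\hat\theta-\theta}_2}$ and a ratio absorbed by the same $z_{\theta,\hat\theta}$; the backward analogue of the contraction-plus-perturbation recursion and the geometric sum are identical. The main obstacle is the Doeblin step: one must verify that the minorizing measure obtained by combining Lemma~\ref{lemma:mixing} with $\rho_t^\theta$ (or $\vphi_t^{\hat\theta}$) is a genuine probability measure independent of the conditioning variable, and that the $|\calo|$ in the upper bound of Lemma~\ref{lemma:mixing} is the sole source of the $|\calo|$ factor in the final bound.
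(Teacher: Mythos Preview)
Your proposal is correct and follows essentially the same route as the paper's proof: insert the intermediate term $(\hat\vphi^{\hat\theta}\otimes\rho^\theta)_{t-1}K^{\theta,\theta}_{F,t}$, bound the perturbation piece via Lemma~\ref{lemma:perturbsmooth}, establish the Doeblin minorization for $K^{\theta,\theta}_{F,t}$ from Lemma~\ref{lemma:mixing} exactly as you describe (the paper writes the minorizing measure as $\bar\pi^\theta_{F,t}$, matching your $\nu_t$), and unroll the resulting one-step recursion with the geometric sum bounded by $\delta^{-1}$. Your handling of the second inequality---swapping the roles of $\theta$ and $\hat\theta$ in Lemma~\ref{lemma:perturbsmooth2} to recover $L_{\theta,\norms{\hat\theta-\theta}_2}$ and noting that the backward minorizer collapses to $\hat\vphi^{\hat\theta}_t$---is also exactly what the paper intends when it says ``analogous''.
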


Intuitively, if $\hat\theta=\theta$, Lemma~\ref{lemma:fstability} has the form of an exact contraction, which is similar to the standard filtering stability result for HMMs. Indeed, our proof uses the classical techniques of uniform forgetting from the HMM literature \citep{cappe2006inference}. If $\hat\theta$ is different from $\theta$, such a contraction is perturbed. For HMMs, similar results are provided in \citep[Proposition~2.2, Theorem~2.3]{de2017consistent}. 

\begin{proof}[Proof of Lemma~\ref{lemma:fstability}] Consider the first bound. It holds trivially when $t_2=t_1$. Now consider only $t_2>t_1$. Using the forward smoothing operators, for any $t_1<t\leq t_2$, 
\begin{equation*}
(\vphi^{\theta}\otimes\rho^{\theta})_{t-1}K^{\theta,\theta}_{F,t}-(\hat\vphi^{\hat\theta}\otimes\rho^{\theta})_{t-1}K^{\hat\theta,\theta}_{F,t}=(\vphi^{\theta}\otimes\rho^{\theta})_{t}-(\hat\vphi^{\hat\theta}\otimes\rho^{\theta})_{t}.
\end{equation*}
Therefore, 
\begin{multline*}
\norm{(\vphi^{\theta}\otimes\rho^{\theta})_{t}-(\hat\vphi^{\hat\theta}\otimes\rho^{\theta})_{t}}_\tv\leq \norm{\left[(\vphi^{\theta}\otimes\rho^{\theta})_{t-1}-(\hat\vphi^{\hat\theta}\otimes\rho^{\theta})_{t-1}\right]K^{\theta,\theta}_{F,t}}_\tv\\+\norm{(\hat\vphi^{\hat\theta}\otimes\rho^{\theta})_{t-1}K^{\theta,\theta}_{F,t}-(\hat\vphi^{\hat\theta}\otimes\rho^{\theta})_{t-1}K^{\hat\theta,\theta}_{F,t}}_\tv, 
\end{multline*}
where the first term is due to $K^{\theta,\theta}_{F,t}$ being a linear operator. 

From Lemma~\ref{lemma:perturbsmooth}, the second term on the RHS is upper bounded by $z_{\theta,\hat\theta}L_{\theta,\norms{\hat\theta-\theta}_2}\norms{\hat\theta-\theta}_2$. As for the first term, we can construct the classical Doeblin-type minorization condition \citep[Chap.~4.3]{cappe2006inference}. Applying Lemma~\ref{lemma:mixing} in the definition of the Markov transition kernel $K^{\theta,\theta}_{F,t}$ (\ref{eq:forwardkernel}), we have
\begin{equation}
K^{\theta,\theta}_{F,t}(o_{t},b_{t}|o_{t-1},b_{t-1})\geq\frac{\eps^2_b\zeta}{|\calo|}\frac{\bar\pi_{o,b}(o_t,b_t|s_t;\theta)\pi_{lo}(a_t|s_t,o_t;\theta_{lo})\rho^\theta_t(o_t,b_t)}{\sum_{o'_{t},b'_{t}}\bar\pi_{o,b}(o'_t,b'_t|s_t;\theta)\pi_{lo}(a_{t}|s_{t},o'_{t};\theta_{lo})\rho^\theta_t(o'_{t},b'_{t})}
\eqdef\frac{\eps^2_b\zeta}{|\calo|}\bar \pi^{\theta}_{F,t}(o_t,b_t).\label{eq:onestepdoeblin}
\end{equation}
Observe that $\bar \pi^{\theta}_{F,t}$ just defined is a probability measure. Further define $\bar K^{\theta,\theta}_{F,t}$ entry-wise as
\begin{equation*}
\bar K^{\theta,\theta}_{F,t}(o_{t},b_{t}|o_{t-1},b_{t-1})\defeq\bigg(1-\frac{\eps^2_b\zeta}{|\calo|}\bigg)^{-1}\bigg(K^{\theta,\theta}_{F,t}(o_{t},b_{t}|o_{t-1},b_{t-1})-\frac{\eps^2_b\zeta}{|\calo|}\bar \pi^{\theta}_{F,t}(o_t,b_t)\bigg).
\end{equation*}
We can verify that $\bar K^{\theta,\theta}_{F,t}$ is also a Markov transition kernel. Moreover, 
\begin{equation*}
\left[(\vphi^{\theta}\otimes\rho^{\theta})_{t-1}-(\hat\vphi^{\hat\theta}\otimes\rho^{\theta})_{t-1}\right]K^{\theta,\theta}_{F,t}=\bigg(1-\frac{\eps^2_b\zeta}{|\calo|}\bigg)\left[(\vphi^{\theta}\otimes\rho^{\theta})_{t-1}-(\hat\vphi^{\hat\theta}\otimes\rho^{\theta})_{t-1}\right]\bar K^{\theta,\theta}_{F,t}.
\end{equation*}

To proceed, the standard approach is to use the fact that the Dobrushin coefficient of $\bar K^{\theta,\theta}_{F,t}$ is upper bounded by one. For clarity, we avoid such definitions and take a more direct approach here, which requires the extension of the total variation distance for two probability measures to the total variation norm for a finite signed measure. For a finite signed measure $\nu$ over a finite set $\Omega$, let the total variation norm of $\nu$ be
\begin{equation*}
\norm{\nu}_\tv\defeq \frac{1}{2}\sum_{\omega\in\Omega}\left|\nu(\omega)\right|.
\end{equation*}
When $\nu$ is the difference between two probability measures $\nu_1-\nu_2$, the total variation norm of $\nu$ coincides with the total variation distance between $\nu_1$ and $\nu_2$. Therefore, the same notation $\norms{\cdot}_\tv$ is adopted here. 

Let $\mathcal{\bar M}(\calo\times\{0,1\})$ be the set of finite signed measures over the finite set $\calo\times\{0,1\}$. From \citep[Chap.~4.3.1]{cappe2006inference}, $\mathcal{\bar M}(\calo\times\{0,1\})$ is a Banach space. Define an operator norm $\norms{\cdot}_\op$ for $\bar K^{\theta,\theta}_{F,t}$ as
\begin{equation*}
\norm{\bar K^{\theta,\theta}_{F,t}}_\op\defeq\sup\left\{\norm{\nu\bar K^{\theta,\theta}_{F,t}}_\tv;\norm{\nu}_\tv=1,\nu\in\mathcal{\bar M}(\calo\times\{0,1\})\right\}. 
\end{equation*}
Since $\bar K^{\theta,\theta}_{F,t}$ is a Markov transition kernel, $\norms{\bar K^{\theta,\theta}_{F,t}}_\op=1$ \citep[Lemma 4.3.6]{cappe2006inference}. Therefore, 
\begin{align*}
&\norm{(\vphi^{\theta}\otimes\rho^{\theta})_{t_2}-(\hat\vphi^{\hat\theta}\otimes\rho^{\theta})_{t_2}}_\tv\\
\leq~&\norm{\left[(\vphi^{\theta}\otimes\rho^{\theta})_{t_2-1}-(\hat\vphi^{\hat\theta}\otimes\rho^{\theta})_{t_2-1}\right]K^{\theta,\theta}_{F,t_2}}_\tv+\norm{(\hat\vphi^{\hat\theta}\otimes\rho^{\theta})_{t_2-1}\left(K^{\theta,\theta}_{F,t_2}-K^{\hat\theta,\theta}_{F,t_2}\right)}_\tv\\
=~&\bigg(1-\frac{\eps^2_b\zeta}{|\calo|}\bigg)\norm{\left[(\vphi^{\theta}\otimes\rho^{\theta})_{t_2-1}-(\hat\vphi^{\hat\theta}\otimes\rho^{\theta})_{t_2-1}\right]\bar K^{\theta,\theta}_{F,t_2}}_\tv+z_{\theta,\hat\theta}L_{\theta,\norms{\hat\theta-\theta}_2}\norms{\hat\theta-\theta}_2\\
\leq~&\bigg(1-\frac{\eps^2_b\zeta}{|\calo|}\bigg)\norm{(\vphi^{\theta}\otimes\rho^{\theta})_{t_2-1}-(\hat\vphi^{\hat\theta}\otimes\rho^{\theta})_{t_2-1}}_\tv\norm{\bar K^{\theta,\theta}_{F,t_2}}_\op+z_{\theta,\hat\theta}L_{\theta,\norms{\hat\theta-\theta}_2}\norms{\hat\theta-\theta}_2\\
=~&\bigg(1-\frac{\eps^2_b\zeta}{|\calo|}\bigg)\norm{(\vphi^{\theta}\otimes\rho^{\theta})_{t_2-1}-(\hat\vphi^{\hat\theta}\otimes\rho^{\theta})_{t_2-1}}_\tv+z_{\theta,\hat\theta}L_{\theta,\norms{\hat\theta-\theta}_2}\norms{\hat\theta-\theta}_2.
\end{align*}
The second inequality is due to the sub-multiplicativity of the operator norm. Finally, the desirable result follows from unrolling the summation and identifying the geometric series. 

The proof of the second bound is analogous, using the backward smoothing operators instead of the forward smoothing operators. Details are omitted.
\end{proof} 

Note that Lemma~\ref{lemma:fstability} only holds when considering the options with failure framework. For the standard options framework, the one-step Doeblin-type minorization condition (\ref{eq:onestepdoeblin}) we construct in the proof does not hold anymore, due to the failure of Lemma~\ref{lemma:mixing}. Instead, one could target the two-step minorization condition: define a two step smoothing kernel similar to $K^{\theta,\theta}_{F,t}$ and lower bound it similar to (\ref{eq:onestepdoeblin}). Notations are much more complicated. For simplicity, this extension is not considered in this paper. 

\subsection{The approximation lemmas}\label{subsection:approximation}

This subsection applies Lemma~\ref{lemma:fstability} to quantities defined in earlier sections. 

First, we bound the difference of smoothing distributions in the non-extended graphical model (as in Theorem~\ref{thm:fb}) and the extended one with parameter $k$ (as in Corollary~\ref{corollary:fb}). The parameter $\theta$ in the two models can be different. The bounds use quantities defined in Appendix~\ref{subsection:mixing} and Appendix~\ref{subsection:fs}. Recall the definition of $\Omega$ from \ref{eq:omega}. 

\begin{lemma}[Bounding the difference of smoothing distributions, Part I]\label{lemma:smoothingdiff1} For all $\theta,\hat\theta\in\calTheta$, $k\in\mathbb N_+$ and $\mu\in\mathcal{M}$, with the observation sequence $\{s_t,a_t\}_{t\in\Z}$ corresponding to any $\omega\in\Omega$, we have

\begin{enumerate}[leftmargin=*]
\item $\forall t\in[1:T]$,
\begin{equation*}
\norm{\gamma^{\theta}_{\mu,t|T}-\gamma^{\hat\theta}_{k,t}}_\tv\leq\bigg(1-\frac{\eps^2_b\zeta}{|\calo|}\bigg)^{t-1}+\bigg(1-\frac{\eps^2_b\zeta}{|\calo|}\bigg)^{T-t}+\frac{2|\calo|z_{\theta,\hat\theta}L_{\theta,\norms{\hat\theta-\theta}_2}}{\eps^2_b\zeta}\norm{\hat\theta-\theta}_2.
\end{equation*}
\item $\forall t\in[2:T]$,
\begin{equation*}
\norm{\tilde\gamma^{\theta}_{\mu,t|T}-\tilde\gamma^{\hat\theta}_{k,t}}_\tv\leq 2\bigg(1-\frac{\eps^2_b\zeta}{|\calo|}\bigg)^{t-2}+\bigg(1-\frac{\eps^2_b\zeta}{|\calo|}\bigg)^{T-t}+\frac{4|\calo|z_{\theta,\hat\theta}L_{\theta,\norms{\hat\theta-\theta}_2}}{\eps^2_b\zeta}\norm{\hat\theta-\theta}_2.
\end{equation*}
\end{enumerate}
\end{lemma}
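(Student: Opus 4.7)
The plan is to identify $\gamma^{\theta}_{\mu,t|T}$ and $\gamma^{\hat\theta}_{k,t}$ as $\otimes$-products in the sense of Appendix~\ref{subsection:fs}, then apply the smoothing stability lemma (Lemma~\ref{lemma:fstability}) via a triangle inequality. By Theorem~\ref{thm:fb}, I can write $\gamma^{\theta}_{\mu,t|T}=(\vphi^{\theta}\otimes\rho^{\theta})_t$ on the index interval $[1:T]$ with $\vphi^{\theta}_s\defeq\alpha^{\theta}_{\mu,s}$ (evolving under the forward recursion $F^{\theta}_s$, with initial condition at $s=1$ supplied by $\mu$) and $\rho^{\theta}_s\defeq\beta^{\theta}_{s|T}$ (evolving under the backward recursion $B^{\theta}_s$, with terminal condition $(2|\calo|)^{-1}$ at $s=T$). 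By Corollary~\ref{corollary:fb}, $\gamma^{\hat\theta}_{k,t}=(\hat\vphi^{\hat\theta}\otimes\hat\rho^{\hat\theta})_t$ with $\hat\vphi^{\hat\theta}_s\defeq\alpha^{\hat\theta}_{k,s}$ and $\hat\rho^{\hat\theta}_s\defeq\beta^{\hat\theta}_{k,s}$ restricted to $[1:T]$; these satisfy the same recursions but with boundary conditions at $s=1$ and $s=T$ inherited from propagating through the extended interval $[1-k:T+k]$. Strict positivity of all four sequences is guaranteed by Assumption~\ref{as:nondeg}, so the hypotheses of Lemma~\ref{lemma:fstability} are met.

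For Part~1, I apply the triangle inequality
\[
\|\gamma^{\theta}_{\mu,t|T}-\gamma^{\hat\theta}_{k,t}\|_\tv\leq\|(\vphi^{\theta}\otimes\rho^{\theta})_t-(\hat\vphi^{\hat\theta}\otimes\rho^{\theta})_t\|_\tv+\|(\hat\vphi^{\hat\theta}\otimes\rho^{\theta})_t-(\hat\vphi^{\hat\theta}\otimes\hat\rho^{\hat\theta})_t\|_\tv.
\]
For the first term I invoke the first bound of Lemma~\ref{lemma:fstability} with $t_1=1$, $t_2=t$, yielding $(1-\eps^2_b\zeta/|\calo|)^{t-1}$ plus one perturbation term. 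For the second term I invoke the second bound with $t_1=t$, $t_2=T$, yielding $(1-\eps^2_b\zeta/|\calo|)^{T-t}$ plus another perturbation term. Summing gives Part~1, with the two perturbation contributions combining into the doubled coefficient $2|\calo|z_{\theta,\hat\theta}L_{\theta,\norms{\hat\theta-\theta}_2}/(\eps^2_b\zeta)$ in front of $\norms{\hat\theta-\theta}_2$.

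For Part~2, the two-step smoothing formula~(\ref{eq:smoothing2}) couples $\alpha^{\theta}_{\mu,t-1}$ with $\beta^{\theta}_{t|T}$ across adjacent times, so $\tilde\gamma^{\theta}_{\mu,t|T}$ is not directly an $\otimes$-product on $\calo\times\{0,1\}$ at a single time. My plan is to absorb the time-$t$ weights $\pi_b(b_t|s_t,o_{t-1};\theta_b)\sum_{o_t}\bar\pi_{hi}(o_t|s_t,o_{t-1},b_t;\theta_{hi})\pi_{lo}(a_t|s_t,o_t;\theta_{lo})\beta^{\theta}_{t|T}(o_t,b_t)$ into a pulled-back backward quantity living on the enlarged coordinate $(o_{t-1},b_t)$ at time $t-1$, marginalize $\alpha^{\theta}_{\mu,t-1}(o_{t-1},b_{t-1})$ over $b_{t-1}$ into a forward quantity on $o_{t-1}$, and then mimic the Part~1 decomposition at time $t-1$. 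The forward component will then undergo $t-2$ contraction steps, producing $(1-\eps^2_b\zeta/|\calo|)^{t-2}$, while the backward component restricted to $[t:T]$ still contributes $(1-\eps^2_b\zeta/|\calo|)^{T-t}$. The main obstacle I expect is that the pulled-back backward quantity inherits $\theta$-dependence from both the explicit $\pi_b$ factor and the weights $h$ inside the sum over $o_t$, so after absorption the effective operators carry extra parameter dependence and Lemma~\ref{lemma:fstability} does not apply off-the-shelf; I will need to re-run the Doeblin-minorization and perturbation arguments of Lemmas~\ref{lemma:perturbsmooth} and~\ref{lemma:perturbsmooth2} in this enlarged setting, and the two distinct parameter-sensitive pieces introduced by the two-step formula are precisely what should produce the extra factor of $2$ in front of $(1-\eps^2_b\zeta/|\calo|)^{t-2}$ and the factor $4$ (rather than $2$) in the perturbation coefficient.
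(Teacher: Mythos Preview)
Your Part~1 argument is exactly the paper's: identify $\vphi^{\theta}_s=\alpha^{\theta}_{\mu,s}$, $\hat\vphi^{\hat\theta}_s=\alpha^{\hat\theta}_{k,s}$, $\rho^{\theta}_s=\beta^{\theta}_{s|T}$, $\hat\rho^{\hat\theta}_s=\beta^{\hat\theta}_{k,s}$, apply the two displays of Lemma~\ref{lemma:fstability} on $[1:t]$ and $[t:T]$ respectively, and add.

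For Part~2 the paper takes a different and cleaner route than the one you sketch. Rather than building a new $\otimes$-structure on an enlarged coordinate $(o_{t-1},b_t)$ and re-deriving the Doeblin/perturbation machinery, the paper observes that the two-step smoothing distribution can be written as a \emph{backward kernel applied to the one-step smoothing distribution}:
\[
\tilde\gamma^{\theta}_{\mu,t|T}(o_{t-1},b_t)=\sum_{o_t}f^{\theta}_{\mu,t}(o_{t-1}\mid o_t,b_t)\,\gamma^{\theta}_{\mu,t|T}(o_t,b_t),
\]
where $f^{\theta}_{\mu,t}$ is an explicit conditional built from $\pi_b$, $\bar\pi_{hi}$ and the marginalized forward message $\sum_{b_{t-1}}\alpha^{\theta}_{\mu,t-1}$ (and similarly for the extended model). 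The difference $\tilde\gamma^{\theta}_{\mu,t|T}-\tilde\gamma^{\hat\theta}_{k,t}$ then splits into (i) the difference of the $\gamma$'s, which is bounded directly by Part~1 and contributes $(1-\eps_b^2\zeta/|\calo|)^{t-1}+(1-\eps_b^2\zeta/|\calo|)^{T-t}+2\Delta_{\theta,\hat\theta}$, and (ii) the difference of the kernels $f^{\theta}_{\mu,t}-f^{\hat\theta}_{k,t}$. Piece~(ii) is in turn split into a forward-boundary change (Lemma~\ref{lemma:fstability} on $[1:t-1]$ with a custom terminal $\rho^{\theta}_{t-1}\propto g^{\theta}(o_{t-1},s_t,o_t,b_t)$, giving $(1-\eps_b^2\zeta/|\calo|)^{t-2}+\Delta_{\theta,\hat\theta}$) and a pure parameter change in $g$ (bounded elementarily by $\Delta_{\theta,\hat\theta}$). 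The coefficient $2$ in front of $(1-\eps_b^2\zeta/|\calo|)^{t-2}$ actually comes from majorizing the $(t-1)$-exponent from Part~1 by the $(t-2)$-exponent, not from ``two parameter-sensitive pieces'' as you guess; the $4\Delta_{\theta,\hat\theta}$ is $2\Delta_{\theta,\hat\theta}$ from Part~1 plus $\Delta_{\theta,\hat\theta}+\Delta_{\theta,\hat\theta}$ from the two halves of~(ii).

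Your plan is not wrong in spirit, but it leaves the hardest step (re-establishing the minorization on the enlarged space) as a promissory note, whereas the paper's kernel-representation trick avoids that work entirely by reducing to Part~1 plus a short direct estimate.
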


Similarly, we can bound the difference of smoothing distributions in two extended graphical models with different $k$ and different parameter $\theta$. 

\begin{lemma}[Bounding the difference of smoothing distributions, Part II]\label{lemma:smoothingdiff2} For all $\theta,\hat\theta\in\calTheta$ and $t\in[1:T]$, with any two integers $k_2>k_1>0$ and the observation sequence $\{s_t,a_t\}_{t\in\Z}$ corresponding to any $\omega\in\Omega$, we have
\begin{equation*}
\norm{\gamma^{\theta}_{k_1,t}-\gamma^{\hat\theta}_{k_2,t}}_\tv\leq\bigg(1-\frac{\eps^2_b\zeta}{|\calo|}\bigg)^{t+k_1-1}+\bigg(1-\frac{\eps^2_b\zeta}{|\calo|}\bigg)^{T+k_1-t}+\frac{2|\calo|z_{\theta,\hat\theta}L_{\theta,\norms{\hat\theta-\theta}_2}}{\eps^2_b\zeta}\norm{\hat\theta-\theta}_2,
\end{equation*}
\begin{equation*}
\norm{\tilde \gamma^{\theta}_{k_1,t}-\tilde \gamma^{\hat\theta}_{k_2,t}}_\tv\leq 2\bigg(1-\frac{\eps^2_b\zeta}{|\calo|}\bigg)^{t+k_1-2}+\bigg(1-\frac{\eps^2_b\zeta}{|\calo|}\bigg)^{T+k_1-t}+\frac{4|\calo|z_{\theta,\hat\theta}L_{\theta,\norms{\hat\theta-\theta}_2}}{\eps^2_b\zeta}\norm{\hat\theta-\theta}_2.
\end{equation*}
\end{lemma}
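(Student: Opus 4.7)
The plan is to mirror the proof of Lemma~\ref{lemma:smoothingdiff1}, invoking the smoothing stability result (Lemma~\ref{lemma:fstability}) twice, only now both smoothing distributions being compared live in extended graphical models rather than one of them living in the non-extended model. First, I would express each object in the factored form $(\vphi\otimes\rho)_t$ required by Lemma~\ref{lemma:fstability}. Concretely, I set $\vphi^\theta_{t'}=\alpha^\theta_{k_1,t'}$ and $\rho^\theta_{t'}=\beta^\theta_{k_1,t'}$ over the index set $\mathbb I=[1-k_1, T+k_1]$, so that $\gamma^\theta_{k_1,t}=(\vphi^\theta\otimes\rho^\theta)_t$ by the smoothing identity (\ref{eq:smoothinge}); analogously I set $\hat\vphi^{\hat\theta}_{t'}=\alpha^{\hat\theta}_{k_2,t'}$ and $\hat\rho^{\hat\theta}_{t'}=\beta^{\hat\theta}_{k_2,t'}$ on $\mathbb I\subseteq[1-k_2, T+k_2]$, yielding $\gamma^{\hat\theta}_{k_2,t}=(\hat\vphi^{\hat\theta}\otimes\hat\rho^{\hat\theta})_t$. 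All four sequences are strictly positive by Assumption~\ref{as:nondeg} and the recursions of Corollary~\ref{corollary:fb}, so the hypotheses of Lemma~\ref{lemma:fstability} are met.

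Next I introduce the hybrid $(\hat\vphi^{\hat\theta}\otimes\rho^{\theta})_t$ and apply the triangle inequality,
\begin{equation*}
\norm{\gamma^{\theta}_{k_1,t}-\gamma^{\hat\theta}_{k_2,t}}_\tv \leq \norm{(\vphi^\theta\otimes\rho^\theta)_t-(\hat\vphi^{\hat\theta}\otimes\rho^\theta)_t}_\tv + \norm{(\hat\vphi^{\hat\theta}\otimes\rho^\theta)_t-(\hat\vphi^{\hat\theta}\otimes\hat\rho^{\hat\theta})_t}_\tv.
\end{equation*}
The first term is handled by the first inequality of Lemma~\ref{lemma:fstability} with $t_1=1-k_1$ and $t_2=t$, producing a contraction factor $(1-\eps_b^2\zeta/|\calo|)^{t+k_1-1}$ and a perturbation $(|\calo|z_{\theta,\hat\theta}L_{\theta,\norms{\hat\theta-\theta}_2}/(\eps_b^2\zeta))\norms{\hat\theta-\theta}_2$. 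The second term is handled by the second inequality of Lemma~\ref{lemma:fstability} with $t_1=t$ and $t_2=T+k_1$, producing a contraction factor $(1-\eps_b^2\zeta/|\calo|)^{T+k_1-t}$ and the same perturbation. Summing the two bounds gives exactly the claimed first inequality.

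For the two-step bound, I would apply the same recipe to the factorization (\ref{eq:smoothinge2}) of $\tilde\gamma^\theta_{k,t}$. Because the forward message now enters at time $t-1$ instead of $t$, the forward-direction exponent becomes $t+k_1-2$, while the backward message still enters at time $t$, keeping the exponent $T+k_1-t$. An additional triangle-inequality insertion is needed to separate the perturbation of the transition factor $\pi_b\bar\pi_{hi}\pi_{lo}$ (viewed as a function of the parameter) that sits between the forward and backward messages in (\ref{eq:smoothinge2}); this extra insertion accounts for the factor of $2$ in front of the forward decay and the doubling of the perturbation coefficient (from $2$ to $4$) relative to the one-step bound.

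The main obstacle is bookkeeping rather than any new technique: one must verify that the initial conditions $\alpha^{\hat\theta}_{k_2,1-k_1}$ and $\beta^{\hat\theta}_{k_2,T+k_1}$ are strictly positive (note that $1-k_1$ and $T+k_1$ are interior, not boundary, indices for the $k_2$-extended model), and check that the uniform constant $z_{\theta,\hat\theta}$ from Lemma~\ref{lemma:fstability} absorbs the observation-dependent constants generated by each invocation. Both points are routine: strict positivity is inherited from Assumption~\ref{as:nondeg} through the forward and backward recursions, and $z_{\theta,\hat\theta}$ is by construction a maximum over all observations, so it dominates the observation-dependent ratios arising in any single application of Lemma~\ref{lemma:fstability}.
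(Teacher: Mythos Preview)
Your proposal is correct and follows exactly the approach the paper intends: the paper explicitly states that the proof of Lemma~\ref{lemma:smoothingdiff2} is ``analogous [to Lemma~\ref{lemma:smoothingdiff1}] therefore omitted,'' and your write-up carries out precisely that analogy, with the forward index set extended to $[1-k_1:t]$ and the backward index set to $[t:T+k_1]$, which produces the shifted exponents $t+k_1-1$ and $T+k_1-t$. Your handling of the two-step bound and the bookkeeping points (strict positivity at interior indices of the $k_2$-model, uniformity of $z_{\theta,\hat\theta}$) are also in line with the paper's treatment of Lemma~\ref{lemma:smoothingdiff1}.
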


It can be easily verified that in Lemma~\ref{lemma:smoothingdiff1} and Lemma~\ref{lemma:smoothingdiff2}, the bounds still hold if $\theta$ and $\hat\theta$ on the LHS are interchanged. We only present the proof of Lemma~\ref{lemma:smoothingdiff1}. As for Lemma~\ref{lemma:smoothingdiff2}, the proof is analogous therefore omitted. Our proof essentially relies on the smoothing stability lemma (Lemma~\ref{lemma:fstability}). 

\begin{proof}[Proof of Lemma~\ref{lemma:smoothingdiff1}]
Consider the first bound. For a cleaner notation, let
\begin{equation*}
\Delta_{\theta,\hat\theta}=\frac{|\calo|z_{\theta,\hat\theta}L_{\theta,\norms{\hat\theta-\theta}_2}}{\eps^2_b\zeta}\norm{\hat\theta-\theta}_2.
\end{equation*}

Apply Lemma~\ref{lemma:fstability} as follows: $\forall t\in[1:T]$, let $\vphi^\theta_t=\alpha^\theta_{\mu,t}$ and $\hat\vphi^{\hat\theta}_t=\alpha^{\hat\theta}_{k,t}$; let $\rho^\theta_t=\beta^\theta_{t|T}$ and $\hat\rho^{\hat\theta}_t=\beta^{\hat\theta}_{k,t}$. Due to Assumption~\ref{as:nondeg}, the strictly positive requirement is satisfied. Then, we have
\begin{equation*}
\norm{\frac{\alpha^\theta_{\mu,t}\cdot\beta^\theta_{t|T}}{\langle\alpha^\theta_{\mu,t},\beta^\theta_{t|T}\rangle}-\frac{\alpha^{\hat\theta}_{k,t}\cdot\beta^\theta_{t|T}}{\langle\alpha^{\hat\theta}_{k,t},\beta^\theta_{t|T}\rangle}}_\tv\leq \bigg(1-\frac{\eps^2_b\zeta}{|\calo|}\bigg)^{t-1}+\Delta_{\theta,\hat\theta},
\end{equation*}
\begin{equation*}
\norm{\frac{\alpha^{\hat\theta}_{k,t}\cdot\beta^\theta_{t|T}}{\langle\alpha^{\hat\theta}_{k,t},\beta^\theta_{t|T}\rangle}-\frac{\alpha^{\hat\theta}_{k,t}\cdot\beta^{\hat\theta}_{k,t}}{\langle\alpha^{\hat\theta}_{k,t},\beta^{\hat\theta}_{k,t}\rangle}}_\tv\leq \bigg(1-\frac{\eps^2_b\zeta}{|\calo|}\bigg)^{T-t}+\Delta_{\theta,\hat\theta},
\end{equation*}
where $\cdot$ denotes element-wise product and $\langle\cdot,\cdot\rangle$ denotes Euclidean inner product. Therefore, 
\begin{align*}
\norm{\gamma^{\theta}_{\mu,t|T}-\gamma^{\hat\theta}_{k,t}}_\tv&=\norm{\frac{\alpha^\theta_{\mu,t}\cdot\beta^\theta_{t|T}}{\langle\alpha^\theta_{\mu,t},\beta^\theta_{t|T}\rangle}-\frac{\alpha^{\hat\theta}_{k,t}\cdot\beta^{\hat\theta}_{k,t}}{\langle\alpha^{\hat\theta}_{k,t},\beta^{\hat\theta}_{k,t}\rangle}}_\tv\\
&\leq \norm{\frac{\alpha^\theta_{\mu,t}\cdot\beta^\theta_{t|T}}{\langle\alpha^\theta_{\mu,t},\beta^\theta_{t|T}\rangle}-\frac{\alpha^{\hat\theta}_{k,t}\cdot\beta^\theta_{t|T}}{\langle\alpha^{\hat\theta}_{k,t},\beta^\theta_{t|T}\rangle}}_\tv+\norm{\frac{\alpha^{\hat\theta}_{k,t}\cdot\beta^\theta_{t|T}}{\langle\alpha^{\hat\theta}_{k,t},\beta^\theta_{t|T}\rangle}-\frac{\alpha^{\hat\theta}_{k,t}\cdot\beta^{\hat\theta}_{k,t}}{\langle\alpha^{\hat\theta}_{k,t},\beta^{\hat\theta}_{k,t}\rangle}}_\tv\\
&\leq \bigg(1-\frac{\eps^2_b\zeta}{|\calo|}\bigg)^{t-1}+\bigg(1-\frac{\eps^2_b\zeta}{|\calo|}\bigg)^{T-t}+2\Delta_{\theta,\hat\theta}.
\end{align*}

Next, we bound the difference of two-step smoothing distributions $\norms{\tilde\gamma^{\theta}_{\mu,t|T}-\tilde\gamma^{\hat\theta}_{k,t}}_\tv$. Although the idea is straightforward, the details are tedious. For any $t\in[2:T]$, from (\ref{eq:smoothing2}) we have
\begin{align*}
&\tilde \gamma^{\theta}_{\mu,t|T}(o_{t-1},b_t)\\
\propto~&\pi_b(b_t|s_t,o_{t-1};\theta_{b})\left[\sum_{o_t}\bar\pi_{hi}(o_{t}|s_{t},o_{t-1},b_{t};\theta_{hi})\pi_{lo}(a_{t}|s_{t},o_{t};\theta_{lo})\frac{\gamma^\theta_{\mu,t|T}(o_t,b_t)}{\alpha^\theta_{\mu,t}(o_t,b_t)}\right]\left[\sum_{b_{t-1}}\alpha^\theta_{\mu,t-1}(o_{t-1},b_{t-1})\right]\\
\propto~&\sum_{o_t}\frac{\bar\pi_{hi}(o_{t}|s_{t},o_{t-1},b_{t};\theta_{hi})\pi_{lo}(a_{t}|s_{t},o_{t};\theta_{lo})\gamma^\theta_{\mu,t|T}(o_t,b_t)[\sum_{b_{t-1}}\alpha^\theta_{\mu,t-1}(o_{t-1},b_{t-1})]\pi_b(b_t|s_t,o_{t-1};\theta_{b})}{\sum_{o'_{t-1},b_{t-1}}\pi_b(b_t|s_t,o'_{t-1};\theta_b)\bar\pi_{hi}(o_t|s_t,o'_{t-1},b_t;\theta_{hi})\pi_{lo}(a_t|s_t,o_t;\theta_{lo})\alpha^\theta_{\mu,t-1}(o'_{t-1},b_{t-1})}\\
=~&\sum_{o_t}\frac{\pi_b(b_t|s_t,o_{t-1};\theta_{b})\bar\pi_{hi}(o_{t}|s_{t},o_{t-1},b_{t};\theta_{hi})[\sum_{b_{t-1}}\alpha^\theta_{\mu,t-1}(o_{t-1},b_{t-1})]}{\sum_{o'_{t-1}}\pi_b(b_t|s_t,o'_{t-1};\theta_b)\bar\pi_{hi}(o_t|s_t,o'_{t-1},b_t;\theta_{hi})[\sum_{b_{t-1}}\alpha^\theta_{\mu,t-1}(o'_{t-1},b_{t-1})]}\gamma^\theta_{\mu,t|T}(o_t,b_t).
\end{align*}
The denominators are all positive due to the non-degeneracy assumption. It can be easily verified that the normalizing constants involved in the second and the third line cancel each other. As abbreviations, define
\begin{align*}
&g^\theta(o_{t-1},s_t,o_t,b_t)\defeq\pi_b(b_t|s_t,o_{t-1};\theta_{b})\bar\pi_{hi}(o_{t}|s_{t},o_{t-1},b_{t};\theta_{hi}),\\
&g^{\hat\theta}(o_{t-1},s_t,o_t,b_t)\defeq\pi_b(b_t|s_t,o_{t-1};\hat\theta_{b})\bar\pi_{hi}(o_{t}|s_{t},o_{t-1},b_{t};\hat\theta_{hi}),\\
&f^\theta_{\mu,t}(o_{t-1},s_t,o_t,b_t)\defeq\frac{g^\theta(o_{t-1},s_t,o_t,b_t)[\sum_{b_{t-1}}\alpha^\theta_{\mu,t-1}(o_{t-1},b_{t-1})]}{\sum_{o'_{t-1}}g^{\theta}(o'_{t-1},s_t,o_t,b_t)[\sum_{b_{t-1}}\alpha^\theta_{\mu,t-1}(o'_{t-1},b_{t-1})]},\\
&f^{\hat\theta}_{k,t}(o_{t-1},s_t,o_t,b_t)\defeq\frac{g^{\hat\theta}(o_{t-1},s_t,o_t,b_t)[\sum_{b_{t-1}}\alpha^{\hat\theta}_{k,t-1}(o_{t-1},b_{t-1})]}{\sum_{o'_{t-1}}g^{\hat\theta}(o'_{t-1},s_t,o_t,b_t)[\sum_{b_{t-1}}\alpha^{\hat\theta}_{k,t-1}(o'_{t-1},b_{t-1})]}.
\end{align*}
Then, 
\begin{align}
\norm{\tilde\gamma^{\theta}_{\mu,t|T}-\tilde\gamma^{\hat\theta}_{k,t}}_\tv
=~&\frac{1}{2}\sum_{o_{t-1},b_t}\bigg|\sum_{o_t}[f^\theta_{\mu,t}(o_{t-1},s_t,o_t,b_t)\gamma^\theta_{\mu,t|T}(o_t,b_t)-f^{\hat\theta}_{k,t}(o_{t-1},s_t,o_t,b_t)\gamma^{\hat\theta}_{k,t|T}(o_t,b_t)]\bigg|\nonumber\\
\leq~&\frac{1}{2}\sum_{o_{t-1},b_t,o_t}\left|f^\theta_{\mu,t}(o_{t-1},s_t,o_t,b_t)-f^{\hat\theta}_{k,t}(o_{t-1},s_t,o_t,b_t)\right|\gamma^\theta_{\mu,t|T}(o_t,b_t)\nonumber\\
&\hspace{5em}+\frac{1}{2}\sum_{o_{t-1},b_t,o_t}f^{\hat\theta}_{k,t}(o_{t-1},s_t,o_t,b_t)\left|\gamma^\theta_{\mu,t|T}(o_t,b_t)-\gamma^{\hat\theta}_{k,t|T}(o_t,b_t)\right|.\label{eq:twoterms}
\end{align}

Now, we bound the two terms on the RHS separately. Consider the first term in (\ref{eq:twoterms}), 
\begin{align}
&\frac{1}{2}\sum_{o_{t-1},o_t,b_t}\left|f^\theta_{\mu,t}(o_{t-1},s_t,o_t,b_t)-f^{\hat\theta}_{k,t}(o_{t-1},s_t,o_t,b_t)\right|\gamma^\theta_{\mu,t|T}(o_t,b_t)\nonumber\\
\leq~&\frac{1}{2}\max_{o_t,b_t}\sum_{o_{t-1},b_{t-1}} \bigg|\frac{g^\theta(o_{t-1},s_t,o_t,b_t)\alpha^\theta_{\mu,t-1}(o_{t-1},b_{t-1})}{\sum_{o'_{t-1},b'_{t-1}}g^{\theta}(o'_{t-1},s_t,o_t,b_t)\alpha^\theta_{\mu,t-1}(o'_{t-1},b'_{t-1})}\nonumber\\
&\hspace{8em}-\frac{g^\theta(o_{t-1},s_t,o_t,b_t)\alpha^{\hat\theta}_{k,t-1}(o_{t-1},b_{t-1})}{\sum_{o'_{t-1},b'_{t-1}}g^{\theta}(o'_{t-1},s_t,o_t,b_t)\alpha^{\hat\theta}_{k,t-1}(o'_{t-1},b'_{t-1})}\bigg|\nonumber\\
&\hspace{2em}+\frac{1}{2}\max_{o_t,b_t}\sum_{o_{t-1},b_{t-1}}\alpha^{\hat\theta}_{k,t-1}(o_{t-1},b_{t-1})\bigg|\frac{g^\theta(o_{t-1},s_t,o_t,b_t)}{\sum_{o'_{t-1},b'_{t-1}}g^{\theta}(o'_{t-1},s_t,o_t,b_t)\alpha^{\hat\theta}_{k,t-1}(o'_{t-1},b'_{t-1})}\nonumber\\
&\hspace{10em}-\frac{g^{\hat\theta}(o_{t-1},s_t,o_t,b_t)}{\sum_{o'_{t-1},b'_{t-1}}g^{\hat\theta}(o'_{t-1},s_t,o_t,b_t)\alpha^{\hat\theta}_{k,t-1}(o'_{t-1},b'_{t-1})}\bigg|.\label{eq:aux1}
\end{align}

Denote the two terms on the RHS of (\ref{eq:aux1}) as $\Delta_1$ and $\Delta_2$ respectively. To bound $\Delta_1$, we can apply Lemma~\ref{lemma:fstability} on the index set $[1:t-1]$ as follows, assuming $t> 2$. For any $t'\in[1:t-1]$, let $\vphi^\theta_{t'}=\alpha^\theta_{\mu,t'}$ and $\hat\vphi^{\hat\theta}_{t'}=\alpha^{\hat\theta}_{k,t'}$. For any $(o_t,b_t)$, let $\rho^\theta_{t-1}(o_{t-1},b_{t-1})=z^{-1}_\theta g^\theta(o_{t-1},s_t,o_t,b_t)$, where $z_\theta$ is a normalizing constant. For $1\leq t'< t-1$, let $\rho^\theta_{t'}=B^\theta_{t'}\rho^\theta_{t'+1}$. Then, 
\begin{equation*}
\Delta_1\leq \bigg(1-\frac{\eps^2_b\zeta}{|\calo|}\bigg)^{t-2}+\Delta_{\theta,\hat\theta}.
\end{equation*}
Such a bound holds trivially if $t\leq 2$. 

Next, we bound $\Delta_2$ as follows. Straightforward computation yields the following result. 
\begin{align*}
\Delta_2&=\frac{1}{2}\max_{o_t,b_t}\sum_{o_{t-1},b_{t-1}}\alpha^{\hat\theta}_{k,t-1}(o_{t-1},b_{t-1})\bigg|\frac{h(\theta;o_{t-1},s_t,a_t,o_t,b_t)}{\sum_{o'_{t-1},b'_{t-1}}h(\theta;o'_{t-1},s_t,a_t,o_t,b_t)\alpha^{\hat\theta}_{k,t-1}(o'_{t-1},b'_{t-1})}\\
&\hspace{7em}-\frac{h(\hat\theta;o_{t-1},s_t,a_t,o_t,b_t)}{\sum_{o'_{t-1},b'_{t-1}}h(\hat\theta;o'_{t-1},s_t,a_t,o_t,b_t)\alpha^{\hat\theta}_{k,t-1}(o'_{t-1},b'_{t-1})}\bigg|\\
&\leq\max_{o_t,b_t}\frac{\sum_{o_{t-1},b_{t-1}}\left|h(\theta;o_{t-1},s_t,a_t,o_t,b_t)-h(\hat\theta;o_{t-1},s_t,a_t,o_t,b_t)\right|\alpha^{\hat\theta}_{k,t-1}(o_{t-1},b_{t-1})}{\sum_{o'_{t-1},b'_{t-1}}h(\theta;o'_{t-1},s_t,a_t,o_t,b_t)\alpha^{\hat\theta}_{k,t-1}(o'_{t-1},b'_{t-1})}\\
&\leq \frac{\max_{o_{t-1},o_t,b_t}\left|h(\theta;o_{t-1},s_t,a_t,o_t,b_t)-h(\hat\theta;o_{t-1},s_t,a_t,o_t,b_t)\right|}{\min_{o_{t-1},o_t,b_t}h(\theta;o_{t-1},s_t,a_t,o_t,b_t)}\leq \Delta_{\theta,\hat\theta},
\end{align*}
where we use the definition of $h(\theta;o_{t-1},s_t,a_t,o_t,b_t)$ in (\ref{eq:definitionofh}).

As for the second term in (\ref{eq:twoterms}), 
\begin{align*}
&\frac{1}{2}\sum_{o_{t-1},b_t,o_t}f^\theta_{k,t}(o_{t-1},s_t,o_t,b_t)\left|\gamma^\theta_{\mu,t|T}(o_t,b_t)-\gamma^\theta_{k,t|T}(o_t,b_t)\right|\\
=~&\norm{\gamma^{\theta}_{\mu,t|T}-\gamma^{\theta}_{k,t}}_\tv\leq\bigg(1-\frac{\eps^2_b\zeta}{|\calo|}\bigg)^{t-1}+\bigg(1-\frac{\eps^2_b\zeta}{|\calo|}\bigg)^{T-t}+2\Delta_{\theta,\hat\theta}.
\end{align*}
Combining the above gives the desirable result. 
\end{proof}

\subsection{Proof of Lemma~\ref{lemma:limitsmoothing}}\label{subsection:limitsmoothing}

Based on Lemma~\ref{lemma:smoothingdiff2}, for all $T\geq 2$, $\theta\in\calTheta$ and $t\in[1:T]$, with any observation sequence, both the sequences $\{\gamma^{\theta}_{k,t}\}_{k\in\N_+}$ and $\{\tilde \gamma^{\theta}_{k,t}\}_{k\in\N_+}$ are Cauchy sequences associated with the total variation distance. Moreover, the set of probability measures over the finite sample space $\calo\times\{0,1\}$ is complete. Therefore, the limits of both $\{\gamma^{\theta}_{k,t}\}_{k\in\N_+}$ and $\{\tilde \gamma^{\theta}_{k,t}\}_{k\in\N_+}$ as $k\rightarrow\infty$ exist with respect to the total variation distance. From the definitions of $\{\gamma^{\theta}_{k,t}\}_{k\in\N_+}$ and $\{\tilde \gamma^{\theta}_{k,t}\}_{k\in\N_+}$ in Appendix~\ref{subsection:moredef}, it is clear that their limits as $k\rightarrow\infty$ do not depend on $T$. 

The Lipschitz continuity of $\gamma^{\theta}_{\infty,t}$ also follows from Lemma~\ref{lemma:smoothingdiff2}. Specifically, for all $\theta,\hat\theta\in\calTheta$ and $t\in[1:T]$, with any observation sequence, 
\begin{equation*}
\norm{\gamma^{\theta}_{\infty,t}-\gamma^{\hat\theta}_{\infty,t}}_\tv\leq\frac{2|\calo|z_{\theta,\hat\theta}L_{\theta,\norms{\hat\theta-\theta}_2}}{\eps^2_b\zeta}\norm{\hat\theta-\theta}_2.
\end{equation*}
The coefficient of $\norms{\hat\theta-\theta}_2$ on the RHS can be upper bounded by a constant that does not depend on $\theta$ and $\hat\theta$. The same argument holds for $\tilde\gamma^{\theta}_{\infty,t}$. \qed

\subsection{Proof of Lemma~\ref{lemma:qdiff}}\label{section:qdiffproof}

For a cleaner notation, we omit the dependency on $\omega$ in the following analysis. From the definitions, for all $\theta,\theta'\in\calTheta$ and $\mu\in\mathcal{M}$, 
\begin{align*}
&Q^s_{\infty,T}(\theta'|\theta)-Q_{\mu,T}(\theta'|\theta)\\
=~&\frac{1}{T}\bigg\{\sum_{t=2}^T\sum_{o_{t-1},b_t}\left[\tilde \gamma^{\theta}_{\infty,t}(o_{t-1},b_t)-\tilde \gamma^{\theta}_{\mu,t|T}(o_{t-1},b_t)\right][\log \pi_{b}(b_t|s_t,o_{t-1};\theta'_b)]\\
&+\sum_{t=1}^T\sum_{o_t,b_t}\left[\gamma^{\theta}_{\infty,t}(o_t,b_t)-\gamma^{\theta}_{\mu,t|T}(o_t,b_t)\right][\log \pi_{lo}(a_t|s_t,o_t;\theta'_{lo})]\\
&+\sum_{t=1}^T\sum_{o_t}\left[\gamma^{\theta}_{\infty,t}(o_t,b_t=1)-\gamma^{\theta}_{\mu,t|T}(o_t,b_t=1)\right][\log \pi_{hi}(o_t|s_t;\theta'_{hi})]+err\bigg\},
\end{align*}
where the last term is a small error term associated with $t=1$ such that, 
\begin{equation*}
\left|err\right|=\bigg|\sum_{o_{0},b_1}\tilde \gamma^{\theta}_{\infty,1}(o_{0},b_1)\left[\log \pi_{b}(b_1|s_1,o_{0};\theta'_b)\right]\bigg|\leq \max_{b_1,s_1,o_0}|\log \pi_{b}(b_1|s_1,o_{0};\theta'_b)|.
\end{equation*}
The maximum on the RHS is finite due to the non-degeneracy assumption. Furthermore, 
\begin{align*}
&\left|Q^s_{\infty,T}(\theta'|\theta)-Q_{\mu,T}(\theta'|\theta)\right|\\
\leq~&\frac{1}{T}\bigg\{\sum_{t=2}^T\max_{b_t,s_t,o_{t-1}}\left|\log \pi_{b}(b_t|s_t,o_{t-1};\theta'_b)\right|\sum_{o_{t-1},b_t}\left|\tilde \gamma^{\theta}_{\infty,t}(o_{t-1},b_t)-\tilde \gamma^{\theta}_{\mu,t|T}(o_{t-1},b_t)\right|\\
&+\sum_{t=1}^T\max_{a_t,s_t,o_t}\left|\log \pi_{lo}(a_t|s_t,o_t;\theta'_{lo})\right|\sum_{o_t,b_t}\left|\gamma^{\theta}_{\infty,t}(o_t,b_t)-\gamma^{\theta}_{\mu,t|T}(o_t,b_t)\right|\\
&+\sum_{t=1}^T\max_{s_t,o_t}\left|\log \pi_{hi}(o_t|s_t;\theta'_{hi})\right|\sum_{o_t}\left|\gamma^{\theta}_{\infty,t}(o_t,b_t=1)-\gamma^{\theta}_{\mu,t|T}(o_t,b_t=1)\right|+|err|\bigg\}. 
\end{align*}
Since the bounds in Lemma~\ref{lemma:smoothingdiff1} hold for any $k>0$, they also hold in the limit as $k\rightarrow\infty$. Therefore, for any $\theta$, $\mu$ and any $t\in[1:T]$,
\begin{equation*}
\norm{\gamma^{\theta}_{\mu,t|T}-\gamma^{\theta}_{\infty,t}}_\tv\leq\bigg(1-\frac{\eps^2_b\zeta}{|\calo|}\bigg)^{t-1}+\bigg(1-\frac{\eps^2_b\zeta}{|\calo|}\bigg)^{T-t}.
\end{equation*}
For any $\theta$, $\mu$ and any $t\in[2:T]$,
\begin{equation*}
\norm{\tilde\gamma^{\theta}_{\mu,t|T}-\tilde\gamma^{\theta}_{\infty,t}}_\tv\leq 2\bigg(1-\frac{\eps^2_b\zeta}{|\calo|}\bigg)^{t-2}+\bigg(1-\frac{\eps^2_b\zeta}{|\calo|}\bigg)^{T-t}.
\end{equation*}
Combining everything above, 
\begin{align*}
&\left|Q^s_{\infty,T}(\theta'|\theta)-Q_{\mu,T}(\theta'|\theta)\right|\\
\leq~&\frac{1}{T}\bigg\{\max_{b_t,s_t,o_{t-1}}\left|\log \pi_{b}(b_t|s_t,o_{t-1};\theta'_b)\right|\bigg[1+2\sum_{t=2}^T\norm{\tilde\gamma^{\theta}_{\mu,t|T}-\tilde\gamma^{\theta}_{\infty,t}}_\tv\bigg]\\
&+2\left[\max_{a_t,s_t,o_t}\left|\log \pi_{lo}(a_t|s_t,o_t;\theta'_{lo})\right|+\max_{s_t,o_t}\left|\log \pi_{hi}(o_t|s_t;\theta'_{hi})\right|\right]\sum_{t=1}^T\norm{\gamma^{\theta}_{\mu,t|T}-\gamma^{\theta}_{\infty,t}}_\tv\bigg\}\\
\leq~&\frac{1}{T}\bigg\{\bigg(1+\frac{6|O|}{\eps_b^2\zeta}\bigg)\max_{b_t,s_t,o_{t-1}}\left|\log \pi_{b}(b_t|s_t,o_{t-1};\theta'_b)\right|\\
&\hspace{5em}+\frac{4|O|}{\eps_b^2\zeta}\left[\max_{a_t,s_t,o_t}\left|\log \pi_{lo}(a_t|s_t,o_t;\theta'_{lo})\right|+\max_{s_t,o_t}\left|\log \pi_{hi}(o_t|s_t;\theta'_{hi})\right|\right]\bigg\}=\frac{C(\theta')}{T},
\end{align*}
where $C(\theta')$ is a positive real number that only depends on $\theta'$ and the structural constants $|\calo|$, $\zeta$ and $\eps_b$. Due to Assumption~\ref{as:continuity}, $C(\theta')$ is continuous with respect to $\theta'$. Since $\calTheta$ is compact, $\sup_{\theta'\in\calTheta}C(\theta')<\infty$. Therefore, 
\begin{equation*}
\left|Q^s_{\infty,T}(\theta'|\theta)-Q_{\mu,T}(\theta'|\theta)\right|\leq \frac{1}{T}\sup_{\theta'\in\calTheta}C(\theta').
\end{equation*}
Taking supremum with respect to $\theta$, $\theta'$ and $\mu$ completes the proof. \qed

\subsection{Proof of the strong stochastic equicontinuity condition (\ref{eq:sse})}\label{subsection:sseproof}

First, for all $\delta>0$ and $\omega\in\Omega$, 
\begin{align*}
&\limsup_{T\rightarrow\infty}\sup_{\theta_1,\theta'_1,\theta_2,\theta'_2\in\calTheta;\norms{\theta_1-\theta_2}_2+\norms{\theta'_1-\theta'_2}_2\leq\delta}\left|Q^s_{\infty,T}(\theta'_1|\theta_1;\omega)-Q^s_{\infty,T}(\theta'_2|\theta_2;\omega)\right|\\
\leq~&\limsup_{T\rightarrow\infty}\frac{1}{T}\sup_{\theta_1,\theta'_1,\theta_2,\theta'_2\in\calTheta;\norms{\theta_1-\theta_2}_2+\norms{\theta'_1-\theta'_2}_2\leq\delta}\left|f_t(\theta'_1|\theta_1;\omega)-f_t(\theta'_2|\theta_2;\omega)\right|.
\end{align*}
Due to the boundedness of $f_t(\theta'|\theta;\omega)$ from Appendix~\ref{subsection:asymptoticqfun}, we can apply the ergodic theorem (Lemma~\ref{lemma:ergodic}). $\P_{\theta^*,\nu^*}$ almost surely, 
\begin{align*}
&\limsup_{T\rightarrow\infty}\frac{1}{T}\sum_{t=1}^T\sup_{\theta_1,\theta'_1,\theta_2,\theta'_2\in\calTheta;\norms{\theta_1-\theta_2}_2+\norms{\theta'_1-\theta'_2}_2\leq\delta}\left|f_t(\theta'_1|\theta_1;\omega)-f_t(\theta'_2|\theta_2;\omega)\right|\\
=~&\E_{\theta^*,\nu^*}\bigg[\sup_{\theta_1,\theta'_1,\theta_2,\theta'_2\in\calTheta;\norms{\theta_1-\theta_2}_2+\norms{\theta'_1-\theta'_2}_2\leq\delta}\left|f_1(\theta'_1|\theta_1;\omega)-f_1(\theta'_2|\theta_2;\omega)\right|\bigg]\\
\leq~&\E_{\theta^*,\nu^*}\bigg[\sup_{\theta_1,\theta'_1,\theta'_2\in\calTheta;\norms{\theta'_1-\theta'_2}_2\leq\delta}\left|f_1(\theta'_1|\theta_1;\omega)-f_1(\theta'_2|\theta_1;\omega)\right|\bigg]\\
&\hspace{5em}+\E_{\theta^*,\nu^*}\bigg[\sup_{\theta_1,\theta_2,\theta'_2\in\calTheta;\norms{\theta_1-\theta_2}_2\leq\delta}\left|f_1(\theta'_2|\theta_1;\omega)-f_1(\theta'_2|\theta_2;\omega)\right|\bigg].
\end{align*}

Notice that for all $\theta_1$, $\theta'_1$, $\theta'_2$ and $\omega$, 
\begin{multline*}
\left|f_1(\theta'_1|\theta_1;\omega)-f_1(\theta'_2|\theta_1;\omega)\right|\leq\max_{o_t}\left|\log \pi_{hi}(o_t|\omega(s_t);\theta'_{1,hi})-\log \pi_{hi}(o_t|\omega(s_t);\theta'_{2,hi})\right|\\
+\max_{o_t}\left|\log \pi_{lo}(\omega(a_t)|\omega(s_t),o_t;\theta'_{1,lo})-\log \pi_{lo}(\omega(a_t)|\omega(s_t),o_t;\theta'_{2,lo})\right|\\
+\max_{o_{t-1},b_t}\left|\log \pi_{b}(b_t|\omega(s_t),o_{t-1};\theta'_{1,b})-\log \pi_{b}(b_t|\omega(s_t),o_{t-1};\theta'_{2,b})\right|.
\end{multline*}

The RHS does not depend on $\theta_1$. Due to Assumption~\ref{as:continuity}, $\pi_{hi}$, $\pi_{lo}$ and $\pi_b$ as functions of the parameter $\theta$ are uniformly continuous on $\calTheta$, with any other input arguments. Therefore it is straightforward to verify that, for any $\omega\in\Omega$, 
\begin{equation*}
\lim_{\delta\rightarrow 0}\sup_{\theta_1,\theta'_1,\theta'_2\in\calTheta;\norms{\theta'_1-\theta'_2}_2\leq\delta}\left|f_1(\theta'_1|\theta_1;\omega)-f_1(\theta'_2|\theta_1;\omega)\right|=0.
\end{equation*}
Applying the dominated convergence theorem, 
\begin{equation*}
\lim_{\delta\rightarrow 0}\E_{\theta^*,\nu^*}\bigg[\sup_{\theta_1,\theta'_1,\theta'_2\in\calTheta;\norms{\theta'_1-\theta'_2}_2\leq\delta}\left|f_1(\theta'_1|\theta_1;\omega)-f_1(\theta'_2|\theta_1;\omega)\right|\bigg]=0.
\end{equation*}

Similarly, using Lemma~\ref{lemma:limitsmoothing} we can show that for any $\omega\in\Omega$, 
\begin{equation*}
\lim_{\delta\rightarrow 0}\sup_{\theta_1,\theta_2,\theta'_2\in\calTheta;\norms{\theta_1-\theta_2}_2\leq\delta}\left|f_1(\theta'_2|\theta_1;\omega)-f_1(\theta'_2|\theta_2;\omega)\right|=0.
\end{equation*}
Using the dominated convergence theorem gives the convergence of the expectation as well. Combining the above gives the strong stochastic equicontinuity condition (\ref{eq:sse}). \qed

\subsection{Proof of Lemma~\ref{lemma:selfconsistency}}\label{subsection:selfconsistency}

Consider the following joint distribution on the graphical model shown in Figure~\ref{figure:model}: the prior distribution of $(O_0,S_1)$ is $\nu^*$, and the joint distribution of the rest of the graphical model is determined by an options with failure policy with parameters $\zeta$ and $\theta$. Notice that this is the \emph{correct} graphical model for the inference of the true parameter $\theta^*$, since the assumed prior distribution of $(O_0,S_1)$ coincides with the correct one. 

For clarity, we use the same notations as in Appendix~\ref{subsection:q} for the complete likelihood function, the marginal likelihood function and the (unnormalized) $Q$-function. Specifically, such quantities used in this proof have the same symbols as those defined in Appendix~\ref{subsection:q}, but mathematically they are not the same. 

Parallel to Appendix~\ref{subsection:q}, the complete likelihood function is
\begin{equation*}
L(s_{1:T},a_{1:T},o_{0:T},b_{1:T};\theta)=
\nu^*(o_0,s_1)\mathbb P_{\theta,o_0,s_1}(S_{2:T}=s_{1:T},A_{1:T}=a_{1:T},O_{1:T}=o_{1:T},B_{1:T}=b_{1:T}).
\end{equation*}
The marginal likelihood function is
\begin{equation*}
L^m(s_{1:T},a_{1:T};\theta)=\sum_{o_0}\nu^*(o_0,s_1)\mathbb P_{\theta,o_0,s_1}(S_{2:T}=s_{1:T},A_{1:T}=a_{1:T}).
\end{equation*}
Let $\mu^*$ be the conditional distribution of $O_0$ given $s_1$. For any $o_0\in\calo$, 
\begin{equation*}
\mu^*(o_0|s_1)=\frac{\nu^*(o_0,s_1)}{\sum_{o'_0\in\calo}\nu^*(o'_0,s_1)}. 
\end{equation*}
Therefore, for the inference of $\theta^*$ considered in this proof, the (unnormalized) $Q$-function can be expressed as
\begin{align*}
&\tilde Q_{\mu^*,T}(\theta'|\theta)\\
=~&\sum_{o_{0:T},b_{1:T}}\frac{L(s_{1:T},a_{1:T},o_{0:T},b_{1:T};\theta)}{L^m(s_{1:T},a_{1:T};\theta)}\log L(s_{1:T},a_{1:T},o_{0:T},b_{1:T};\theta')\\
=~&\sum_{o_{0:T},b_{1:T}}\mu^*(o_0|s_1)\mathbb P_{\theta,o_0,s_1}(S_{2:T}=s_{2:T},A_{1:T}=a_{1:T},O_{1:T}=o_{1:T},B_{1:T}=b_{1:T})\\
&\hspace{2em}
\times z^\theta_{\gamma,\mu^*}\log [\nu^*(o_0,s_1)\mathbb P_{\theta',o'_0,s_1}(S_{2:T}=s_{1:T},A_{1:T}=a_{1:T},O_{1:T}=o_{1:T},B_{1:T}=b_{1:T})]. 
\end{align*}
We can rewrite $\tilde Q_{\mu^*,T}(\theta'|\theta)$ using the structure of the options with failure framework, drop the terms irrelevant to $\theta'$ and normalize using $T$. The result is the following definition of the (normalized) $Q$-function:
\begin{align*}
Q^*_T(\theta'|\theta)\defeq &\frac{\sum_{o_0,b_1}\nu^*(o_0|s_1)\mathbb P_{\theta,o_0,s_1}(S_{2:T}=s_{2:T},A_{1:T}=a_{1:T},B_1=b_1)[\log \pi_{b}(b_1|s_1,o_0;\theta'_b)]}{T\sum_{o_0}\nu^*(o_0,s_1)\mathbb P_{\theta,o_0,s_1}(S_{2:T}=s_{1:T},A_{1:T}=a_{1:T})}\\
&\hspace{5em}+\frac{1}{T}\sum_{t=1}^T\sum_{o_t,b_t}\gamma^{\theta}_{\mu^*,t|T}(o_t,b_t)[\log \pi_{lo}(a_t|s_t,o_t;\theta'_{lo})]\\
&\hspace{5em}+\frac{1}{T}\sum_{t=1}^T\sum_{o_t}\gamma^{\theta}_{\mu^*,t|T}(o_t,b_t=1)[\log \pi_{hi}(o_t|s_t;\theta'_{hi})]\\
&\hspace{10em}+\frac{1}{T}\sum_{t=2}^T\sum_{o_{t-1},b_t}\tilde \gamma^{\theta}_{\mu^*,t|T}(o_{t-1},b_t)[\log \pi_{b}(b_t|s_t,o_{t-1};\theta'_b)]. 
\end{align*}

We draw a comparison between $Q^*_T(\theta'|\theta)$ and $Q_{\mu^*,T}(\theta'|\theta)$ defined in (\ref{eq:Q}): their difference is in the first term of $Q^*_T(\theta'|\theta)$. Maximizing $Q^*_T(\theta'|\theta)$ with respect to $\theta'$ is equivalent to maximizing the (unnormalized) $Q$-function $\tilde Q_{\mu^*,T}(\theta'|\theta)$. In Algorithm~\ref{algorithm}, since $Q^*_T(\theta'|\theta)$ is unavailable, we use $Q_{\mu^*,T}(\theta'|\theta)$ as its approximation. 

$Q^*_T(\theta'|\theta)$ depends on the observation sequence, therefore it is a function of a sample path $\omega\in\Omega$. In the following we explicitly show this dependency by writing $Q^*_T(\theta'|\theta;\omega)$. Clearly, for all $\theta,\theta'\in\calTheta$, $\omega\in\Omega$ and $T\geq 2$, 
\begin{equation*}
\left|Q^*_T(\theta'|\theta;\omega)-Q_{\mu^*,T}(\theta'|\theta;\omega)\right|\leq \frac{1}{T}\sup_{\theta'\in\calTheta}\max_{b_1,s_1,o_0}\left|\log \pi_{b}(b_1|s_1,o_0;\theta'_b)\right|.
\end{equation*}
Combining this with the stochastic convergence of $Q_{\mu^*,T}$ as shown in Theorem~\ref{thm:existence}, we have, that for any $\theta\in\calTheta$, as $T\rightarrow\infty$, 
\begin{equation*}
\left|Q^*_T(\theta|\theta^*;\omega)-\bar Q(\theta|\theta^*)\right|\rightarrow 0,~P_{\theta^*,\nu^*}\text{-a.s.}
\end{equation*}
Using the dominated convergence theorem, such a convergence holds in expectation as well. For any $\theta\in\calTheta$, 
\begin{equation*}
\lim_{T\rightarrow\infty}\E_{\theta^*,\nu^*}\left[Q^*_T(\theta|\theta^*;\omega)\right]=\bar Q(\theta|\theta^*). 
\end{equation*}

Since maximizing $Q^*_T(\theta|\theta^*)$ with respect to $\theta$ is equivalent to maximizing the (unnormalized) $Q$-function $\tilde Q_{\mu^*,T}(\theta|\theta^*)$, the standard monotonicity property of the EM update holds as well. For all $\theta\in\calTheta$, $\omega\in\Omega$ and $T\geq 2$, 
\begin{equation*}
\log L^m[\omega(s_{1:T}),\omega(a_{1:T});\theta]-\log L^m[\omega(s_{1:T}),\omega(a_{1:T});\theta^*]\geq T\left[Q^*_T(\theta|\theta^*;\omega)-Q^*_T(\theta^*|\theta^*;\omega)\right].
\end{equation*}
Taking expectation on both sides, we have
\begin{equation*}
\E_{\theta^*,\nu^*}[\lhs]=\sum_{s_{1:T},a_{1:T}}L^m(s_{1:T},a_{1:T};\theta^*)\log\frac{L^m(s_{1:T},a_{1:T};\theta)}{L^m(s_{1:T},a_{1:T};\theta^*)}\leq 0,
\end{equation*}
due to the non-negativity of the Kullback-Leibler divergence. Therefore, $\E_{\theta^*,\nu^*}[Q^*_T(\theta|\theta^*;\omega)]\leq \E_{\theta^*,\nu^*}[Q^*_T(\theta^*|\theta^*;\omega)]$, and in the limit we have $\bar Q(\theta|\theta^*)\leq \bar Q(\theta^*|\theta^*)$ for all $\theta\in\calTheta$. Applying the identifiability assumption for the uniqueness of $\bar M(\theta^*)$ completes the proof. \qed

\section{Additional experiments and details omitted in Section~\ref{section:experiment}}\label{section:additional}

\subsection{Generation of the observation sequences}

We first introduce the method to sample observation sequences from the stationary Markov chain induced by the expert policy. Using the expert policy and a fixed $(o_0,s_1)$ pair, we generate 50 sample paths of length 20,000. Then, the first 10,000 time steps in each sample path are discarded, and the rest state-action pairs are saved as the observation sequences used in the algorithm. For different $T$, we just take the first $T$ time steps in each observation sequence. 

Such a procedure is motivated by Proposition~\ref{lemma:ergo}: it can be easily verified that Assumption~\ref{as:nondeg} and \ref{as:continuity} hold in our numerical example. Therefore, from Proposition~\ref{lemma:ergo}, the distribution of $X_t$ approaches the unique stationary distribution regardless of the initial $(o_0,s_1)$ pair. In this way, Assumption~\ref{as:initial} is approximately satisfied. 

\subsection{Analytical expression of the parameter update}

For our numerical example, the parameter update of Algorithm~\ref{algorithm} has a unique analytical solution. For all $\theta\in\calTheta$, $\omega\in\Omega$, $T\geq 2$ and $\mu\in\mathcal{M}$, we first derive the analytical expression of $M_{\mu,T}(\theta;\omega)_{hi}$ which is the updated parameter for $\pi_{hi}$ based on the previous parameter $\theta$. Such a notation for parameter updates is borrowed from Assumption~\ref{as:local}. Using the expression of the $Q$-function (\ref{eq:Q}), we have
\begin{equation*}
M_{\mu,T}(\theta;\omega)_{hi}\in\argmax_{\theta'_{hi}\in\calTheta_{hi}}\sum_{t=1}^T\sum_{o_t}\gamma^{\theta}_{\mu,t|T}(o_t,b_t=1)[\log \pi_{hi}(o_t|s_t;\theta'_{hi})],
\end{equation*}
where $s_t$ on the RHS is the state value $\omega(s_t)$ from the sample path $\omega$. We omit $\omega$ on the RHS for a cleaner notation. Let $f(\theta'_{hi})$ denote the sum inside the argmax. Then, 
\begin{align*}
f(\theta'_{hi})&=\sum_{t=1}^T\bigg\{\gamma^{\theta}_{\mu,t|T}(o_t=\textrm{LEFTEND},b_t=1)[\log \pi_{hi}(o_t=\textrm{LEFTEND}|s_t;\theta'_{hi})]\\
&\hspace{5em}+\gamma^{\theta}_{\mu,t|T}(o_t=\textrm{RIGHTEND},b_t=1)[\log \pi_{hi}(o_t=\textrm{RIGHTEND}|s_t;\theta'_{hi})]\bigg\}\\
&=\sum_{t=1}^T\bigg\{\gamma^{\theta}_{\mu,t|T}(o_t=\textrm{LEFTEND},b_t=1)\Big[\mathbbm{1}[s_t=1,2]\log\theta'_{hi}+\mathbbm{1}[s_t=3,4]\log(1-\theta'_{hi})\Big]\\
&\hspace{1em}+\gamma^{\theta}_{\mu,t|T}(o_t=\textrm{RIGHTEND},b_t=1)\Big[\mathbbm{1}[s_t=3,4]\log\theta'_{hi}+\mathbbm{1}[s_t=1,2]\log(1-\theta'_{hi})\Big]\bigg\}.
\end{align*}
Taking the derivative of $f(\theta'_{hi})$, we can verify that $f(\theta'_{hi})$ is strongly concave. Therefore, the parameter update for $\pi_{hi}$ is unique. 
\begin{equation*}
  M_{\mu,T}(\theta;\omega)_{hi}=
  \begin{cases}
    0.1, & \text{if $\tilde M_{\mu,T}(\theta;\omega)_{hi}<0.1$}, \\
    \tilde M_{\mu,T}(\theta;\omega)_{hi}, & \text{if $0.1\leq\tilde M_{\mu,T}(\theta;\omega)_{hi}\leq 0.9$},\\
    0.9,& \text{if $\tilde M_{\mu,T}(\theta;\omega)_{hi}>0.9$,}
  \end{cases}
\end{equation*}
where $\tilde M_{\mu,T}(\theta;\omega)_{hi}$ is the unconstrained parameter update given as
\begin{multline*}
\tilde M_{\mu,T}(\theta;\omega)_{hi}=\frac{\sum_{t=1}^T\gamma^{\theta}_{\mu,t|T}(o_t=\textrm{LEFTEND},b_t=1)\mathbbm{1}[s_t=1,2]}{\sum_{t=1}^T\sum_{o_t}\gamma^{\theta}_{\mu,t|T}(o_t,b_t=1)}\\
+\frac{\sum_{t=1}^T\gamma^{\theta}_{\mu,t|T}(o_t=\textrm{RIGHTEND},b_t=1)\mathbbm{1}[s_t=3,4]}{\sum_{t=1}^T\sum_{o_t}\gamma^{\theta}_{\mu,t|T}(o_t,b_t=1)}.
\end{multline*}
Similarly, the unconstrained parameter updates for $\pi_{lo}$ and $\pi_b$ are the following:
\begin{multline*}
\tilde M_{\mu,T}(\theta;\omega)_{lo}=\frac{1}{T}\sum_{t=1}^T\sum_{b_t}\bigg\{\gamma^{\theta}_{\mu,t|T}(o_t=\textrm{LEFTEND},b_t)\mathbbm{1}[a_t=\textrm{LEFT}]\\+\gamma^{\theta}_{\mu,t|T}(o_t=\textrm{RIGHTEND},b_t)\mathbbm{1}[a_t=\textrm{RIGHT}]\bigg\}.
\end{multline*}
\begin{equation*}
\tilde M_{\mu,T}(\theta;\omega)_{b}=\frac{1}{T-1}\sum_{t=2}^T\sum_{o_{t-1}}\bigg\{\tilde\gamma^{\theta}_{\mu,t|T}(o_{t-1},b_t=1)\mathbbm{1}[\textrm{event}]+\tilde\gamma^{\theta}_{\mu,t|T}(o_{t-1},b_t=0)\mathbbm{1}[\neg \textrm{event}]\bigg\},
\end{equation*}
where the $\textrm{event}=\{(s_t=1,o_{t-1}=\textrm{LEFTEND})\vee(s_t=4,o_{t-1}=\textrm{RIGHTEND})\}$. The parameter updates $M_{\mu,T}(\theta;\omega)_{lo}$ and $M_{\mu,T}(\theta;\omega)_{b}$ are the projections of $\tilde M_{\mu,T}(\theta;\omega)_{lo}$ and $\tilde M_{\mu,T}(\theta;\omega)_{b}$ onto $[0.1,0.9]$, respectively.

\subsection{Supplementary results to Figure~\ref{figure:exp_1}}

In this subsection we present supplementary results to Figure~\ref{figure:exp_1}. In Figure~\ref{figure:exp_1}, $err(n,T)$ is defined as the average of $\norms{\theta^{(n)}-\theta^*}_2$ over all the 50 sample paths. Here, we divide the set of sample paths into smaller sets and evaluate the average of $\norms{\theta^{(n)}-\theta^*}_2$ over these smaller sets separately. The settings for the computation of parameter estimates are the same as in Section~\ref{section:experiment}. The following procedure serves as the post-processing step of the obtained parameter estimates. 

Concretely, as defined in Section~\ref{section:experiment}, we obtain a sequence $\{\norms{\theta^{(n)}-\theta^*}_2;\omega,T\}_{n\in[0:N]}$ after running Algorithm~\ref{algorithm} with any sample path $\omega$ and any $T$. After fixing $T$ and letting $n=N$, $\norms{\theta^{(N)}-\theta^*}_2$ is a function of $\omega$ only. With a given threshold interval $I=[I_1,I_2]$, we define a smaller set of sample paths as the set of $\omega$ with $\norms{\theta^{(N)}-\theta^*}_2$ greater than the $I_1$-th percentile and less than the $I_2$-th percentile. Let $err(n,T,I)$ be the average of $\norms{\theta^{(n)}-\theta^*}_2$ over this smaller set of sample path specified by interval $I$. For $T=8000$, the values of $err(n,T,I)$ with specific choices of $I$ are plotted below. If $I=[0,100]$, $err(n,T,I)$ is equivalent to $err(n,T)$ investigated in Section~\ref{section:experiment}. 

\begin{figure}[ht]
    \centering
    \includegraphics[width=450pt]{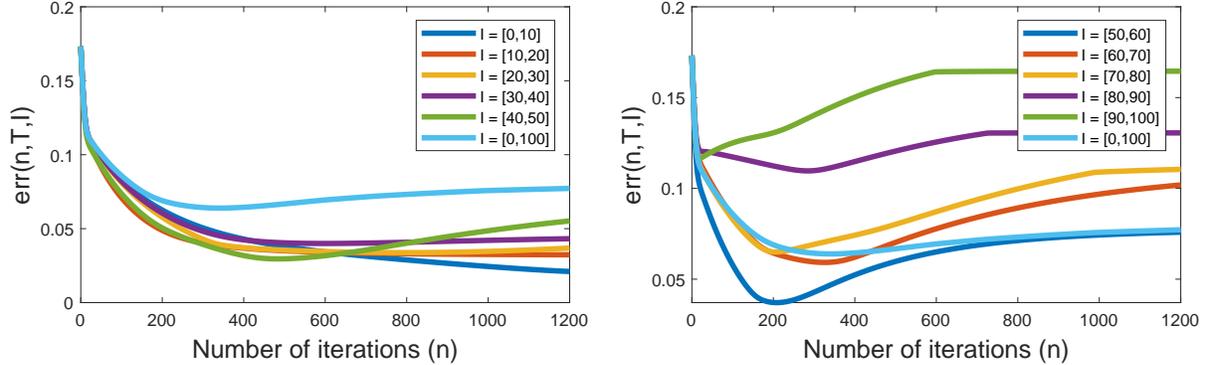}
    \caption[]{Plots of $err(n,T,I)$ with varying $n$ and $I$; $T$ is fixed as 8000.}\label{figure:exp_1_supp}
\end{figure}

Figure~\ref{figure:exp_1_supp} suggests that with probability around 0.6, our algorithm with the particular choice of $T$ and $\theta^{(0)}$ achieves decent performance, decreasing the original estimation error by at least a half. A worth-noting observation is that, for all the choices of $I$ (including $I=[90,100]$ representing the \emph{failed} sample paths), $err(n,T,I)$ roughly follows the same exponential decay in the early stage of the algorithm (roughly the first 10 iterations). The same behavior can be observed for $T=5000$ and $T=10000$ as well. It is not clear whether this behavior is general or specific to our numerical example. Detailed investigation is required in future work. 

\subsection[]{Varying $\mu$}

\begin{wrapfigure}{R}{0.45\textwidth}
\centering
\includegraphics[width=200pt]{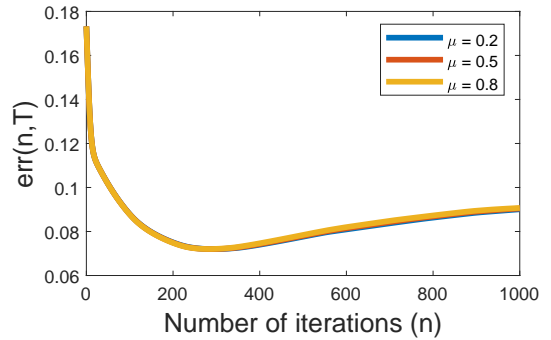}
\caption{Plots of $err(n,T)$ with varying $n$ and $\mu$; $T$ is fixed to 5000.}\label{figure:exp_3}
\end{wrapfigure}
In this subsection we investigate the effect of $\mu$ on the performance of Algorithm~\ref{algorithm}. Intuitively, from the uniform forgetting analysis throughout this paper, it is reasonable to expect that at each iteration, the effect of $\mu$ on the parameter update is negligible if $T$ is large. However, such a negligible error could accumulate if $N$ is large. The effect of $\mu$ on the final parameter estimate is not clear without experiments. 

We use the same observation sequences as in Section~\ref{section:experiment}. $T$ is fixed as 5000. $\theta^{(0)}=(0.5,0.6,0.7)$, and the parameter space for all the three parameters remains the same as $[0.1,0.9]$. For all $s_1$, $\mu(o_0=\textrm{RIGHTEND}|s_1)\in\{0.2,0.5,0.8\}$. The performance of the algorithm is evaluated by $err(n,T)$ defined in Section~\ref{section:experiment}. The result is presented in Figure~\ref{figure:exp_3}, which shows that indeed, the effect of $\mu$ on the final performance of the algorithm is negligible. For $n=1000$, $\max_{\mu}err(n,T)$ is $0.7\%$ higher than $\min_{\mu}err(n,T)$. 

\subsection{Random initialization}

Up to this point, all the empirical results use the same initial parameter estimate $\theta^{(0)}=(0.5,0.6,0.7)$ on all the 50 sample paths. In this subsection, we evaluate the effect of the initial estimation error $\{\theta^{(0)}-\theta^*\}_2$ on the performance of the algorithm, by applying random $\theta^{(0)}$. Such a randomization is not considered in Section~\ref{section:experiment} since more explanations are required. 

In this experiment, we use the same observation sequences as in Section~\ref{section:experiment}. $T$ is fixed to 8000. For all $s_1$, $\mu(o_0=\textrm{RIGHTEND}|s_1)=1$. The parameter space for all the three parameters remains the same as $[0.1,0.9]$. For each observation sequence, we first generate three independent samples $x_{hi}$, $x_{lo}$ and $x_b$ uniformly from the interval $[0,1]$. Then, $\theta^{(0)}$ is generated as follows: with a scale factor $w\in\{0.1,0.2,0.3\}$, let $\theta^{(0)}_{hi}=\theta^*_{hi}-w x_{hi}$, $\theta^{(0)}_{lo}=\theta^*_{lo}-w x_{lo}$ and $\theta^{(0)}_{b}=\theta^*_{b}-w x_{b}$. As a result, $\theta^{(0)}$ dependent on $w$ is different for different observation sequences. The choices of $\theta^{(0)}$ are not symmetrical with respect to $\theta^*$ due to the restriction of the bounded parameter space. For the parameter estimates obtained from the computation, $err(n,T)$ is defined as in Section~\ref{section:experiment}. The result is shown in Figure~\ref{figure:exp_2}. 

\begin{figure}[ht]
    \centering
    \includegraphics[width=450pt]{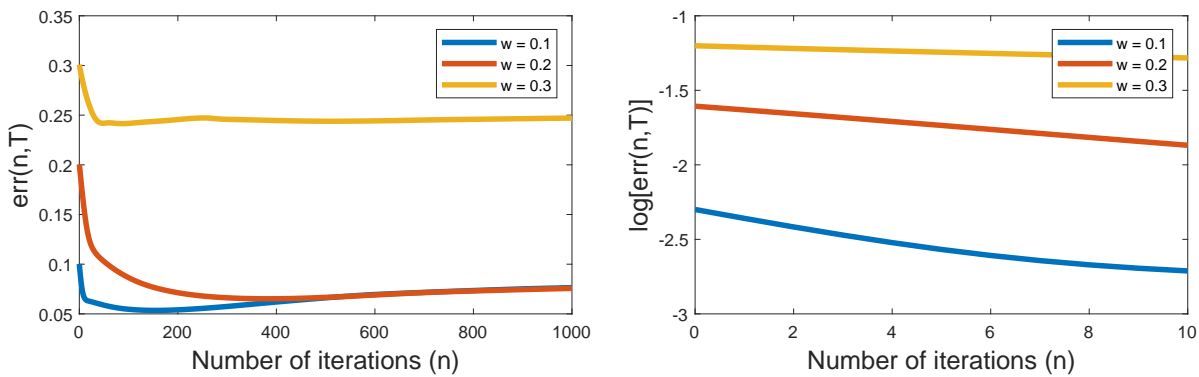}
    \caption[]{Plots of $err(n,T)$ with varying $n$ and $\theta^{(0)}$; $T$ is fixed to 8000.}\label{figure:exp_2}
\end{figure}

From Figure~\ref{figure:exp_2}, the curves corresponding to $w=0.1$ and $w=0.2$ qualitatively match the performance guarantee in Theorem~\ref{thm:perturbed}. The algorithm achieves decent performance when $\{\theta^{(0)}-\theta^*\}_2$ is intermediate (the case of $w=0.2$), where the average estimation error $err(n,T)$ is reduced by at least a half. If $\{\theta^{(0)}-\theta^*\}_2$ is small (the case of $w=0.1$), the parameter estimates cannot improve much from $\theta^{(0)}$. If $\{\theta^{(0)}-\theta^*\}_2$ is large (the case of $w=0.3$), the algorithm cannot converge to the vicinity of the true parameter, which is consistent with our local convergence analysis. 

\end{document}